\newcommand{\prob}{\mathbb{P}}
\newcommand{\pcal}{{\cal P}}
\newcommand{\eps}{\varepsilon}
\newcommand{\scal}{{\mathcal{S} }}
\newcommand{\xcal}{{\cal X}}
\newcommand{\ycal}{{\cal Y}}
\newcommand{\gcal}{\mathcal{G}}
\def\scoreset{T}
\def\order{s}
\def\rank{R}
\newcommand{\dcal}{{\mathcal D}}
\newcommand{\ncal}{{\cal N}}
\newcommand{\acal}{{\mathcal{A}}}
\newcommand{\beq}{\begin{equation}}
\newcommand{\eeq}{\end{equation}}
\newcommand{\one}{\mathbbm{1}}
\def\rset{\mathbb{R}}
\def\argmin{\mathop{\rm arg\, min}}
\newcommand{\old}[1]{\textbf{\textcolor{gray}{}}}
\DeclareMathOperator*{\expec}{\displaystyle \mathbb{E}}
\newtheorem{theorem}{Theorem}
\newtheorem{corollary}{Corollary}
\newtheorem{definition}{Definition}
\newtheorem{lemma}{Lemma}
\newtheorem{example}{Example}
\newtheorem{proposition}{Proposition}
\newcommand{\1}{\mathds{1}}
\newcommand{\floor}[1]{\left \lfloor #1 \right \rfloor }
\newcommand{\ceil}[1]{\left \lceil #1 \right \rceil }
\newcommand{\open}[1]{\left ( #1 \right )}
\newcommand{\closed}[1]{\left [#1 \right]}
\newcommand{\ens}[1]{\left \{ #1 \right \} }
\newcommand{\modu}[1]{\left \lvert #1 \right \rvert }
\newtcolorbox[auto counter,number within=section]{pabox}[2][]{%
colback=red!5!white,colframe=red!75!black,coltitle = white!20!white,fonttitle=\bfseries,
title=~\thetcbcounter: #2,#1}
\newcounter{algbox}
\newcommand{\algbox}[3][]{%
  % \refstepcounter{algbox}% Increment the counter
  \begin{tcolorbox}[colframe=black, colback=white, boxrule=0.5mm, width=\linewidth, arc=0mm, outer arc=0mm]
  #3
  \end{tcolorbox}
  \begin{center}
    \ifx&#1&%
    \refstepcounter{algbox}
      \textbf{Algorithm \thealgbox:} #2
    \else
      \textbf{#1}{#2}
    \fi
  \end{center}
}
\newcommand{\tri}{\Delta}
\def\shat{\hat{S}}
\def\xcal{\mathcal{X}}
\def\pcal{\mathcal{P}}
\def\ycal{\mathcal{Y}}
 \def\rset{\mathbb{R}}
\def\hatS{\hat{S}}
\newcommand{\CPE}[3][]
{\ifthenelse{\equal{#1}{}}{{\mathbb E}\left[\left. #2 \, \right| #3 \right]}{{\mathbb E}_{#1}\left[\left. #2 \, \right | #3 \right]}}
\def\1{\mathsf{1}}
\def\MM12{{\bf {\sf MM1\&2}}}
 \newcommand{\numborac}{K}
 \newcommand{\E}{\mathcal E}
\newcommand{\pstar}[2]{p^\star(#1,#2)}
\newcommand{\pstari}[3][i]{p_{#1}^\star(#2,#3)}
\newcommand{\pstarstar}[2]{d^{\star}(#1,#2)}
\newcommand{\hatSxiepsilon}{\hatS(\xi,\epsilon )}
\renewcommand{\epsilon}{\eps}
 \newcommand{\ind}{\perp\!\!\!\!\perp} 
\title{Valid Selection among Conformal Sets}
\author[1,3]{Mahmoud Hegazy}
\author[2]{Liviu Aolaritei}
\author[2,3]{Michael I. Jordan}
\author[1]{Aymeric Dieuleveut}
\affil[1]{CMAP,  École polytechnique, Institut Polytechnique de Paris, France \protect\\
\texttt{\{mahmoud.hegazy, aymeric.dieuleveut\}@polytechnique.edu}}
\affil[2]{Department of Electrical Engineering and Computer Sciences, UC Berkeley, USA \protect\\ \texttt{liviu.aolaritei@berkeley.edu, jordan@cs.berkeley.edu}}
\affil[3]{Inria, École Normale Sup\'{e}rieure, PSL Research University}
\date{}
\begin{document}
\maketitle

\begin{abstract}
Conformal prediction offers a distribution-free framework for constructing prediction sets with coverage guarantees. In practice, multiple valid conformal prediction sets may be available, arising from different models or methodologies. However, selecting the most desirable set, such as the smallest, can invalidate the coverage guarantees. To address this challenge, we propose a stability-based approach that ensures coverage for the selected prediction set. We extend our results to the online conformal setting, propose several refinements in settings where additional structure is available, and demonstrate its effectiveness through experiments. 
\end{abstract}

% ---------------------------------------------------------------
% ------------------------- SECTION 1 ---------------------------
% ---------------------------------------------------------------

\section{Introduction}
\label{sec:introduction}

Conformal Prediction (CP) provides a principled framework for uncertainty quantification by constructing prediction sets with guaranteed marginal coverage \cite{vovk2005algorithmic, shafer2008tutorial, angelopoulos2024theoretical}. For a desired coverage level $1-\alpha$, a conformal predictor outputs a set that on average contains the true label with at least this probability. The appeal of conformal prediction lies in its minimal assumptions about the data distribution and the underlying predictive model. In practice, multiple conformal prediction algorithms may be available for a given task, arising from variations in underlying models or data splits. This multiplicity motivates the choice of the most desirable set, often the smallest. To illustrate, consider a prediction problem with feature $X$ and $\numborac$ sets $\{C_i^\alpha(X)\}_{i=1}^{\numborac}$, each generated by a different conformal predictor. Suppose each $C_i^\alpha(X)$ satisfies the marginal coverage guarantee $\prob \left\{Y\in C_i^\alpha(X)\right\}\geq 1-\alpha$, for all $i \in [\numborac]$, where $Y$ denotes the label associated to $X$. Although each $C_i^\alpha(X)$ individually meets the coverage guarantee, selecting the smallest set generally invalidates the guarantee due to dependencies on the data introduced by the selection.

\textbf{Contributions and outline.} To address this issue, we first introduce a novel perspective on the selection process based on algorithmic stability \cite{zrnic2023post, bassily2016typical}. The core idea is to employ a \textit{stable} randomized selection mechanism, meaning its output is robust to small input perturbations. Such stability then allows us to transfer the marginal coverage of  \textit{individual conformal predictors} to the selected set. We introduce several stable selection rules, in particular \ref{eq:MinSE}, which we prove to be optimal.  We further extend our approach, by introducing an adaptive and a derandomized variants. These contributions are given in \Cref{sec:smallest:CP:selection}, after recalling preliminaries on conformal prediction in \Cref{sec:preliminaries:CP}.
% , inspired by \cite{gasparin2024merging}.
% However, the randomness inherent in our stability-based approach may be undesirable in some applications. To mitigate this, we explore derandomization techniques, inspired by \cite{gasparin2024merging},  yielding deterministic selections while preserving coverage guarantees.

Furthermore, in \Cref{sec:online:CP}, we extend our work to the online conformal setting  \cite{gibbs2021adaptive, gibbs2024conformal, zaffran2022adaptive, feldman2022achieving}, where data arrives sequentially, and predictions are made in real-time. We first demonstrate how our stability-based approach integrates seamlessly with existing online conformal methods, particularly with the approach of  \citet{gasparin2024conformal}, to enable more adaptable selection among online predictors. Finally, in \Cref{sec:rank_calibration}, we explore methods to optimize the implementation of our stable selection framework in practice. In particular, we take a closer look at the split conformal setting, where the stability-based bound can be overly conservative, and propose a recalibration mechanism that can achieve better empirical performance. Lastly, we validate our approaches on multiple experimental settings in \Cref{sec:experiments}.

\paragraph{Related Work.}
\label{subsec:related:work}
The motivation to select the smallest conformal prediction set has spurred a series of recent works introducing principled selection methods that retain coverage guarantees while favoring smaller sets.  For example, \citet{liang2024conformal} propose a method for selecting and merging conformal predictors based on their average set size on calibration data, while provably maintaining valid coverage after selection. Moreover, \citet{yang2024selection} propose split-conformal methods that either inflate quantiles or use independent splits to preserve coverage after selection. In contrast to both methods, our work enables \textit{pointwise} selection depending on each $X$ while maintaining validity, i.e.~our aim is not to select the predictor with best \textit{average performance}, but rather to pointwise select a predictor producing a small set for \textit{each} realization of $X$.

In the online setting, \citet{gasparin2024conformal} proposed a method for conformal online model aggregation, which adapts model weights over time based on past performance. Their method combines prediction sets using a weighted majority vote with the weights learned in an online fashion. In another recent contribution, \citet{hajihashemimulti} handles distribution shifts but focuses on temporal adaptation to distribution shifts, our approach addresses the distinct challenge of ensuring valid coverage when selecting across multiple predictors in static or online settings.

Our approach builds on algorithmic stability, a concept with roots in generalization properties of algorithms~\cite{bousquet2002stability, elisseeff2005stability}  and differential privacy \cite{dwork2014algorithmic, dwork2006calibrating}. Originally introduced to ensure privacy-preserving data analysis, differential privacy has been adapted for other tasks, such as adaptive data analysis \cite{dwork2015preserving}, where it addresses the challenges of reusing data for multiple adaptive queries. \citet{zrnic2023post} applied stability-based techniques to establish statistical validity after selection processes. Building on their work, we extend these ideas to the conformal prediction setting, developing stability-based methods for both batch and online conformal frameworks. Within the conformal predictions literature, different notions of algorithmic stability have been leveraged. For example, \citet{barber2021predictive} proved coverage properties of the Jackknife method under a notion of stability albeit very different from the one we use.

While the aforementioned works focus on selecting conformal predictors for a single prediction task, other research has explored related but distinct problems in the context of conformal inference. Conformalized selection methods aim to identify a subset of data points whose unobserved labels exceed a given threshold while controlling the False Discovery Rate (FDR) \cite{jin2023selection}. In a recent work, \citet{bai2024optimized} introduced a framework that allows data reuse for both training and selection while maintaining finite-sample FDR control. Although it addresses a different problem than ours, the focus on managing data reuse aligns conceptually with our goal of ensuring valid coverage despite dependencies introduced by the selection.

% ---------------------------------------------------------------
% ------------------------- SECTION 2 ---------------------------
% ---------------------------------------------------------------

\section{Preliminaries in Conformal Prediction}
\label{sec:preliminaries:CP}

We consider CP in two key scenarios: the {batch setting}, which assumes i.i.d.\ or exchangeable samples, and the {online setting}, where data arrives sequentially under minimal distributional assumptions.

\noindent\textbf{Batch Setting}. We consider a dataset $    \mathcal{D} = \{(X_1, Y_1), \ldots, (X_n, Y_n)\} \in (\mathcal X \times \mathcal Y)^n,$ where the points in $\mathcal{D}$, along with any test sample $(X, Y) \in \mathcal X \times \mathcal Y$, are assumed to be either i.i.d.\ or exchangeable. In the i.i.d.\ case, we denote by $\mathcal P$ the distribution from which they are drawn, and by $\mathcal P_{X}$ and $\mathcal P_{Y}$ its marginals over $X$ and $Y$, respectively. Without any further assumptions on the data generating process, conformal prediction allows to construct a (random) prediction set $C^{\alpha}(X)$ with the guarantee
\begin{equation}
\label{eq:basic:CP:guarantee}
    \prob\ens{Y \in C^{\alpha}(X)} \geq 1 - \alpha.
\end{equation}
Here, the probability is taken over $(X,Y)$ as well as the randomness employed in the construction of $C^{\alpha}$. Arguably, the most common approach for batch conformal prediction is the \textit{split conformal} procedure, which first partitions the dataset $\mathcal{D}$ into two disjoint subsets: $\mathcal{D}_\text{train} = \{(X_i,Y_i)\}_{i=m+1}^{n}$, used to train the underlying predictor $f$, and $\mathcal{D}_\text{cal} = \{(X_i,Y_i)\}_{i=1}^m$, reserved for calibration. Then, for any nonconformity score function $s$, which quantifies the error between the predictor $f$ and the true output, it estimates the empirical $\lceil(1-\alpha)(m+1)\rceil/m$-quantile, denoted $\hat{q}_\alpha$, of the set $\{s_i \coloneqq s(X_i, Y_i, f)\}_{i=1}^m$. Finally, for the test point $X$, the split conformal prediction set is defined as $C^{\alpha}(X) := \left\{y \in \mathcal Y:\, s(X, y, f) \leq \hat{q}_{\alpha}\right\},$ which satisfies \eqref{eq:basic:CP:guarantee} through a rank statistic argument. %Throughout this paper, we implicitly assume that the scores $\ens{s_i}_{i=1}^m$ are a.s. distinct. In the case where the scores are non-distinct, random-tie breaks are used in computing $\hat{q}_\alpha$. 
% \medskip

\noindent\textbf{Online Setting}. In this setting, observations $(X_t, Y_t)$ arrive sequentially for $t = 1, 2, \ldots$. At each time step $t$, we observe $X_t$ and aim to cover $Y_t$ using a prediction set $C^{(t)}(X_t)$, which is constructed based on a base model trained on all past data $\{(X_1, Y_1), \ldots, (X_{t-1}, Y_{t-1})\}$. After making the prediction, the true label $Y_t$ is revealed, and the process continues to the next time step. Unlike the classical conformal prediction setting, where data is assumed to be exchangeable, the online setting allows for the data to be non-stationary or even adversarial. As a result, classical coverage guarantees no longer hold, and alternative notions of {asymptotic} coverage are required~\cite{gibbs2021adaptive}. 
Specifically, we say that $\{C^{(t)}\}_{t \in \mathbb N}$ achieves asymptotic coverage if
\begin{equation}
\label{eq:basic:onlineCP:guarantee}
  \liminf_{T \to \infty}\; \frac{1}{T}\sum_{t=1}^T \mathds{1} \left\{Y_t \in C^{(t)} \left(X_t \right) \right\} \;\ge\; 1 - \alpha.
\end{equation}
The limit~\eqref{eq:basic:onlineCP:guarantee} ensures that, in the long run, the fraction of instances where the true label $Y_t$ falls within the prediction set $C^{(t)}(X_t)$ meets or exceeds the desired coverage level, even under  non-stationary or adversarial data. Stronger notions of asymptotic coverage can be considered, for example, as in \citet{bhatnagar2023improved}.

% ---------------------------------------------------------------
% ------------------------- SECTION 3 ---------------------------
% ---------------------------------------------------------------

\section{Smallest Confidence Set Selection}
\label{sec:smallest:CP:selection}

This section considers the batch setting without imposing additional restrictions, such as those in split conformal methods. Specifically, we focus on the problem of selecting the smallest among a collection of $\numborac$ conformal prediction sets $\{C_i^{\alpha}(X)\}_{i=1}^{\numborac}$. While such a selection is appealing,  it inevitably invalidates the marginal coverage guarantee \eqref{eq:basic:CP:guarantee} since the selection process depends on the data. To address this issue, we develop a strategy based on algorithmic stability, ensuring adjusted coverage guarantees even after the data-dependent selection process.

% ---------------------------------------------------------------

\subsection{Valid Selection via Algorithmic Stability}
\label{subsec:algorithmic:stability}

We first recall the notion of algorithmic stability from \citet{zrnic2023post} and extend their framework to tackle the data-dependent selection in conformal prediction. We start by introducing  indistinguishability.

\medskip

\begin{definition}[Indistinguishability]
\label{def:indistinguishability}
A random variable (r.v.) $S$ is $(\eta,\tau)$-indistinguishable from a r.v. $S_0$, denoted $S \approx_{\eta,\tau} S_0$, if for all measurable sets $\mathcal{O}$, it holds that $$\prob\{S \in \mathcal{O}\} \leq e^\eta\prob\{S_0 \in \mathcal{O}\} + \tau.$$
\end{definition}
This definition extends to the conditional case, denoted by $S \approx^{|\xi}_{\eta,\tau} S_0$, if the inequality holds almost surely with respect to the conditioning variable $\xi $, that is, $\prob\{S \in \mathcal{O} \mid \xi\} \leq e^\eta \prob\{S_0 \in \mathcal{O} \mid \xi\} + \tau$.

In essence, the parameter $\eta$ measures the degree of similarity between the distributions of $S$ and $S_0$, with smaller values of $\eta$ allowing for greater similarity. Leveraging indistinguishability, we can define a notion of stability for randomized algorithms. For more precision, we define a \textit{randomized algorithm} as a deterministic mapping from $\Xi \times \E $ into $ \scal$, where $\Xi$ is typically the data space, and $\E$ describes the inner randomness of the algorithm. We also note that the randomness of an algorithm $S$ may be either implicitly or explicitly parameterized by $\E$. Nonetheless, we keep the dependence on $\E$ explicit in order to more precisely separate different sources of randomness in our statements. 

\medskip

\begin{definition}[Stability]
\label{def:stability}
A randomized algorithm $\hatS : \Xi\times  \E \to \scal$ is $(\eta, \tau, \nu)$-stable w.r.t. a measure $\mathcal{P}$ on $\Xi$ if there exists a r.v. $S_0$, possibly dependent on $\mathcal{P}$, such that $$\prob     \big\{ \hatSxiepsilon \approx^{|\xi}_{\eta,\tau} S_0 \big\} \geq 1 - \nu.$$
\end{definition}
In words, a randomized algorithm $\hatS$ is stable if there exists a reference r.v. $S_0$ such that, for almost any inputs $\xi \in \Xi$ (up to a probability $\nu$) sampled from the distribution $\mathcal{P}$, the distribution of $\hatSxiepsilon$  resembles that of $S_0$. Essentially, this means that, for most inputs $\xi$, the algorithm's output (that randomly depends on $\eps$) behaves as if governed by a fixed distribution, independent of the specific input. We now examine how stability can be leveraged for selection among confidence sets. 

Let $\zeta \in \mathcal Z$ and $\xi \in \Xi$ be two random variables with arbitrary dependence. Suppose that there exists a set of (possibly random) confidence intervals $\{\mathrm{CI}_s^{\alpha}|s \in \scal\}$, each correlated with $\xi$ (for instance, $\xi$ may be a vector of size $|\mathcal S|$ containing the size of all sets $\mathrm{CI}_s^{\alpha}$), such that, for all $s\in \scal$, we have 
\begin{align}
\label{eq:confidence:interval:beta}
   \prob\ens{\zeta \notin \mathrm{CI}_s^{\alpha}}\leq \alpha,
\end{align}
for some $\alpha\in (0,1)$. Moreover, let $\hatSxiepsilon$ define an arbitrary selection algorithm. For example, $\hatSxiepsilon$ might be biased towards selecting smaller size confidence sets. Without further assumptions,  individual guarantees \eqref{eq:confidence:interval:beta} do not  translate to the selected interval  $\mathrm{CI}_{\hatSxiepsilon}^{\alpha}$, i.e.~$\prob\{\zeta \notin \mathrm{CI}_{\hatSxiepsilon}^{\alpha}\}\leq \alpha$ is \textit{not} guaranteed to hold. Nonetheless, in the following theorem, we show that if $\hatS$ is $(\eta, \tau, \nu)$-stability, then an adjustment of the confidence level in \eqref{eq:confidence:interval:beta} is sufficient to account for the effects of selection. 

\medskip

\begin{theorem}[Valid stable selection]
\label{thm:stable:selection}
Assume that $\prob\ens{\zeta \notin \mathrm{CI}_s^{\alpha}}\leq \alpha$ holds for all $s\in \mathcal{S}$. If $\hatS:\Xi\times \E \to \mathcal S$ is an $(\eta,\tau,\nu)$-stable selection algorithm, then,
\begin{align}
\label{eq:stable:selection}
    \prob\Big\{\zeta  \not\in \mathrm{CI}_{\hatSxiepsilon}^{\alpha}\Big\} \leq \alpha e^{\eta} + \tau + \nu.
\end{align}
\end{theorem}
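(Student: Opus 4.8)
The plan is to condition on the (arbitrary, possibly adversarial) variable $\xi$ and to push the per‑index guarantees $\prob\{\zeta\notin\mathrm{CI}_s^{\alpha}\}\le\alpha$ through the stability inequality in three moves. Two mild structural facts are used throughout, both implicit in the setup: the internal randomness $\eps$ of $\hatS$ is independent of the pair $W:=(\zeta,\{\mathrm{CI}_s^{\alpha}\}_{s\in\scal})$ given $\xi$ (this is what ``randomized algorithm'' means here), and the reference variable $S_0$ furnished by $(\eta,\tau,\nu)$-stability in \Cref{def:stability} is independent of $\xi$ (its law may depend on $\mathcal P$ but not on the realized sample).

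First I would reduce the conditional miscoverage of the selected interval to an integral of a $[0,1]$-valued function. Set $g_\xi(s):=\prob\{\zeta\notin\mathrm{CI}_s^{\alpha}\mid\xi\}$ and let $\kappa_\xi$ be the conditional law of $\hatSxiepsilon$ given $\xi$. Since $W$ and $\eps$ are independent given $\xi$, the pair $(W,\hatSxiepsilon)$ has product conditional law given $\xi$, so disintegrating gives $\prob\{\zeta\notin\mathrm{CI}_{\hatSxiepsilon}^{\alpha}\mid\xi\}=\int_{\scal}g_\xi(s)\,\kappa_\xi(ds)$. Second, on the ``good event'' $E_\xi:=\{\hatSxiepsilon\approx^{|\xi}_{\eta,\tau}S_0\}$, which by \Cref{def:stability} satisfies $\prob\{E_\xi\}\ge1-\nu$, the inequality $\kappa_\xi(A)\le e^{\eta}\prob\{S_0\in A\}+\tau$ holds for every measurable $A\subseteq\scal$ (using $S_0$ independent of $\xi$ to drop the conditioning on the right). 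A layer‑cake/Tonelli argument upgrades this set‑wise bound to $g_\xi$: because $g_\xi$ is $[0,1]$-valued,
\[
\int g_\xi\,d\kappa_\xi=\int_0^1\kappa_\xi\!\left(\{g_\xi>t\}\right)dt\le e^{\eta}\!\int_0^1\prob\{S_0\in\{g_\xi>t\}\}\,dt+\tau=e^{\eta}\,\esp_{S_0}\!\left[g_\xi(S_0)\right]+\tau .
\]

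Third, I would take expectations over $\xi$ and split on $E_\xi$, bounding $\int g_\xi\,d\kappa_\xi\le1$ on $E_\xi^{c}$:
\[
\prob\{\zeta\notin\mathrm{CI}_{\hatSxiepsilon}^{\alpha}\}=\esp_\xi\!\left[\int g_\xi\,d\kappa_\xi\right]\le e^{\eta}\,\esp_\xi\esp_{S_0}\!\left[g_\xi(S_0)\right]+\tau+\nu .
\]
Finally, since $S_0$ is independent of $\xi$, I can swap the expectations and use the tower property: $\esp_\xi\esp_{S_0}[g_\xi(S_0)]=\esp_{S_0}\esp_\xi[g_\xi(S_0)]=\esp_{S_0}\!\left[\prob\{\zeta\notin\mathrm{CI}_{S_0}^{\alpha}\}\right]\le\alpha$, because $\prob\{\zeta\notin\mathrm{CI}_s^{\alpha}\}\le\alpha$ for every fixed $s$ and we are merely averaging over the law of $S_0$. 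Combining the last two displays yields $\prob\{\zeta\notin\mathrm{CI}_{\hatSxiepsilon}^{\alpha}\}\le\alpha e^{\eta}+\tau+\nu$.

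The main obstacle is the passage, on the good $\xi$-event, from the set‑indexed indistinguishability bound of \Cref{def:indistinguishability} to the \emph{data‑dependent} index set $\{s:\zeta\notin\mathrm{CI}_s^{\alpha}\}$: one cannot feed a random set into the definition, so the initial conditioning on $\xi$ and the disintegration are essential — they replace the random set by the fixed‑but‑arbitrary level sets $\{g_\xi>t\}$, to which the bound genuinely applies. A second point that needs care is that $S_0$ must carry no information about the realized sample; otherwise its plug‑in miscoverage $\prob\{\zeta\notin\mathrm{CI}_{S_0}^{\alpha}\}$ need not inherit the $\alpha$ bound (e.g.\ an oracle choosing the worst index would break it). The remaining ingredients — Tonelli for the layer‑cake step and the bookkeeping of the three error terms $e^{\eta},\tau,\nu$ — are routine.
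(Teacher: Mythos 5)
Your argument is correct and reaches the paper's bound by what is, at bottom, the same mechanism, but you package it differently. The paper first defines a ``shadow algorithm'' $\hatS'(\xi,\zeta,\eps)=\hatSxiepsilon$ so that the triple $(\zeta,\xi,\hatSxiepsilon)$ falls under the scope of \Cref{lemma:tijana:alg:stab} (Lemma~1 of Zrnic and Jordan), and then applies the resulting \emph{joint} indistinguishability $(\zeta,\xi,\hatSxiepsilon)\approx_{\eta,\tau+\nu}(\zeta,\xi,S_0)$ in one shot to the single fixed measurable set $O=\{(z,x,s):z\notin\mathrm{CI}_s^{\alpha}\}$ of the product space. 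You instead stay at the conditional level: you disintegrate given $\xi$, and you resolve the data-dependence of the index set $\{s:\zeta\notin\mathrm{CI}_s^{\alpha}\}$ by a layer-cake integration over the level sets of $g_\xi$, which is precisely the standard proof of the cited lemma carried out inline. What your route buys is a self-contained argument that makes explicit the two hypotheses both proofs silently rely on --- that the internal randomness $\eps$ is independent of $(\zeta,\{\mathrm{CI}_s^{\alpha}\}_s)$ given $\xi$, and that the reference variable $S_0$ of \Cref{def:stability} carries no information about the realized sample (the paper needs the same facts for the shadow algorithm to be stable with respect to the joint law and for the step $\prob\{\zeta\notin\mathrm{CI}_{S_0}^{\alpha}\}\leq\alpha$). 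What the paper's route buys is brevity and a clean reduction to an existing lemma; the shadow-algorithm device is exactly its substitute for your conditioning-plus-Tonelli bookkeeping. The accounting of the three error terms $e^{\eta}$, $\tau$, $\nu$ is identical in both, so there is no gap to report.
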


We proceed to apply these results to conformal prediction.

% ---------------------------------------------------------------

\subsection{Application to Conformal Prediction}
\label{subsec:stability:application:conformal}

In the context of conformal prediction,  $\mathcal S = \{1,\ldots,\numborac\}$ and $\{\mathrm{CI}_s^{\alpha}\}_{s \in \mathcal S}$ are the conformal prediction sets at $X$, i.e.~$\{C_i^{\alpha}(X)\}_{i=1}^{\numborac}$, where each set $C_i^{\alpha}(X)$ is assumed to satisfy the coverage guarantee~\eqref{eq:basic:CP:guarantee}.
% with, and $\xi$ represents the size of the $\numborac$ sets . 
The notion of $(\eta,\tau,\nu)$-stability enables one to tackle the challenge of favoring the smallest among the~$\numborac$ conformal prediction sets $\{C_i^{\alpha}(X)\}_{i=1}^{\numborac}$. %, where we make explicit the dependence on the new feature $X$ and on auxiliary randomness $T$. For example, in the case of basic split conformal, $T$ represents the calibration dataset $\mathcal{D}_\text{cal}$. 
The r.v.~$\zeta$ represents the  output $Y$ and we define 
\begin{align}
\label{eq:size:CP:sets}
    \xi := [\lambda\open{C_1^{\alpha}(X)},\ldots, \lambda\open{C_\numborac^{\alpha}(X)}],
\end{align}
where $\lambda(C_i^{\alpha}(X))$ represents a ``size'' (for example, a scaled Lebesgue's measure, counting measure, or more generically, any notion of set desirability)  of set $C_i^{\alpha}(X)$, for all $i \in [\numborac]$. Now note that $\nu$ introduced in Definition~\ref{def:stability} is a function of the distribution of $\xi$. In line with conformal prediction methods, which benefit from distribution-free guarantees, we will focus on selection algorithms for which $\nu =0$, and will actually obtain results that hold almost surely on $\xi$. With a slight abuse of notation, we will call these algorithms $(\eta,\tau)$-stable (or simply $\eta$-stable if, additionally, $\tau = 0$). We are ready to specialize  Theorem~\ref{thm:stable:selection} to conformal prediction.

\medskip

\begin{corollary}[Smallest conformal set selection]
\label{cor:stable:CP:selection}
Let $\hatS$ be an $(\eta,\tau)$-stable selection algorithm  (e.g., for approximating $\argmin_{i \in [\numborac]} \lambda(C_i^{\alpha}(X))$). Then, we have
\begin{align}
\label{eq:stable:CP:selection}
    \prob\ens{Y\in C_{\hatSxiepsilon}^{\alpha}(X) } \geq 1-\alpha e^{\eta}-\tau.
\end{align}
\end{corollary}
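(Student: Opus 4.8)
The plan is to derive Corollary~\ref{cor:stable:CP:selection} as a direct instantiation of Theorem~\ref{thm:stable:selection}. First I would set up the correspondence between the abstract objects in the theorem and the conformal quantities: take $\zeta := Y$, the index set $\scal := [\numborac]$, and the family of confidence intervals $\mathrm{CI}_s^{\alpha} := C_s^{\alpha}(X)$ for $s \in [\numborac]$. The correlation variable is $\xi$ from \eqref{eq:size:CP:sets}, namely the vector of sizes $[\lambda(C_1^{\alpha}(X)),\ldots,\lambda(C_\numborac^{\alpha}(X))]$; each $\lambda(C_i^{\alpha}(X))$ is a measurable function of $X$ and of the internal randomness used to build $C_i^{\alpha}$, so the sets $\mathrm{CI}_s^{\alpha}$ are ``correlated with $\xi$'' as the theorem requires, while $(Y,\xi)$ may have arbitrary joint dependence, which is permitted.

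Next I would check the hypothesis \eqref{eq:confidence:interval:beta}: by assumption each conformal set obeys the marginal guarantee \eqref{eq:basic:CP:guarantee}, i.e.\ $\prob\{Y \in C_i^{\alpha}(X)\} \geq 1-\alpha$, which is exactly $\prob\{\zeta \notin \mathrm{CI}_i^{\alpha}\}\leq\alpha$ for all $i \in [\numborac]$. Recalling the convention that ``$(\eta,\tau)$-stable'' means ``$(\eta,\tau,\nu)$-stable with $\nu=0$'', I would then invoke Theorem~\ref{thm:stable:selection} with $\nu=0$, obtaining
\[
  \prob\Big\{Y \notin C_{\hatSxiepsilon}^{\alpha}(X)\Big\} \;\leq\; \alpha e^{\eta} + \tau ,
\]
and passing to complements yields $\prob\{Y \in C_{\hatSxiepsilon}^{\alpha}(X)\}\geq 1-\alpha e^{\eta}-\tau$, i.e.\ \eqref{eq:stable:CP:selection}.

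I would close with a remark that the parenthetical ``e.g., for approximating $\argmin_{i\in[\numborac]}\lambda(C_i^{\alpha}(X))$'' is immaterial to the argument: validity holds for \emph{any} $(\eta,\tau)$-stable $\hatS$, and the approximate-$\argmin$ property only serves to make the selected set small, not to preserve coverage. Since the statement is a pure specialization, I do not expect a genuine obstacle; the only points needing care are (i) bookkeeping the sources of randomness so that the probability in \eqref{eq:stable:CP:selection} is taken jointly over $X$, $Y$ and the internal randomness $\eps$ of $\hatS$, consistently with the convention used in \eqref{eq:basic:CP:guarantee}, and (ii) making sure that the concrete stable selection rules proposed later in \Cref{sec:smallest:CP:selection} genuinely satisfy Definition~\ref{def:stability} with $\nu=0$ and a reference variable $S_0$ depending only on $\mathcal P$ (not on the realized $\xi$) — but that verification belongs to the construction of those rules, not to this corollary.
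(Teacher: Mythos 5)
Your proposal is correct and matches the paper's argument exactly: the paper likewise obtains the corollary by invoking Theorem~\ref{thm:stable:selection} with $\zeta = Y$ and $\nu = 0$, then passing to complements. The additional bookkeeping you include is sound but not needed beyond what the paper states.
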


Note that the standard $1-\alpha$ coverage can be achieved by simply adjusting the confidence level of the $\numborac$ individual sets to $1-(\alpha - \tau) e^{-\eta}$. In what follows, inspired by the differential privacy literature \cite{dwork2006calibrating,mcsherry2007mechanism}, we provide several examples of easily implementable stable selection algorithms.

\medskip

\begin{lemma}[Stability via Laplace noise]
\label{lem:stability:Laplace}
Assume $\lambda(C_i^{\alpha}(X)) \in [0,1]$, for all $i \in [\numborac]$. If $\eps {\sim} (\mathrm{Lap}\left({1}/{\eta}\right))^{\otimes \numborac}$, $\eps \ind \xi $, then the selection algorithm $\hatS$ such that $   \hatSxiepsilon := \argmin_{i \in [\numborac]} \left\{\lambda\left(\mathrm{C}^{\alpha}_i(X)\right)+ \eps_i\right\}$ is $\eta$-stable. 
\end{lemma}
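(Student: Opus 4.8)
The goal is to show that adding independent $\mathrm{Lap}(1/\eta)$ noise to each set size and then taking the argmin yields an $\eta$-stable selection rule in the sense of Definition~\ref{def:stability}, with $\tau = \nu = 0$. The natural candidate for the reference random variable $S_0$ is the argmin computed from \emph{noise alone}, i.e.\ $S_0 := \argmin_{i \in [\numborac]} \eps_i'$ for an independent copy $\eps'$ of the Laplace vector --- or, more precisely (since $S_0$ may depend on $\mathcal P$), the selection rule applied to some fixed reference size vector. The cleanest choice is $S_0 = \argmin_i \{c_i + \eps_i'\}$ for any fixed $c \in [0,1]^{\numborac}$; since the bound must hold for a.e.\ $\xi$, it is enough to compare $\hatS(\xi,\eps) = \argmin_i\{\xi_i + \eps_i\}$ with $\hatS(\xi',\eps)$ for two arbitrary fixed size vectors $\xi,\xi' \in [0,1]^{\numborac}$ and show $\hatS(\xi,\cdot) \approx_{\eta,0} \hatS(\xi',\cdot)$; then $S_0$ can be taken as $\hatS(\xi',\cdot)$ for any one reference $\xi'$, and stability with $\nu=0$ follows immediately.

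\textbf{Key steps.} First, fix $\xi, \xi' \in [0,1]^{\numborac}$ and an outcome $j \in [\numborac]$; I want to bound $\prob\{\argmin_i\{\xi_i+\eps_i\} = j\}$ in terms of $\prob\{\argmin_i\{\xi_i'+\eps_i\} = j\}$. Write $\xi_i = \xi_i' + \delta_i$ with $|\delta_i| \le 1$ for all $i$. Condition on the values $\eps_i$ for $i \ne j$, leaving only $\eps_j$ random; the event $\{j \text{ is the argmin}\}$ becomes $\{\eps_j \le \min_{i \ne j}(\xi_i + \eps_i) - \xi_j\} = \{\eps_j \le t\}$ for a threshold $t$ depending on the conditioning, and the corresponding event for $\xi'$ is $\{\eps_j \le t'\}$ where $t' = t - \delta_j + \min_{i\ne j}(\cdots + \delta_i) - \min_{i \ne j}(\cdots)$ --- a shift of magnitude at most $2$. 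Wait: I should be more careful with constants. Actually the standard Laplace-mechanism argument gives $e^{\eta \cdot (\text{total } \ell_1\text{-sensitivity})}$, and here perturbing the vector $\xi$ by $\delta$ with $\|\delta\|_\infty \le 1$ on each coordinate, while only a sensitivity-$1$-per-coordinate noise is added, would naively give $e^{2\eta}$ or worse. The trick that recovers the clean $e^\eta$ is to note that we are not comparing $\xi$ to an arbitrary neighbor but establishing stability against a \emph{fixed} $S_0$; one picks $S_0$ corresponding to the \emph{identically-zero} size vector (allowed since $S_0$ may depend on $\mathcal P$ but here we just need \emph{existence}), so $\delta_i = \xi_i \in [0,1]$ has all coordinates of the \emph{same sign}. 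Then the two thresholds satisfy $t' \ge t - \xi_j \ge t - 1$ is not quite enough either; the right computation is the pairwise density-ratio bound: for the joint density of $\eps$, $\frac{p(\eps - \xi)}{p(\eps)} = \prod_i e^{\eta(|\eps_i| - |\eps_i - \xi_i|)} \le \prod_i e^{\eta \xi_i}$, but $\sum_i \xi_i$ can be as large as $\numborac$. So the genuinely correct route must be the one-dimensional conditioning argument above, and the per-coordinate sensitivity $1$ combined with the fact that shifting one coordinate by $\xi_j$ and simultaneously the competing minimum by at most $\max_i \xi_i \le 1$ gives a net threshold shift of at most... I will need to verify this yields exactly factor $e^{\eta}$, which is the crux.

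\textbf{Main obstacle.} The delicate point is pinning down the constant: showing the density/probability ratio is bounded by $e^{\eta}$ rather than $e^{2\eta}$. The resolution is to condition on $\{\eps_i\}_{i\ne j}$, reduce to the one-dimensional tail $\prob\{\eps_j \le t\}$ vs.\ $\prob\{\eps_j \le t'\}$, and use that for a $\mathrm{Lap}(1/\eta)$ variable, $\prob\{\eps_j \le t\} \le e^{\eta|t-t'|}\prob\{\eps_j\le t'\}$ whenever $t \le t' $... and here the monotone structure of the argmin (moving to the all-zeros reference vector only \emph{lowers} every $\xi_i$, hence moves the threshold $t$ for coordinate $j$ by exactly $-\xi_j + (\text{change in the competing min})$, where the competing min changes by something in $[-\max_{i\ne j}\xi_i, 0] \subseteq [-1,0]$). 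A careful sign analysis shows the net worst-case shift relevant to the one-sided Laplace tail bound is $\le 1$, yielding the factor $e^{\eta}$ with $\tau = 0$; integrating back over the conditioning variables and noting the bound is uniform in $\xi$ (hence holds a.s., giving $\nu = 0$) completes the argument. I would also remark that this is precisely the textbook Laplace-mechanism/report-noisy-min argument specialized to $\ell_\infty$-bounded inputs, so the write-up can cite \cite{dwork2014algorithmic} for the core inequality and focus only on the reduction to Definition~\ref{def:stability}.
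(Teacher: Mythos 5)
Your proposal is correct and follows essentially the same route as the paper's proof: take $S_0=\argmin_i \eps_i$ (the noise-only argmin), condition on $\{\eps_j\}_{j\neq i}$, and observe that the event $\{\hatS=i\}$ is a one-dimensional Laplace tail event whose threshold exceeds that of $\{S_0=i\}$ by at most $1$, so the density-ratio bound gives the factor $e^{\eta}$. The constant you worry about at length is in fact immediate: $\min_{j\neq i}\{\eps_j+\lambda(C_j^\alpha(X))-\lambda(C_i^\alpha(X))\}\leq \min_{j\neq i}\eps_j+1$ since each difference of sizes lies in $[-1,1]$, which is exactly the one-sided shift of at most $1$ you ultimately assert.
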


\medskip

\begin{lemma}[Stability via exponential mechanism]
\label{lem:stability:exponential:mechanism}
Assume $\lambda(C_i^{\alpha}(X)) \in [0,1]$, for all $i \in [\numborac]$. Then, the selection algorithm $\hatS$ with
\begin{equation*}
    \prob\ens{\hatSxiepsilon=i \big| \xi } =\frac{\exp\open{-\eta\lambda\open{\mathrm{C}^\alpha_i(X)} }}{\sum_{j\in [\numborac]}\exp\open{-\eta\lambda\open{\mathrm{C}^\alpha_j(X)}}}
\end{equation*}
is $2\eta$-stable.
Note that we do not need to make the distribution of $\eps$ or the mapping $\hatS$ explicit here.
\end{lemma}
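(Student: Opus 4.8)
\textbf{Proof proposal for Lemma~\ref{lem:stability:exponential:mechanism}.}

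The plan is to show that the exponential mechanism is stable in the sense of Definition~\ref{def:stability} with $\tau=\nu=0$ and parameter $2\eta$, by exhibiting an explicit reference distribution $S_0$ and bounding the likelihood ratio uniformly over all inputs $\xi$. Since the output space $\scal=\{1,\dots,\numborac\}$ is finite and the mechanism's conditional law is fully specified by $\xi$, I would take $S_0$ to be the \emph{uniform} distribution on $[\numborac]$, i.e.\ $\prob\{S_0 = i\} = 1/\numborac$. Then indistinguishability $\hatSxiepsilon \approx^{|\xi}_{2\eta,0} S_0$ reduces, because $\scal$ is discrete, to checking the pointwise bound $\prob\{\hatSxiepsilon = i \mid \xi\} \le e^{2\eta}\,\prob\{S_0 = i\}$ for every $i$ and every realization of $\xi$ (the extension from singletons to arbitrary measurable $\ocal\subseteq\scal$ is immediate by summation, and $\tau=0$).

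The core computation is the uniform bound on $\prob\{\hatSxiepsilon = i \mid \xi\}$. Writing $a_j := \lambda(\mathrm{C}^\alpha_j(X)) \in [0,1]$, we have
\begin{align}
\label{eq:expmech:ratio}
\prob\ens{\hatSxiepsilon = i \mid \xi} = \frac{\exp(-\eta a_i)}{\sum_{j\in[\numborac]} \exp(-\eta a_j)} = \frac{1}{\sum_{j\in[\numborac]} \exp(-\eta(a_j - a_i))}.
\end{align}
Since each $a_j - a_i \in [-1,1]$, every summand satisfies $\exp(-\eta(a_j-a_i)) \ge e^{-\eta}$, so the denominator is at least $\numborac\, e^{-\eta}$, giving $\prob\{\hatSxiepsilon = i \mid \xi\} \le e^{\eta}/\numborac$. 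This already yields $\eta$-indistinguishability from the uniform $S_0$; since $e^{\eta} \le e^{2\eta}$, the claimed $2\eta$-stability follows a fortiori. (One could alternatively bound the ratio of the conditional laws at two neighbouring inputs differing in one coordinate, which is the classical differential-privacy statement of the exponential mechanism and gives exactly the factor $e^{2\eta}$ — I would mention this route since it explains the constant $2$ in the statement — but for matching Definition~\ref{def:stability} the direct comparison with the uniform reference is cleaner.) Finally, because the bound holds for \emph{every} realization of $\xi$, not merely almost surely, we get $\nu = 0$, so $\hatS$ is $2\eta$-stable, and in fact $(\eta,0)$-stable in the notation of the paper.

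The only genuine subtlety — and the step I would be most careful about — is the interplay between the two notions in play: Definition~\ref{def:stability} asks for a \emph{single} reference $S_0$ against which the conditional law is indistinguishable for almost all $\xi$, whereas the textbook exponential-mechanism guarantee is a \emph{pairwise} statement between inputs. The reconciliation is that with the uniform reference and a bounded sensitivity score, the pairwise bound upgrades for free to a bound against the fixed reference, as~\eqref{eq:expmech:ratio} shows; I would make sure to state that $S_0$ does not depend on $\xi$ (it may depend on $\numborac$ and hence on $\pcal$, which is permitted). No constraints are needed on the joint law of $\xi$ and $\zeta$, which is exactly why $\nu=0$ and why the remark in the statement that we ``do not need to make the distribution of $\eps$ or the mapping $\hatS$ explicit'' holds: the conditional law given $\xi$ is all that enters.
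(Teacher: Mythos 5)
Your proposal is correct and follows essentially the same route as the paper: take $S_0$ uniform on $[\numborac]$ and bound the conditional likelihood ratio pointwise using $\lambda(C_j^\alpha(X))\in[0,1]$. Your computation in fact yields the sharper constant $e^{\eta}$ (the paper's displayed bound of $e^{2\eta}$ only arises from loosely bounding the numerator $\exp(-\eta\lambda(C_i^\alpha(X)))$ by $e^{\eta}$ rather than by $1$), so the claimed $2\eta$-stability follows a fortiori, exactly as you note.
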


We note that the assumption that $\lambda(C_i^{\alpha}(X)) \in [0,1]$ for any $i\in[\numborac]$ was used to alleviate notations and may be relaxed up to appropriate scaling of the mechanisms. More importantly, while the two selection algorithms discussed above satisfy the stability requirement, they are adapted from the literature of differential privacy, which is strictly stronger than our required notion of stability. Thus, this additional strength can lead to overly conservative behavior when only stability is required. 

To address this, we propose a new Minimum Stable Expectation (\ref{eq:MinSE}) selection mechanism, designed to achieve stability as tightly as possible. {\ref{eq:MinSE}~relies on a \textit{prior} $b \in \Delta^{\numborac-1}$, that encodes prior knowledge on which interval to select, before observing the different predictive intervals at  $X$.}

\medskip

\begin{lemma}[Minimum stable expectation]
\label{lem:stability:MSE}
Let $\eta, \tau \geq 0$,  and a fixed $b \in \Delta^{\numborac-1}$, and consider the following linear program
\begin{align}
\label{eq:MinSE}
\begin{split}
\pstar{b}{\xi} = \argmin_{p} \;\; &\sum_{i=1}^{\numborac} p_i\, \lambda\open{C_i^\alpha\open{X}} 
\\ \mathrm{s.t. \quad} & p\in \Delta^{\numborac-1}, \ \ s\in \rset_+^{\numborac}, \ \  p_i\leq {e^\eta}{b_i} + s_i
 ,\ \ \sum_{i\in[\numborac]} s_i \leq \tau
\end{split}
\tag{\textcolor{black}{MinSE}}
\end{align}
Then, the selection algorithm $\hatSxiepsilon$ with $\prob\{\hatSxiepsilon=i|\xi\} = \pstari{b}{\xi} $ is $(\eta, \tau)$-stable.
\end{lemma}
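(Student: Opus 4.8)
\textbf{Proof plan for Lemma~\ref{lem:stability:MSE}.} The plan is to show that the reference random variable $S_0$ witnessing stability can be taken to be \emph{deterministic}, namely the random variable with $\prob\{S_0 = i\} = b_i$, and then verify the indistinguishability inequality pointwise in $\xi$. Concretely, for almost every realization of $\xi$ (in fact every realization, since the linear program is feasible for any $\xi$ and $\nu = 0$), I need to check that for every subset $\mathcal{O} \subseteq \{1,\dots,\numborac\}$,
\[
\prob\{\hatSxiepsilon \in \mathcal{O} \mid \xi\} = \sum_{i \in \mathcal{O}} \pstari{b}{\xi} \;\leq\; e^\eta \sum_{i \in \mathcal{O}} b_i + \tau = e^\eta \prob\{S_0 \in \mathcal{O}\} + \tau.
\]
The constraint $p_i \leq e^\eta b_i + s_i$ in \eqref{eq:MinSE}, summed over $i \in \mathcal{O}$, immediately gives $\sum_{i \in \mathcal{O}} \pstari{b}{\xi} \leq e^\eta \sum_{i \in \mathcal{O}} b_i + \sum_{i \in \mathcal{O}} s_i \leq e^\eta \sum_{i \in \mathcal{O}} b_i + \tau$, using $s_i \geq 0$ and $\sum_{i \in [\numborac]} s_i \leq \tau$. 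This is exactly the conditional indistinguishability $\hatSxiepsilon \approx^{|\xi}_{\eta,\tau} S_0$, and since it holds for every $\xi$, the stability condition of Definition~\ref{def:stability} holds with $\nu = 0$, i.e.\ $\hatS$ is $(\eta,\tau)$-stable in the sense of \Cref{subsec:stability:application:conformal}.

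\textbf{Two points require a little care.} First, I must argue that the linear program \eqref{eq:MinSE} is feasible (so that $\pstar{b}{\xi}$ is well-defined) for every $\xi$: the choice $p = b$, $s = 0$ is always feasible since $b \in \Delta^{\numborac-1}$, so the feasible region is nonempty, and it is a compact polytope with a continuous (linear) objective, hence the $\argmin$ is attained; if it is not unique, any measurable selection of a minimizer works, and one can pick, e.g., the lexicographically smallest optimal vertex to make $\xi \mapsto \pstar{b}{\xi}$ measurable. Second, I should note that $\hatSxiepsilon$ is being \emph{defined} through its conditional law given $\xi$; concretely one realizes it as $\hatSxiepsilon = F^{-1}(\xi, \eps)$ where $\eps \sim \mathrm{Unif}[0,1]$ is independent of $\xi$ and $F^{-1}(\xi,\cdot)$ is the quantile transform of the categorical distribution $\pstar{b}{\xi}$, which fits the ``deterministic map $\Xi \times \E \to \scal$'' formalism of \Cref{subsec:algorithmic:stability}.

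\textbf{Main obstacle.} There is essentially no analytic difficulty here — the statement is almost immediate from the feasibility constraints of the linear program — so the only real ``obstacle'' is bookkeeping: making sure the quantifier order in Definition~\ref{def:indistinguishability} (all measurable $\mathcal{O}$) and Definition~\ref{def:stability} (almost-sure over $\xi$) is matched correctly, and that measurability of the selection rule $\xi \mapsto \pstar{b}{\xi}$ is addressed so that $\hatS$ is a bona fide randomized algorithm. I would also remark, as a sanity check, that taking $\tau = 0$ forces $s = 0$ and recovers an $\eta$-stable mechanism, and that the objective being minimized is precisely the expected selected set size $\esp[\lambda(C_{\hatSxiepsilon}^\alpha(X)) \mid \xi]$, which is why the mechanism is ``minimum stable expectation'' and why, via Corollary~\ref{cor:stable:CP:selection}, it is the optimal stable selection rule for a given prior $b$.
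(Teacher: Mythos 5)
Your argument is exactly the paper's proof: take $S_0$ with law $b$, sum the constraint $p_i \le e^\eta b_i + s_i$ over $i \in \mathcal{O}$, and use $\sum_i s_i \le \tau$ to get conditional $(\eta,\tau)$-indistinguishability with $\nu = 0$. The extra remarks on feasibility (via $p=b$, $s=0$) and measurability of the minimizer are sound additions that the paper leaves implicit.
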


All three lemmas propose randomized selection algorithms which assign the highest probability to the smallest set. However, different from the first two, which assign nonzero probability to all the sets, \ref{eq:MinSE} assigns zero probability to the largest, whenever feasible.

To develop further intuition of \ref{eq:MinSE}, consider the case where $b = \open{1/K}_{i\in [K]}$ represents a uniform prior. Then, if we set $e^\eta = {K}, \tau=0$, \ref{eq:MinSE} reduces to  $\argmin_{i} \ens{ \lambda\open{C_i^\alpha\open{X}}, i=1, \dots,\numborac }$, thus selecting deterministically the smallest set. This is reasonable,  as then, for $1-\alpha$ after selection, the original confidence sets should have $1-\alpha/\numborac$ coverage, which corresponds to a Bonferroni correction \cite{weisstein2004bonferroni}. Now consider a less extreme example with $e^\eta={2}, \tau=0,$ and $b$ still a uniform prior. By direct checking of the feasibility conditions, one can  show that \ref{eq:MinSE} will never choose any of the $\floor{\numborac/2}$ largest sets. In addition, it is possible to show that \ref{eq:MinSE} achieves a notion of optimality among all stable selection mechanisms.

% \begin{proposition}[Optimality of MinSE]
% \label{prop:minse_optimality} Let $\acal: \Xi\times \E \rightarrow [\numborac]$ be an $(\eta,\tau)$-stable algorithm w.r.t. a measure $\pcal$ on $\Xi$. Then there exists a prior vector $b\in \tri^{\numborac-1}$ such that it holds $\pcal$-almost-surely that 
% \begin{align*}
%     \sum_{i=1}^{\numborac} \pstari{b}{\xi} \lambda\open{C_i^\alpha(X)} \leq \sum_{i=1}^{\numborac} p_i \lambda\open{C_i^\alpha(X)},
% \end{align*}
% where the vector $\open{p^*_1(b,\xi),\ldots,p^*_K(b,K)}$ is the output of MinSE with parameters $\eta,\tau\geq0$ and $p_i\coloneqq\prob\ens{\acal\open{\xi, \eps}=i|\eps}$.
% \end{proposition}
% \begin{proof}
%     By the assumption that $\acal$ is $(\eta,\tau)$-stable, then there exists a r.v. $S_0$ such that for any $G \subseteq [\numborac]$ it holds that 
%     \begin{equation*}
%         \prob_\eps\ens{A(\xi,\eps)\in G |\xi }\leq \exp\open{\tau}\prob_{\eps}\ens{S_0 \in G|\xi}+\tau 
%     \end{equation*}
% \end{proof}

\medskip

\begin{proposition}[Optimality of \ref{eq:MinSE}]\label{prop:minse_optimality} 
Let $\acal: \Xi\times \E \rightarrow [\numborac]$ be an $(\eta,\tau)$-stable algorithm w.r.t. a measure $\pcal$ on $\Xi$. Then there exists a prior vector $b\in \Delta^{\numborac-1}$ such that it holds $\pcal$-almost surely that 
\begin{align*}
    \sum_{i=1}^{\numborac} \pstari[i]{b}{\xi} \lambda\open{C_i^\alpha(X)} \leq \sum_{i=1}^{\numborac} p^{\acal}_i(\xi) \lambda\open{C_i^\alpha(X)},
\end{align*}
where $\xi = [\lambda\open{C_1^\alpha(X)}, \ldots, \lambda\open{C_\numborac^\alpha(X)}]$, the vector $\pstar{b}{\xi} = \open{\pstari[1]{b}{\xi},\ldots,\pstari[\numborac]{b}{\xi}}$ is the output of \ref{eq:MinSE} with parameters $\eta,\tau\geq0$ and prior $b$, and $p^{\acal}_i(\xi)\coloneqq\prob_{\eps}\ens{\acal\open{\xi, \eps}=i|\xi}$.
\end{proposition}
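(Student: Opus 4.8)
The plan is to show that any $(\eta,\tau)$-stable algorithm $\acal$ furnishes a feasible point for the linear program \ref{eq:MinSE} with a suitably chosen prior $b$, after which optimality of $\pstar{b}{\xi}$ as the minimizer of that LP gives the inequality. The natural candidate for the prior is $b := $ the law of $S_0$, i.e., $b_i := \prob\{S_0 = i\}$, where $S_0$ is the reference random variable witnessing the $(\eta,\tau,0)$-stability of $\acal$ (recall that in the conformal setting we restrict to $\nu = 0$, so the indistinguishability $\acal(\xi,\eps) \approx^{|\xi}_{\eta,\tau} S_0$ holds $\pcal$-almost surely). Note $b \in \Delta^{\numborac-1}$ since $S_0$ takes values in $[\numborac]$.

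The key step is then to verify, for $\pcal$-almost every $\xi$, that the vector $p^{\acal}(\xi) = (p^{\acal}_1(\xi),\ldots,p^{\acal}_\numborac(\xi))$ defined by $p^{\acal}_i(\xi) = \prob_\eps\{\acal(\xi,\eps)=i \mid \xi\}$ is feasible for \ref{eq:MinSE}. Clearly $p^{\acal}(\xi) \in \Delta^{\numborac-1}$. For the stability constraints, apply Definition~\ref{def:indistinguishability} with the singleton measurable sets $\ocal = \{i\}$: on the almost-sure event where $\acal(\xi,\eps)\approx^{|\xi}_{\eta,\tau} S_0$, we get $p^{\acal}_i(\xi) = \prob\{\acal(\xi,\eps) = i \mid \xi\} \leq e^\eta \prob\{S_0 = i \mid \xi\} + \tau_i$ for some $\tau_i \geq 0$; but here $S_0$ does not depend on $\xi$, so $\prob\{S_0 = i \mid \xi\} = b_i$. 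To get the per-coordinate slack budget, set $s_i := \max\{0,\ p^{\acal}_i(\xi) - e^\eta b_i\}$; then $p^{\acal}_i(\xi) \leq e^\eta b_i + s_i$ by construction, and we must check $\sum_i s_i \leq \tau$. This last point is where the definition of indistinguishability must be used at its full strength: testing the single set $\ocal = \{ i : p^{\acal}_i(\xi) > e^\eta b_i\}$ yields $\sum_{i \in \ocal} p^{\acal}_i(\xi) \leq e^\eta \sum_{i \in \ocal} b_i + \tau$, i.e., $\sum_{i \in \ocal}(p^{\acal}_i(\xi) - e^\eta b_i) \leq \tau$, which is exactly $\sum_i s_i \leq \tau$ since the summand is nonpositive outside $\ocal$. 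Hence $(p^{\acal}(\xi), s)$ is feasible.

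Finally, since $\pstar{b}{\xi}$ is by definition a minimizer of the objective $p \mapsto \sum_{i=1}^\numborac p_i \lambda(C_i^\alpha(X))$ over the feasible region of \ref{eq:MinSE}, and $p^{\acal}(\xi)$ lies in that region for $\pcal$-almost every $\xi$, we conclude
\[
\sum_{i=1}^{\numborac} \pstari[i]{b}{\xi}\,\lambda\open{C_i^\alpha(X)} \;\leq\; \sum_{i=1}^{\numborac} p^{\acal}_i(\xi)\,\lambda\open{C_i^\alpha(X)}
\]
$\pcal$-almost surely, as claimed.

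The main obstacle I anticipate is bookkeeping around the conditioning and the almost-sure qualifiers: one must be careful that $S_0$ is genuinely $\xi$-independent (so that $\prob\{S_0 = i \mid \xi\} = b_i$), that the exceptional $\pcal$-null set from stability is handled uniformly over the finitely many test sets $\ocal$ (harmless, since a finite union of null sets is null, and in fact Definition~\ref{def:stability} already quantifies over all measurable $\ocal$ on the good event), and that the linear program's feasible set is nonempty and the $\argmin$ well-defined — but feasibility is witnessed by $p = b$ with $s = (e^\eta - 1)b$ clipped, or more simply $p=b, s=0$ when $\eta \geq 0$, so this is not an issue. Everything else is a direct translation between indistinguishability tested on singletons/level-sets and the LP constraints.
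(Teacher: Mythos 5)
Your proposal is correct and follows essentially the same route as the paper's proof: take $b$ to be the law of the reference variable $S_0$, define $s_i$ as the positive part of $p^{\acal}_i(\xi) - e^\eta b_i$, verify $\sum_i s_i \le \tau$ by testing indistinguishability on the level set $\{i : p^{\acal}_i(\xi) > e^\eta b_i\}$, and conclude by feasibility of $p^{\acal}(\xi)$ for the \ref{eq:MinSE} linear program. The extra remarks on non-emptiness of the feasible region and on handling the null sets are fine but not needed beyond what the paper already does.
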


In words, for any $(\eta,\tau)$-stable algorithm, there exists a prior vector $b$ such that the distribution of the output of \ref{eq:MinSE}  almost-surely achieves smaller expected size.

\subsection{MinSE examples, tightness, and extensions}
\begin{example}[Worst-case Oracles]\label{expl:3}
    Consider $\numborac$ oracle confidence interval methods such that for any $X\in\xcal$, an index $j \in [\numborac]$ is chosen uniformly at random, and oracle $j$ outputs $C_j(X) = \emptyset$, while all other oracles $i \neq j$ output $C_i(X) = \ycal$. This setup provides a scenario to analyze the tightness of the stability guarantee. For simplicity, let $\lambda(\ycal) > 0$ and $\lambda(\emptyset)=0$.  For any datapoint $\open{X,Y}$, each oracle $i$ individually has marginal miscoverage $\prob\ens{Y\notin C_i(X)} = 1/\numborac$. Let $\exp\open{\eta}/K+\tau\leq 1$, applying \ref{eq:MinSE} with parameters $(\eta, \tau)$ and a uniform prior $b$, we examine the post-selection miscoverage $\prob(Y \notin C_{\hatS})$. Miscoverage occurs only if the empty oracle is selected. By construction, one and only one set miscovers and has minimal size. \ref{eq:MinSE} assigns the maximum possible probability $p^*_j = {\exp\open{\eta}/K+\tau}$ to the zero-size set $j$. Thus, the miscoverage probability is $\exp\open{\eta}/K+\tau$. This result exactly matches the upper bound from \Cref{cor:stable:CP:selection}, demonstrating that the stability bound is indeed tight in worst-case scenarios. Arguably, this is a pathological setting due to the dependence structure of the confidence sets. Nonetheless, we empirically show in the next example the stability coverage bound is tight, even with independent confidence sets.
\end{example}

\medskip

\begin{example}[Coin flips] \label{expl:1}
    Let $\ycal = [0,1]$ and each of the $\numborac$ sets be constructed using an independent coin flip: for each $i \in [\numborac]$, $C_i^\alpha(X)$ is equal to $\mathcal{Y}$ with probability $1 - \alpha$ and to $\emptyset$ with probability $\alpha$.
    In this example, we appropriately adjust the coverage of the original coin flips to achieve coverage of $1-\alpha$ after $\eta$-stable selection. Contrary to \Cref{expl:3}, here the different oracles are independent. As shown in \Cref{fig:examples_synth_combined_figures}, the $\eta$-stable selection using \ref{eq:MinSE} almost exactly achieves $1-\alpha$ coverage across different stability levels $\eta$, particularly as $\numborac$ grows. This  suggests that while the stable selection mechanism inflates the probability of yielding the full set $\mathcal{Y}$, selecting the set with the minimum size effectively counterbalances this inflation. Therefore, even with oracle independence, without additional assumptions, the inflation of the original sets is necessary.
\end{example}

\begin{figure}[t]
  \centering
  \includegraphics[width=\textwidth]{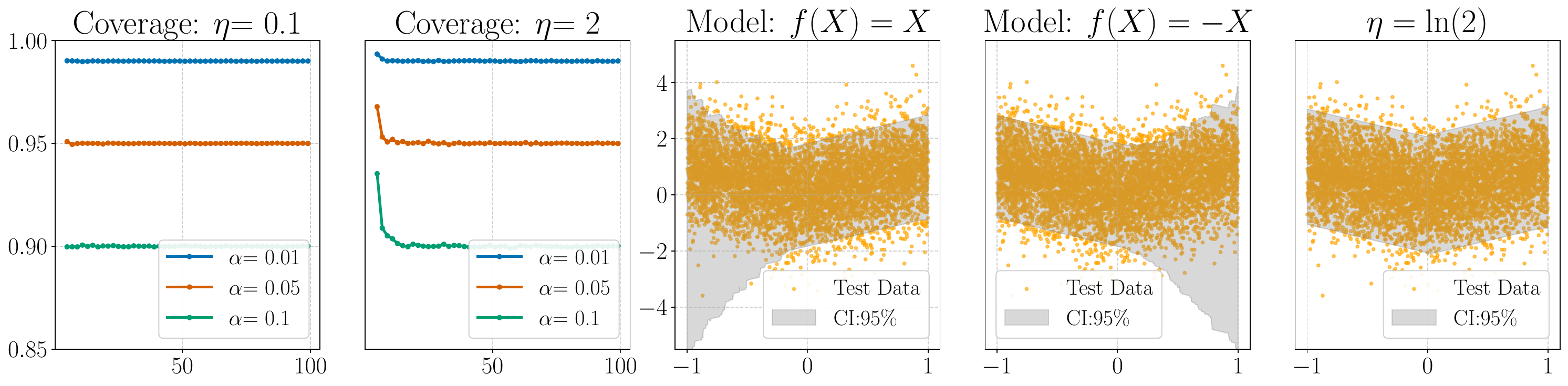}

  \caption{(Left-\cref{expl:1}) Coverage of the coin flips example after selection using \ref{eq:MinSE} with parameters $\eta\in\ens{0.1,2}$ and $\tau=0$. Before selection, the oracle returns the full set $[0,1]$ with probability $1-\alpha\exp\open{-\eta}$. (Right-\cref{expl:2}) Second and third figures reflecting adaptive conformal intervals, with miscoverage $\alpha=0.05$, obtained using the models $f(X) = X$ and $f(X) = -X$. Fifth figure shows the stable selection applied to the conformal sets in the first two figures, adjusted to have miscoverage $\alpha=0.025$, using \ref{eq:MinSE} with $\eta=\ln\open{2}$.
}
  \vspace{-1.25em}\label{fig:examples_synth_combined_figures}
\end{figure}
% \begin{figure}[t] 
%   \centering
%   \begin{minipage}{0.23\textwidth}
%     \centering
%     \includegraphics[width=\linewidth]{figures/coverage_experiment_updated_colors.pdf} 
%   \end{minipage}
%   \begin{minipage}{0.23\textwidth}
%     \centering
%     \includegraphics[width=\linewidth]{figures/second_example_plot.pdf}
%   \end{minipage}
%   \caption{(Left-\cref{expl:1}) Coverage of the coin flips example after selection using \ref{eq:MinSE} with parmaters $\eta\in\ens{0.1,0.5,2}$ and $\tau=0$. Before selection, the oracle return the full set $[0,1]$ with probability $1-\alpha\exp\open{-\eta}$. (Right-\cref{expl:2}) Top two reflecting adaptive conformal intervals, with miscoverage $\alpha=0.05$, obtained using the underlying models $f(X) = X$ and $f(X) = -X$ respectively. Bottom figure shows the stable selection the conformal sets in the first two figures, adjusted to have miscoverage $\alpha=0.025$, using \ref{eq:MinSE} with $\eta=\ln\open{2}$.
% }
%   \label{fig:examples_synth_combined_figures}
% \end{figure}

\medskip

\begin{example}[Toy regression model]\label{expl:2}
    Let $Y = \modu{X} + \ncal(0, 0.25)$, with $X \sim \mathrm{Uniform}([-1, 1])$, and consider the following two predictors $f_1(X)=X$ and $f_2(X)=-X$. This setup could arise, for example, when the data is split during training, e.g., due to privacy or design constraints, and is meant to be favourable for our method. Indeed, on each half-space ($X\geq 0$ or $X\leq 0$), one confidence interval is much smaller than the other one. Numerical results, given to the right of \Cref{fig:examples_synth_combined_figures}, show that despite the inflation in set size, stable selection leads to an improvement in the average set size, in comparison to relying on any individual predictor, while guaranteeing the same coverage. Numerically, the individual sets are about 30\% wider on average.
% or selecting a specific one. 
This improvement highlights the advantage of using stable selection in settings where different predictors have complementary strengths, i.e.~are accurate on different subsets of $\mathcal{X}$.
\end{example}

\subsubsection{Adaptive Minimum Stable Expectation}
\label{subsec:adaptive:MSE}

One difficulty in using \ref{eq:MinSE} is tuning the stability parameters $\eta$ and $\tau$. For instance, assume a user has access to $\numborac$ conformal predictors, each with coverage at least $1-\alpha'$, and wishes to apply \ref{eq:MinSE} to select among them such that the coverage after selection is at least $1-\alpha$. Then, they may choose any values of $\tau$ and $\eta$ satisfying $\alpha' \leq (\alpha - \tau) e^{-\eta}$. In particular, the utility tradeoff between $\tau$ and $\eta$ is not immediately clear. 

To address this, we propose \ref{eq:AdaMinSE}, an adaptive version of \ref{eq:MinSE}, which also optimizes over the choice of $\tau$ and $\eta$. Similarly to \ref{eq:MinSE}, \ref{eq:AdaMinSE} also makes use of a prior $b \in \Delta^{\numborac-1}$ (which can be chosen depending on the past as in \Cref{alg:adaptive_COMA}). However, instead of requiring the parameters $(\eta, \tau)$ as input, it simply takes the current level of miscoverage $\alpha'$ and the desired miscoverage level $\alpha$ after selection. The following proposition introduces \ref{eq:AdaMinSE}, together with its coverage guarantee.

\medskip

\begin{proposition}[Adaptive Minimum Stable Expectation]
\label{lemma:stability:aminse}
Let $\alpha', \alpha \in \open{0,1}$ with $\alpha'\leq \alpha$, and let $b \in \Delta^{\numborac-1}$ be fixed. Consider the following linear program
\begin{align}
\label{eq:AdaMinSE}
\begin{split}
\pstarstar{b}{\xi} =  \argmin_{d} \;\; &\sum_{i=1}^{\numborac} d_i\lambda\open{C_i^{\alpha'}\open{X}}\\
 \mathrm{s.t. \quad} & d\in \Delta^{\numborac-1}, \quad s\in \rset_+^{\numborac},\quad \tau, \eta \geq 0, \\ &d_i\leq {e^\eta}b_i + s_i, \ \quad  \sum_{i\in[\numborac]} s_i \leq \tau,\ \quad   e^\eta \alpha' +\tau   \leq  \alpha  \nonumber
 \end{split}
 \tag{\textcolor{black}{AdaMinSE}}
\end{align}
Let $\prob\ens{Y\in  C_i^{\alpha'}(X)}\geq 1-\alpha'$, for all $i\in[K]$. Moreover, consider the selection algorithm $\hatSxiepsilon$ with $\prob\{\hatSxiepsilon=i|\xi\} = \pstarstar{b}{\xi}$. Then, $\prob\ens{Y \in C_{\hatSxiepsilon}^\alpha(X)} \geq 1-\alpha$.
\end{proposition}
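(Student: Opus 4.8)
The plan is to show that \ref{eq:AdaMinSE} is feasible and that any feasible point yields an $(\eta,\tau)$-stable selection algorithm for the appropriate $(\eta,\tau)$, so that \Cref{cor:stable:CP:selection} applies with miscoverage level $\alpha'$. First I would observe that the feasible region of \ref{eq:AdaMinSE} is nonempty: taking $\eta = \ln(\alpha/\alpha')$ (which is $\geq 0$ since $\alpha' \leq \alpha$), $\tau = 0$, $s = 0$, and $d = e^\eta b$ clipped back into the simplex — or more carefully, noting that $e^\eta b \in \Delta^{\numborac-1}$ scaled up componentwise dominates $b$, so any $d \in \Delta^{\numborac-1}$ with $d_i \leq e^\eta b_i$ exists because $e^\eta \geq 1$ forces $\sum_i e^\eta b_i = e^\eta \geq 1$; hence the constraint $d \in \Delta^{\numborac-1}$, $d_i \leq e^\eta b_i + s_i$ is satisfiable (e.g. $d = b$ itself works with $s=0$). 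The remaining constraint $e^\eta \alpha' + \tau \leq \alpha$ reduces to $e^\eta \alpha' \leq \alpha$, i.e. $\eta \leq \ln(\alpha/\alpha')$, which holds at our chosen $\eta$. So the program is feasible and, since it minimizes a linear objective over a bounded polytope (the $s_i$ and $\eta$, $\tau$ are implicitly bounded by $\alpha$ and $\tau \leq \alpha$), an optimal solution $(d^\star, s^\star, \tau^\star, \eta^\star)$ exists, possibly depending measurably on $\xi$.

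Next I would verify stability. Fix any $\xi$ in the support of $\pcal$ and let $(d^\star(\xi), s^\star(\xi), \tau^\star(\xi), \eta^\star(\xi))$ be the optimizer. A subtlety is that $\eta^\star$ and $\tau^\star$ are themselves functions of $\xi$, whereas \Cref{def:stability} wants fixed parameters. To handle this, I would argue exactly as in \Cref{lem:stability:MSE}: for the reference random variable take $S_0 \sim b$. Then for every $\xi$, since $d^\star_i(\xi) \leq e^{\eta^\star(\xi)} b_i + s^\star_i(\xi)$ with $\sum_i s^\star_i(\xi) \leq \tau^\star(\xi)$, we get for any $\mathcal{O} \subseteq [\numborac]$ that $\prob\{\hatS = i \in \mathcal O \mid \xi\} = \sum_{i \in \mathcal O} d^\star_i(\xi) \leq e^{\eta^\star(\xi)} \sum_{i\in\mathcal O} b_i + \sum_{i \in \mathcal O} s^\star_i(\xi) \leq e^{\eta^\star(\xi)} \prob\{S_0 \in \mathcal O\} + \tau^\star(\xi)$. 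Now the key point: the last constraint of \ref{eq:AdaMinSE} gives $e^{\eta^\star(\xi)} \alpha' + \tau^\star(\xi) \leq \alpha$ for every $\xi$. So rather than invoking \Cref{cor:stable:CP:selection} with fixed parameters, I would redo its short argument directly with the $\xi$-dependent parameters.

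Concretely, the last step is the coverage computation. Write $\error(\xi) := \prob\{Y \notin C^{\alpha'}_{\hatS}(X) \mid \xi\}$ and decompose by the selected index. Since $C^{\alpha'}_i(X)$ is correlated with $\xi$ but the miscoverage event $\{Y \notin C^{\alpha'}_i(X)\}$ has the individual guarantee, the cleanest route mirrors the proof of \Cref{thm:stable:selection}: condition on the randomness defining the confidence sets (i.e. on $\xi$ and on whatever randomness $C^{\alpha'}_i$ uses), let $w_i := \prob\{Y \notin C^{\alpha'}_i(X) \mid \xi, \ldots\}$ so that the individual guarantee integrates to $\esp[w_i] \leq \alpha'$, and compute $\prob\{Y \notin C^{\alpha'}_{\hatS}(X)\} = \esp\big[\sum_i d^\star_i(\xi) w_i\big] \leq \esp\big[\sum_i (e^{\eta^\star(\xi)} b_i + s^\star_i(\xi)) w_i\big]$; bounding $w_i \leq 1$ on the $s$ term and using $e^{\eta^\star(\xi)} b_i \geq 0$, this is $\leq \esp\big[e^{\eta^\star(\xi)} \sum_i b_i w_i\big] + \esp[\tau^\star(\xi)]$. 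Here the dependence of $\eta^\star$ on $\xi$ — hence on the $w_i$ — is the main obstacle, since we cannot simply pull $e^{\eta^\star}$ out of the expectation. I expect the resolution is that one should instead bound pointwise in $\xi$: $\sum_i d^\star_i(\xi) w_i \leq e^{\eta^\star(\xi)} \big(\sum_i b_i w_i\big) + \tau^\star(\xi)$, and then note that conditionally on $\xi$ the vector $(w_i)$ still only satisfies $\esp[w_i \mid \xi] \leq$ something — actually the individual guarantee is marginal, so the cleanest fix is to take $b$ genuinely fixed (independent of $\xi$, as the statement assumes) and observe that the reference variable $S_0\sim b$ is independent of the data, so that $\prob\{Y \notin C^{\alpha'}_{S_0}(X)\} = \sum_i b_i \prob\{Y\notin C^{\alpha'}_i(X)\} \leq \alpha'$; combined with the pointwise bound $\prob\{Y\notin C^{\alpha'}_{\hatS}(X) \mid \xi\} \leq e^{\eta^\star(\xi)}\prob\{Y\notin C^{\alpha'}_{S_0}(X)\mid \xi\} + \tau^\star(\xi)$ and the constraint $e^{\eta^\star(\xi)}\alpha' + \tau^\star(\xi) \leq \alpha$ holding surely, taking expectations over $\xi$ gives $\prob\{Y \notin C^{\alpha'}_{\hatS}(X)\} \leq \esp[e^{\eta^\star(\xi)}] \alpha' + \esp[\tau^\star(\xi)] \leq \alpha$, using that $\{S_0\sim b\} \ind (X,Y)$ makes the inner conditional miscoverage of $S_0$ equal to its unconditional value $\leq \alpha'$. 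I would double-check this independence/conditioning step carefully, as it is where the argument is genuinely delicate; everything else is a routine repackaging of \Cref{thm:stable:selection} and \Cref{lem:stability:MSE}.
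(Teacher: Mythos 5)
Your overall strategy mirrors the paper's: write the post-selection miscoverage as $\esp\big[\sum_i \pstarstari{b}{\xi}\, w_i(\xi)\big]$ with $w_i(\xi) \coloneqq \prob\ens{Y\notin C_i^{\alpha'}(X)\mid \xi}$, and invoke the two constraints $d_i\le e^{\eta}b_i+s_i$ and $e^{\eta}\alpha'+\tau\le\alpha$. You also correctly isolate the one genuinely delicate point: $\eta$ and $\tau$ are optimization variables of \ref{eq:AdaMinSE}, hence the optimal $(\eta^\star,\tau^\star)$ depends on $\xi$. Unfortunately, your resolution of that point is wrong. Independence of $S_0\sim b$ from the data gives $\prob\ens{Y\notin C_{S_0}^{\alpha'}(X)\mid\xi}=\sum_i b_i w_i(\xi)$, which is a random variable whose \emph{expectation} is at most $\alpha'$; it is not pointwise at most $\alpha'$. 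Consequently the step $\esp\big[e^{\eta^\star(\xi)}\prob\ens{Y\notin C_{S_0}^{\alpha'}(X)\mid\xi}\big]\le\esp\big[e^{\eta^\star(\xi)}\big]\alpha'$ does not follow: $e^{\eta^\star(\xi)}$ and $\sum_i b_i w_i(\xi)$ are both functions of $\xi$ and can be positively correlated, so they cannot be decoupled.

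This is not a cosmetic issue, and the paper's own one-line proof --- which silently treats $\eta$ and $\tau$ as constants when pulling $e^\eta$ out of the expectation --- has the same gap. Concretely, take $\numborac=3$, $b=(0.7,0.15,0.15)$, $\alpha'=0.2$, $\alpha=0.28$, and three disjoint events $A_1,A_2,A_3$ of probability $0.2$ each, on which set $j$ has size $0$ and always miscovers while the other two sets have size $1$ and always cover (each predictor then has exact marginal miscoverage $\alpha'$). On $A_1$ the program maximizes $d_1$ and attains $d_1^\star=e^{\eta^\star}b_1=1.4\times0.7=0.98$ with $(e^{\eta^\star},\tau^\star)=(1.4,0)$, while on $A_2,A_3$ it attains $d_j^\star=b_j+\tau^\star=0.23$ with $(e^{\eta^\star},\tau^\star)=(1,0.08)$; the resulting miscoverage is $0.2\,(0.98+0.23+0.23)=0.288>0.28=\alpha$. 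So the per-$\xi$ adaptivity in $(\eta,\tau)$ genuinely breaks the argument, and the conclusion is only safe when a single feasible pair $(\eta,\tau)$ is used for all $\xi$ --- in which case the statement reduces to \Cref{cor:stable:CP:selection} applied to \ref{eq:MinSE}, exactly the route you sketch in your first two paragraphs. A correct proof must impose (or derive) such a restriction rather than follow the final decoupling step you, and the paper, rely on.
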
 

\Cref{lemma:stability:aminse} ensures that \ref{eq:AdaMinSE} achieves the desired coverage. By optimizing over $(\eta, \tau)$, it removes the need for manual tuning, making it a practical and reliable approach for selection.

\subsubsection{Derandomizing the Prediction Set}
\label{subsec:derandomization}

One potential limitation of the stable selection algorithms presented in Section~\ref{subsec:stability:application:conformal} is that they produce a random confidence set, which may be undesirable in certain applications. In such cases, the stable selection process can be derandomized using the same techniques as \citet{gasparin2024merging}. This is formalized in the following proposition.

\medskip

\begin{proposition}[Derandomized smallest conformal set]
\label{prop:derandomization}
Let $\hatSxiepsilon$ be an $(\eta,\tau)$-stable selection algorithm, and define $p_i(\xi) \coloneqq  \prob\ens{\hatSxiepsilon=i|\xi}$. Then, consider the derandomized confidence set 
\begin{align*}
  C_{\mathrm{dr}}(X,\xi) := \left\{y \in \mathcal Y:\sum_{i=1}^{\numborac} p_i(\xi)\mathds{1}\left\{y\in C_i^\alpha(X)\right\} \geq \frac{1}{2}\right\}.
\end{align*}
If ${C^{\alpha}\open{\cdot}_1,\ldots,C^\alpha\open{\cdot}_\numborac}$ satisfy \eqref{eq:basic:CP:guarantee}, it holds that $\prob\ens{Y\in C_{\mathrm{dr}}(X,\xi) }\geq 1- 2(\alpha e^\eta +\tau)$.
\end{proposition}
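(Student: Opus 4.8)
The plan is to reduce the derandomized coverage statement to the randomized one proved in \Cref{cor:stable:CP:selection}. The key observation is that the event $\{Y \notin C_{\mathrm{dr}}(X,\xi)\}$ means, by definition of $C_{\mathrm{dr}}$, that $\sum_{i=1}^{\numborac} p_i(\xi)\mathds{1}\{Y \in C_i^\alpha(X)\} < \tfrac12$, i.e.\ the $\xi$-conditional probability that the randomized stable selection lands on a set covering $Y$ is strictly less than $\tfrac12$. Writing this out: conditionally on $(X,Y,\xi)$,
\begin{align*}
\prob\ens{Y \in C_{\hatSxiepsilon}^\alpha(X) \mid X, Y, \xi} = \sum_{i=1}^{\numborac} p_i(\xi)\mathds{1}\ens{Y \in C_i^\alpha(X)},
\end{align*}
since $\hatS$ selects index $i$ with conditional probability $p_i(\xi)$ and its internal randomness $\eps$ is independent of $Y$ given $\xi$. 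Hence on the miscoverage event of the derandomized set this conditional coverage probability is $<\tfrac12$.

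The main step is then a Markov-type (conditional expectation) argument. Let $g(X,Y,\xi) := \prob\ens{Y \in C_{\hatSxiepsilon}^\alpha(X)\mid X,Y,\xi} = \sum_i p_i(\xi)\mathds{1}\{Y\in C_i^\alpha(X)\} \in [0,1]$. By \Cref{cor:stable:CP:selection}, $\esp[g(X,Y,\xi)] = \prob\ens{Y \in C_{\hatSxiepsilon}^\alpha(X)} \geq 1 - (\alpha e^\eta + \tau)$, so $\esp[1 - g(X,Y,\xi)] \leq \alpha e^\eta + \tau$. Since $1 - g \in [0,1]$ and $\{Y \notin C_{\mathrm{dr}}(X,\xi)\} = \{g < \tfrac12\} \subseteq \{1 - g > \tfrac12\}$, Markov's inequality gives
\begin{align*}
\prob\ens{Y \notin C_{\mathrm{dr}}(X,\xi)} \leq \prob\ens{1 - g(X,Y,\xi) > \tfrac12} \leq 2\,\esp[1 - g(X,Y,\xi)] \leq 2(\alpha e^\eta + \tau),
\end{align*}
which is exactly the claimed bound after complementation.

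The only real subtlety — and the step I would be most careful about — is the measurability/conditioning bookkeeping: making precise that $p_i(\xi) = \prob\ens{\hatSxiepsilon = i \mid \xi}$ is a legitimate conditional probability, that $\xi = [\lambda(C_1^\alpha(X)),\ldots,\lambda(C_\numborac^\alpha(X))]$ is $\sigma(X)$-measurable so conditioning on $(X,Y,\xi)$ is the same as conditioning on $(X,Y)$, and that the algorithm's internal randomness $\eps$ is independent of $(X,Y)$ given $\xi$ (this is the standing assumption on randomized algorithms). Once that is in place, the tower property yields the displayed identity for $g$ and the rest is just Markov's inequality; there is no need to reopen the stability argument itself since \Cref{cor:stable:CP:selection} is invoked as a black box. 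I would also note in passing that the factor $2$ and the threshold $\tfrac12$ in the definition of $C_{\mathrm{dr}}$ are matched precisely so that this Markov step is tight, mirroring the derandomization of \citet{gasparin2024merging}.
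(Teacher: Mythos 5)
Your proof is correct and follows essentially the same route as the paper's: both reduce the miscoverage of $C_{\mathrm{dr}}$ to the expectation of the weighted miscoverage indicator $\sum_{i=1}^{\numborac} p_i(\xi)\mathds{1}\{Y\notin C_i^\alpha(X)\}$ and conclude with Markov's inequality at threshold $1/2$. The only cosmetic difference is that you obtain the bound $\esp\big[\sum_i p_i(\xi)\mathds{1}\{Y\notin C_i^\alpha(X)\}\big]\le \alpha e^\eta+\tau$ by identifying this expectation with $\prob\{Y\notin C^{\alpha}_{\hatSxiepsilon}(X)\}$ (tower property) and citing \Cref{cor:stable:CP:selection} as a black box, whereas the paper re-derives it directly from the stability certificate $p_i(\xi)\le e^\eta b_i+s_i$, $\sum_i s_i\le\tau$; both yield the same constant.
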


We highlight that while the set $C_{\mathrm{dr}}(\cdot, \xi)$ is still random with respect to the randomness of $\{C_i^\alpha(\cdot)\}_{i=1}^K$, the derandomization here refers to the fact that the stable selection process does not introduce additional randomness due to $\eps$.

\subsubsection{Conditional Coverage}\label{app:conditional_coverage_details}
A stronger guarantee than marginal coverage \eqref{eq:basic:CP:guarantee} is conditional coverage. Let $G:\xcal\rightarrow\gcal$ be a function that maps an input $X$ to a group attribute $G(X) \in \gcal$. A conformal predictor $C(X)$ is said to satisfy $(1-\alpha)$ conditional coverage with respect to $G$ if,
\begin{equation}
\label{eq:conditional:coverage:def}
    \prob\ens{Y \in C(X) | G(X)} \geq 1 - \alpha.
\end{equation}
This ensures that the coverage guarantee holds not just on average over all $X$, but also when restricted to specific subpopulations defined by $G$.  An example is the case where $\cal G$ is finite, which then corresponds to Group-Conditional validity~\citep{gibbs2025conformal}.  Such guarantees are crucial for added reliability in many applications.

Our stability-based selection framework can be extended to preserve conditional coverage. Suppose we have $\numborac$ conformal predictors $\{C_i^{\alpha}(X)\}_{i=1}^{\numborac}$, each satisfying $(1-\alpha)$ conditional coverage with respect to $G$. That is, for each $i \in [\numborac]$,
\begin{equation}
\label{eq:initial:conditional:coverage}
    \prob\ens{Y \in C_i^{\alpha}(X) | G(X)} \geq 1 - \alpha.
\end{equation}
% Let $\hatSxiepsilon$ be an $(\eta,\tau)$-stable selection algorithm for choosing among these predictors, as in \Cref{subsec:algorithmic:stability}. $(\eta,\tau)$-stability implies that there exists a fixed distribution $S_0$ over $[\numborac]$ such that for any input $\xi$,
% $ \hatSxiepsilon \approx^{|\xi}_{\eta,\tau} S_0 $.
% \begin{equation}
% \label{eq:stability:expectation:ineq_app}
%     \mathbb{E}_\eps[\phi(\hatS(\xi,\eps)) \mid \xi] \leq e^\eta \mathbb{E}_{S_0}[\phi(S_0)] + \tau.
% \end{equation}

We can now state the conditional coverage guarantee for the selected set.

\medskip

\begin{proposition}[Conditionally valid stable selection]
\label{prop:stable:CP:selection:conditional}
Assume that each conformal predictor $C_i^{\alpha}(X)$ satisfies $(1-\alpha)$ conditional coverage  with respect to $G$, for all $i \in [\numborac]$. If $\hatS:\Xi\times \E \to [\numborac]$ is an $(\eta,\tau)$-stable selection algorithm (with $\xi = [\lambda(C_1^{\alpha}(X)),\ldots, \lambda(C_\numborac^{\alpha}(X))]$), then it holds that
\begin{align}
\label{eq:stable:CP:selection:conditional}
    \prob\ens{Y\in C_{\hatSxiepsilon}^{\alpha}(X) \mid G(X)} \geq 1-(\alpha e^{\eta}+\tau).
\end{align}
\end{proposition}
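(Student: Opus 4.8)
The plan is to reduce Proposition~\ref{prop:stable:CP:selection:conditional} to \Cref{cor:stable:CP:selection} by conditioning on $G(X)$ throughout. The key observation is that every ingredient used to prove the marginal statement survives conditioning on $\sigma(G(X))$: indistinguishability has an explicitly conditional version (\Cref{def:indistinguishability}), and the stability in \Cref{def:stability} was required to hold $\pcal$-almost surely on $\xi$, hence in particular almost surely conditionally on any sub-$\sigma$-algebra. So I would first fix a value $g$ in the range of $G$, work under the conditional law $\prob(\cdot \mid G(X) = g)$, and re-run the argument of \Cref{thm:stable:selection} (applied with $\zeta = Y$ and $\mathrm{CI}_s^\alpha = C_s^\alpha(X)$) with that conditional probability replacing $\prob$.

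Concretely, the steps are as follows. First, note that assumption~\eqref{eq:initial:conditional:coverage} gives $\prob\{Y \notin C_i^\alpha(X) \mid G(X)\} \le \alpha$ almost surely, for every $i \in [\numborac]$; this is exactly hypothesis~\eqref{eq:confidence:interval:beta} but with the conditional probability $\prob(\cdot \mid G(X))$ in place of $\prob$. Second, I would introduce the reference variable $S_0$ from the definition of $(\eta,\tau)$-stability (recall $\nu = 0$ here, so $\hatS(\xi,\eps) \approx^{|\xi}_{\eta,\tau} S_0$ holds for $\pcal$-almost every $\xi$), and decompose the miscoverage event over the value of the selected index, writing $\prob\{Y \notin C_{\hatS(\xi,\eps)}^\alpha(X) \mid G(X)\} = \sum_{i} \prob\{\hatS(\xi,\eps) = i,\ Y \notin C_i^\alpha(X) \mid G(X)\}$. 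Third, using the conditional indistinguishability of $\hatS(\xi,\eps)$ from $S_0$ — valid almost surely given $\xi$, hence a fortiori valid after a further conditioning that refines on $G(X)$ — I would bound $\prob\{\hatS(\xi,\eps) = i \mid \xi, G(X)\} \le e^\eta \prob\{S_0 = i \mid \xi, G(X)\} + s_i$ with $\sum_i s_i \le \tau$, exactly as in the proof of \Cref{thm:stable:selection}; then multiply by the indicator that $Y \notin C_i^\alpha(X)$ (which is $\xi$- and $G(X)$-measurable, or at worst can be handled by tower-rule manipulations identical to the marginal case), sum over $i$, and use $\sum_i \prob\{S_0 = i \mid \cdots, Y \notin C_i^\alpha(X)\} \le \alpha$ to obtain the conditional bound $\alpha e^\eta + \tau$. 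Finally, since the bound holds for $\prob(\cdot \mid G(X) = g)$-a.e.\ $g$, it holds conditionally on $G(X)$, which is~\eqref{eq:stable:CP:selection:conditional}.

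In fact the cleanest write-up is probably to observe that the entire proof of \Cref{thm:stable:selection} (equivalently \Cref{cor:stable:CP:selection}) never uses anything about the probability measure beyond: (i) the per-index coverage bound~\eqref{eq:confidence:interval:beta}, and (ii) the a.s.-on-$\xi$ conditional indistinguishability of $\hatS(\xi,\eps)$ from $S_0$. Both (i) and (ii) remain true verbatim when $\prob$ is replaced by the regular conditional probability $\prob(\cdot \mid G(X))$: (i) is the hypothesis~\eqref{eq:initial:conditional:coverage}, and (ii) is inherited because "$\pcal$-a.s.\ on $\xi$" implies "a.s.\ on $\xi$ conditionally on $G(X)$" (the conditioning $\sigma$-algebra only shrinks the relevant null sets). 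Hence Proposition~\ref{prop:stable:CP:selection:conditional} follows by applying \Cref{cor:stable:CP:selection} "inside" the conditioning on $G(X)$.

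\textbf{Main obstacle.} The only delicate point is measure-theoretic bookkeeping: making sure that conditioning simultaneously on $\xi$ and on $G(X)$ is well-defined (regular conditional probabilities, e.g.\ assuming the spaces are standard Borel) and that the conditional indistinguishability statement — originally phrased "almost surely w.r.t.\ $\xi$" — transfers correctly to "almost surely w.r.t.\ $(\xi, G(X))$". I expect this to be routine under the standing (implicit) assumptions, so the substance of the proof is genuinely just re-reading the proof of \Cref{thm:stable:selection} with $\prob(\cdot \mid G(X))$ in place of $\prob$; there is no new probabilistic idea.
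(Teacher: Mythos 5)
Your proposal is correct and follows essentially the same route as the paper: decompose the conditional miscoverage over the selected index, invoke the conditional indistinguishability of $\hatS(\xi,\eps)$ from a reference variable $S_0$ (which survives the extra conditioning because $\eps$ and $S_0$ are independent of everything else), and then apply the per-predictor conditional coverage hypothesis to get the $\alpha e^{\eta}+\tau$ bound. The only cosmetic difference is that the paper conditions all the way down to $X$ (using that $G(X)$ is $X$-measurable and $\eps\ind Y$ given $X$) rather than working under the regular conditional law $\prob(\cdot\mid G(X))$ and re-running Theorem~\ref{thm:stable:selection}; both are the same argument.
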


This result shows that if the original conformal predictors provide conditional coverage, the stable selection mechanism allows for selecting among them while retaining a (correspondingly adjusted) conditional coverage guarantee. The required level for the initial predictors would be $1 - (\alpha - \tau)e^{-\eta}$ to achieve $1-\alpha$ conditional coverage post-selection.

% ---------------------------------------------------------------
% Next, we extend the method to the online case.

\section{Extension to Online Conformal Prediction}
\label{sec:online:CP}

Next, we show how the framework based on algorithmic stability introduced above extends naturally to the online setting.  Consider a collection of $\numborac$ online conformal prediction algorithms that, at each time step $t \in \mathbb{N}$, produce $\numborac$ prediction sets $\{C_i^{(t)}(X_t)\}_{i=1}^{\numborac}$ for the label $Y_t$. As noted in Section~\ref{sec:preliminaries:CP}, each prediction set depends on the entire history $\{(X_{i}, Y_i)\}_{i=1}^{t-1}$. Similarly to Section~\ref{subsec:stability:application:conformal}, we define $\xi_t\coloneqq [\lambda({C_1^{(t)}(X_t)}),\ldots, \lambda({C_\numborac^{(t)}(X_t)})],$
where $\lambda(C_i^{(t)}(X_t))$ denotes the size of the set $C_i^{(t)}(X_t)$, for each $i \in [\numborac]$. The following corollary specializes Theorem~\ref{thm:stable:selection} to this scenario, showing that an adjusted coverage guarantee can be obtained if a stable selection is applied at each time step $t$. 

\medskip

\begin{corollary}[Smallest online conformal set selection]
\label{cor:stable:online:CP}
At each time $t \in \mathbb N$, let $\hatS(\xi_t,\eps_t)$ be  a $(\eta,\tau)$-stable selection algorithm (e.g., for approximating $\argmin_{i \in [\numborac]} \lambda(C_i^{(t)}(X_t))$). Assume each $C_i^{(t)}(\cdot)$ satisfies the guarantee \eqref{eq:basic:onlineCP:guarantee}. Moreover, let the elements of the sequence $(\eps_t)_{t \in \mathbb N}$ be independent. Then,
\begin{equation}
\label{eq:stable:online:CP}
    \liminf_{T \to \infty}\, \frac{1}{T} \sum_{t=1}^T \prob \left\{ Y_t\! \in\! C_{\hatS(\xi_t, \eps_t\!)}^{(t)} \!(X_t)\right\} \geq 1 - \alpha e^\eta -\tau.
\end{equation}
\end{corollary}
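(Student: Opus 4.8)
The plan is to reduce \Cref{cor:stable:online:CP} to a pointwise application of \Cref{thm:stable:selection} at each time step $t$, followed by a Cesàro-averaging argument that preserves the $\liminf$. The key observation is that \Cref{thm:stable:selection} is a purely ``single-shot'' statement: it takes a family of confidence sets $\{\mathrm{CI}_s^\alpha\}_{s\in\scal}$ each satisfying an $\alpha$-level miscoverage bound, together with an $(\eta,\tau,\nu)$-stable selection rule, and outputs the inflated bound $\alpha e^\eta + \tau + \nu$ for the selected set. So the strategy is: (i) fix $t$ and identify the right instantiation of the objects in \Cref{thm:stable:selection}; (ii) derive the per-step miscoverage bound $\prob\{Y_t \notin C_{\hatS(\xi_t,\eps_t)}^{(t)}(X_t)\} \leq \alpha e^\eta + \tau$; (iii) average over $t=1,\ldots,T$ and take $\liminf_{T\to\infty}$.

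For step (i), at time $t$ I would set $\zeta = Y_t$, let $\scal = [\numborac]$, let the confidence intervals be $\mathrm{CI}_i^\alpha = C_i^{(t)}(X_t)$ for $i\in[\numborac]$, and take $\xi = \xi_t = [\lambda(C_1^{(t)}(X_t)),\ldots,\lambda(C_\numborac^{(t)}(X_t))]$, with the inner randomness being $\eps_t$. The subtlety here, compared to the batch \Cref{cor:stable:CP:selection}, is that the ``for all $s$'' miscoverage hypothesis $\prob\{\zeta\notin\mathrm{CI}_s^\alpha\}\leq\alpha$ of \Cref{thm:stable:selection} must be extracted from the online guarantee \eqref{eq:basic:onlineCP:guarantee}, which is only an asymptotic-average statement, not a per-step one. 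I expect this to be the main obstacle: the cleanest fix is to observe that \Cref{thm:stable:selection} actually only uses the hypothesis to bound $\prob\{S\in\mathcal O\}\leq e^\eta\prob\{S_0\in\mathcal O\} + \tau$ applied with $\mathcal O$ the (random, $\xi$-measurable) set of indices $s$ with $\zeta\notin\mathrm{CI}_s^\alpha$; rerunning that argument at each $t$ with $S = \hatS(\xi_t,\eps_t)$, $S_0$ the reference r.v., and conditioning appropriately on the history $\mathcal H_{t-1} = \{(X_i,Y_i)\}_{i=1}^{t-1}$, gives
\[
\prob\{Y_t\notin C_{\hatS(\xi_t,\eps_t)}^{(t)}(X_t) \mid \mathcal H_{t-1}\} \leq e^\eta\, \prob\{Y_t\notin C_{S_0}^{(t)}(X_t)\mid\mathcal H_{t-1}\} + \tau + \nu,
\]
where $\nu=0$ by the distribution-free ($\xi$-almost-sure) stability assumed in \Cref{subsec:stability:application:conformal}; here the independence of $(\eps_t)_{t\in\mathbb N}$ is what lets me treat $\eps_t$ as fresh randomness independent of $\mathcal H_{t-1}$ and of the $C_i^{(t)}$, so that the conditional stability inequality applies verbatim. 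Taking expectations and using that the single-step coverage hypothesis has to hold for the reference index $S_0$ as well requires a small amount of care — in the batch proof one uses $\prob\{\zeta\notin\mathrm{CI}_{S_0}^\alpha\}\leq\alpha$ with $S_0\ind\zeta$ or more precisely the tower property; in the online case the analogue is $\esp[\mathds 1\{Y_t\notin C_{S_0}^{(t)}(X_t)\}]$, which I would bound using the per-$i$ structure so that summing over $t$ and invoking \eqref{eq:basic:onlineCP:guarantee} for the relevant $C_i^{(t)}$ closes the loop.

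Concretely, for step (ii)–(iii): write $a_t := \prob\{Y_t\in C_{\hatS(\xi_t,\eps_t)}^{(t)}(X_t)\}$ and $c_{i,t} := \mathds 1\{Y_t\in C_i^{(t)}(X_t)\}$. The stability argument above yields, after taking total expectations, $1 - a_t \leq e^\eta\,\esp\bigl[\sum_{i} \pi_i(\xi_t)\mathds 1\{Y_t\notin C_i^{(t)}(X_t)\}\bigr]$-type control; the decisive simplification is that the reference variable $S_0$ in \Cref{def:stability} can be taken to have a \emph{fixed} distribution not depending on $\xi_t$, so that $\esp[\mathds 1\{Y_t\notin C_{S_0}^{(t)}(X_t)\}] \leq \max_i \esp[\mathds 1\{Y_t\notin C_i^{(t)}(X_t)\}]$ and hence $\frac1T\sum_{t=1}^T(1-a_t) \leq e^\eta\,\frac1T\sum_{t=1}^T \max_i\prob\{Y_t\notin C_i^{(t)}(X_t)\} + \tau$. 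Then $\liminf_{T\to\infty}\frac1T\sum_{t=1}^T a_t \geq 1 - e^\eta\limsup_{T\to\infty}\frac1T\sum_t\max_i\prob\{\cdots\} - \tau$, and \eqref{eq:basic:onlineCP:guarantee} applied to each online predictor bounds $\limsup_{T\to\infty}\frac1T\sum_{t=1}^T \mathds 1\{Y_t\notin C_i^{(t)}(X_t)\} \leq \alpha$ — note that \eqref{eq:basic:onlineCP:guarantee} is a statement about the empirical average of indicators, so to pass to $\prob\{\cdots\}$ one either assumes the guarantee in expected form or uses a bounded-convergence / Fatou step; I would state the corollary's hypothesis as the probabilistic version $\liminf_T \frac1T\sum_t \prob\{Y_t\in C_i^{(t)}(X_t)\}\geq 1-\alpha$ to match the conclusion \eqref{eq:stable:online:CP}, which is itself phrased in terms of probabilities. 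Chaining these inequalities gives exactly $1 - \alpha e^\eta - \tau$, completing the proof. The only genuinely delicate point, as flagged, is the interchange between the per-step stability inequality (which needs a per-step coverage input for the reference index) and the inherently asymptotic online guarantee — I would handle it by keeping everything under $\frac1T\sum_t$ until the very end and invoking superadditivity of $\liminf$ and subadditivity of $\limsup$, never needing a per-step coverage bound in isolation.
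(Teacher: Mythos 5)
Your overall architecture --- a per-step stability inequality followed by Ces\`aro averaging with sub/superadditivity of $\limsup$/$\liminf$ --- matches the paper's proof, and you correctly identified the central difficulty: \eqref{eq:basic:onlineCP:guarantee} is only asymptotic, so no per-step coverage bound is available for the reference index. However, your resolution of that difficulty has a genuine gap. You bound the reference miscoverage by $\prob\{Y_t\notin C_{S_0}^{(t)}(X_t)\}\leq\max_i\prob\{Y_t\notin C_i^{(t)}(X_t)\}$ and then need $\limsup_{T}\frac{1}{T}\sum_{t}\max_i q_{i,t}\leq\alpha$, where $q_{i,t}$ denotes the miscoverage of predictor $i$ at time $t$. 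This fails: \eqref{eq:basic:onlineCP:guarantee} controls the time-average of each \emph{fixed} predictor's miscoverage, but the time-average of the pointwise maximum over $i$ can be as large as $\min(\numborac\alpha,1)$ --- e.g.\ with $\numborac=2$ and the two predictors miscovering on alternating time steps, each has asymptotic miscoverage $1/2$ while the maximum equals $1$ at every $t$. So your chain of inequalities does not deliver $1-\alpha e^{\eta}-\tau$.

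The paper avoids this by never passing to the maximum. Stability yields a single prior $b\in\Delta^{\numborac-1}$ (the law of the reference $S_0$, independent of $\xi_t$ and of $t$) with $p_i(\xi_t)\leq e^{\eta}b_i+s_i$ and $\sum_i s_i\leq\tau$, hence $\prob\{Y_t\notin C_{\hatS(\xi_t,\eps_t)}^{(t)}(X_t)\}=\sum_i p_i(\xi_t)\one\{Y_t\notin C_i^{(t)}(X_t)\}\leq e^{\eta}\sum_i b_i\one\{Y_t\notin C_i^{(t)}(X_t)\}+\tau$; note that only $\eps_t$ is random here, the indicators being adversarial constants, which also disposes of your concern about probabilities versus indicators in the hypothesis. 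Keeping the \emph{fixed} convex combination, one interchanges the sum over $i$ with the time average (Fubini), uses subadditivity of $\limsup$, and applies \eqref{eq:basic:onlineCP:guarantee} separately to each $i$: $\limsup_{T}\frac{1}{T}\sum_{t}\sum_i b_i\one\{Y_t\notin C_i^{(t)}(X_t)\}\leq\sum_i b_i\,\limsup_{T}\frac{1}{T}\sum_{t}\one\{Y_t\notin C_i^{(t)}(X_t)\}\leq\alpha$. The fixed weights $b_i$ are exactly what makes the per-predictor asymptotic guarantee usable; replacing them by a pointwise maximum discards that structure and breaks the proof.
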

In essence, given multiple online conformal prediction algorithms, applying a stable selection mechanism provides a practical and systematic way to combine them. It is worth noting that while the coverage guarantee in \eqref{eq:stable:online:CP} matches the one established for the batch setting in \eqref{eq:stable:CP:selection}, the guarantee in \eqref{eq:stable:online:CP} is achieved only in the long run. This distinction aligns with the nature of online conformal prediction, as discussed in Section~\ref{sec:preliminaries:CP}. Finally, we emphasize that, unlike \eqref{eq:basic:onlineCP:guarantee}, which ensures coverage for the empirical average
$\frac{1}{T}\sum_{t=1}^T \mathds{1} \left\{Y_t \in C^{(t)} \left(X_t \right) \right\}$, the guarantee in \eqref{eq:stable:online:CP} is in probability. However, the only source of randomness in this setting is the selection noise $\varepsilon_t$ in the algorithm $\hatS(\xi_t, \varepsilon_t)$, as all other quantities can be treated as adversarial or fixed.

% ---------------------------------------------------------------

\subsection{Adaptive Conformal Online Model Aggregation.}
\label{subsec:coma}

Conformal Online Model Aggregation (COMA) \citep{gasparin2024conformal} extends online conformal prediction by addressing the challenge of model aggregation. It combines prediction sets from multiple algorithms through a voting mechanism, where weights are dynamically adjusted over time based on past performance. Formally, at each time step $t$, COMA assigns weights $w^{(t)} = [w_1^{(t)}, \dots, w_\numborac^{(t)}] \in \Delta^{\numborac-1}$, which reflect the relative importance of each of the $\numborac$ underlying conformal predictors according to the following rule 
% \begin{align*}
  $  w_i^{(t)} \propto \exp\big( - \gamma_{t-1} \sum_{j=1}^{t-1} \, \lambda\big( C_i^{(j)}(X_j) \big) \big),$
% \end{align*}
where $\gamma_t$ is the adaptive learning rate from the AdaHedge algorithm \citep{de2014follow}, an adaptive version of the Hedge algorithm \citep{freund1997decision}. COMA then outputs the aggregated prediction set  $C^{(t)} \coloneqq \big\{ y \in \mathcal{Y} \big|\, \sum_{i=1}^{\numborac} w_i^{(t)} \mathds{1} \{ y \in C_i^{(t)}(X_t) \} \geq \frac{1}{2} \big\},$
% \end{align*}
which can be interpreted as the aggregated set obtained by selecting the $i$-th conformal set with probability $w_i^{(t)}$.

\textbf{Non-adaptiveness of {COMA}.} Crucially, at time $t$, the COMA framework assigns weights to the $\numborac$ conformal algorithms using only the observations up to time $t-1$, \textit{without} access to $X_t$ or the prediction sets $\{C_i^{(t)}(X_t)\}_{i=1}^{\numborac}$. Furthermore, due to its AdaHedge formulation, COMA optimizes the weights on average over time, without adapting to each individual $X_t$. 

\begin{algorithm}[t]
   \caption{Adaptive COMA (AdaCOMA)}
   \label{alg:adaptive_COMA}
\begin{algorithmic}
   \STATE \textbf{Input:} $\numborac$ conformal algorithms $\{C_i^{(t)}\}_{i=1}^{\numborac}$, stability parameter $(\eta,\tau)$, initial weights $w^{(1)} = (1/k, \ldots, 1/k)$.
   \STATE \textbf{For: }{$t = 1, 2, \ldots$}
  \STATE Compute $w^{(t)}$ using COMA. 
  \STATE Compute $\pstar{(w^{(t)})}{\xi_t} \in \Delta^{\numborac-1}$ using \ref{eq:MinSE}  with $b=w^{(t)}$ and parameters $(\eta,\tau)$.
    \STATE \textbf{Output:} Any of the following two options:
    \STATE {Option 1}: Combined set $C_{\mathrm{comb}}^{(t)}(X_t)$ equal to $\ens{y\in \ycal\big|\sum_{i=1}^{\numborac} \pstari{w^{(t)}}{\xi_t}   \one\ens{y\in C_i^{(t)}(X_t)}\geq \frac{1}{2}}$.
    \STATE {Option 2}: Combined predictor leading to  $C_{\hatS_{(\xi_t,\eps_t)}}^{(t)}(X_t)$, with $\prob\ens{\hatS_{(\xi_t,\eps_t)} = i |\xi_t} = \pstari{w^{(t)}}{\xi_t} $.
\end{algorithmic}
\end{algorithm}

\textbf{{AdaCOMA}.} To achieve the best of both worlds,  we incorporate COMA into our stable selection algorithm. Specifically, at iteration~$t$, we use {COMA}'s weights $w^{(t)}$ as the prior for a stable selection mechanism, which selects after observing both $X_t$ and the sets $\{C_i^{(t)}(X_t)\}_{i=1}^{\numborac}$, allowing for pointwise adaptability. The combined procedure, termed {AdaCOMA}, is detailed in Algorithm~\ref{alg:adaptive_COMA}. COMA does not have assumption-free coverage guarantees for a fixed target level. However, letting
\begin{align}
\label{eq:COMA:beta_t}
    \beta_t \coloneqq \mathbb E \bigg[{\sum_{i=1}^{\numborac} w_i^{(t)}\one\ens{Y_t \notin C_i^{(t)}(X_t)}}\bigg],
\end{align}
\citet{gasparin2024conformal} show that the following holds
\begin{align}
\label{eq:guarantee:coma}
    \prob\ens{Y_t \notin  C^{(t)}} \leq 2 \beta_t.
\end{align}
In the remainder of this section, we analyze the coverage guarantees of AdaCOMA in comparison to the bound in~\eqref{eq:guarantee:coma} for COMA. Detailed bounds on $\beta_t$ under additional assumptions, are given in \citet{gasparin2024conformal}.

\medskip

\begin{proposition}[Adaptive COMA] 
\label{prop:adaptive:coma}
Consider Algorithm~\ref{alg:adaptive_COMA}, and let $\beta_t$ be defined as in \eqref{eq:COMA:beta_t}. Then, 
\begin{itemize}
    \item[(i)] the set $C_{\mathrm{comb}}^{(t)}(X_t)$ satisfies
% \begin{align}
% \label{prop:adaptive:coma:C_comb}
 $   \prob\ens{Y_t \in  C^{(t) }_{\mathrm{comb}}(X_t)} \geq 1 - 2\open{\beta_t e^{\eta} +  \tau}$;
% \end{align}
\item[(ii)] the set $C_{\hatS_{(\xi_t,\eps_t)}}^{(t)}(X_t)$ satisfies $\prob\ens{Y_t\in C_{\hatS_{(\xi_t,\eps_t)}}^{(t)} (X_t)} \geq 1 - \beta_t e^{\eta} - \tau$.
\end{itemize}
\end{proposition}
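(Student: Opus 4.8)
\textbf{Proof proposal for Proposition~\ref{prop:adaptive:coma}.}

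The plan is to reduce both parts to the machinery already developed for the batch setting, treating time step $t$ in isolation with everything prior to $t$ conditioned upon (so that the weights $w^{(t)}$, the sets $C_i^{(t)}(X_t)$, and the vector $\xi_t$ are all fixed given the history, and the only fresh randomness is $\eps_t$ for Option~2). The conceptual bridge is that COMA's per-step miscoverage surrogate $\beta_t$ plays exactly the role that $\alpha$ plays in \Cref{thm:stable:selection} and \Cref{cor:stable:CP:selection}, but with respect to the \emph{prior} $b = w^{(t)}$ rather than a uniform reference; so the first thing I would establish is the analogue of \eqref{eq:confidence:interval:beta} adapted to this weighted reference.

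For part~(ii): by \Cref{lem:stability:MSE}, the selection rule $\hatS(\xi_t,\eps_t)$ with $\prob\{\hatS(\xi_t,\eps_t)=i\mid \xi_t\} = \pstari{w^{(t)}}{\xi_t}$ is $(\eta,\tau)$-stable with respect to the distribution of $\xi_t$ (conditionally on the history). The reference random variable $S_0$ in \Cref{def:stability} is precisely the categorical variable with law $w^{(t)}$. Now I would run the argument of \Cref{thm:stable:selection} conditionally on the history: writing the miscoverage event as $\{Y_t \notin C_{\hatS(\xi_t,\eps_t)}^{(t)}(X_t)\}$ and using the indistinguishability $\hatS(\xi_t,\eps_t) \approx^{|\xi_t}_{\eta,\tau} S_0$ together with a tower-rule/Fubini step, the miscoverage is bounded by $e^\eta \cdot \prob\{Y_t \notin C_{S_0}^{(t)}(X_t)\} + \tau$, where selecting index $i$ under $S_0$ happens with probability $w_i^{(t)}$. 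But $\prob\{Y_t \notin C_{S_0}^{(t)}(X_t)\} = \esp[\sum_i w_i^{(t)} \one\{Y_t \notin C_i^{(t)}(X_t)\}] = \beta_t$ by definition \eqref{eq:COMA:beta_t}. This gives $\prob\{Y_t \notin C_{\hatS(\xi_t,\eps_t)}^{(t)}(X_t)\} \leq \beta_t e^\eta + \tau$, i.e.\ part~(ii).

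For part~(i): the derandomized set $C_{\mathrm{comb}}^{(t)}(X_t) = \{y: \sum_i \pstari{w^{(t)}}{\xi_t}\one\{y\in C_i^{(t)}(X_t)\} \geq 1/2\}$ is obtained from the $(\eta,\tau)$-stable selection of part~(ii) exactly by the majority-vote derandomization of \Cref{prop:derandomization}. So I would either invoke \Cref{prop:derandomization} directly (with $\alpha$ there replaced by the effective per-step miscoverage $\beta_t$, which is legitimate because \Cref{prop:derandomization} only uses the per-index bound $\prob\{Y_t\notin C_i^{(t)}(X_t)\mid \text{history}\}$ through the stable-selection conclusion), or reprove its one-line argument: if $Y_t \notin C_{\mathrm{comb}}^{(t)}(X_t)$ then $\sum_i p_i(\xi_t)\one\{Y_t \in C_i^{(t)}(X_t)\} < 1/2$, so $\sum_i p_i(\xi_t)\one\{Y_t\notin C_i^{(t)}(X_t)\} > 1/2$, and taking expectations and applying Markov together with the part~(ii) bound on $\esp[\sum_i p_i(\xi_t)\one\{Y_t\notin C_i^{(t)}(X_t)\}] = \prob\{Y_t\notin C_{\hatS}^{(t)}(X_t)\} \leq \beta_t e^\eta + \tau$ yields the factor-$2$ loss: $\prob\{Y_t\notin C_{\mathrm{comb}}^{(t)}(X_t)\} \leq 2(\beta_t e^\eta + \tau)$.

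The main obstacle is bookkeeping rather than mathematical depth: one must be careful that the stability statements of \Cref{lem:stability:MSE} and \Cref{thm:stable:selection} are applied \emph{conditionally on the entire past} $\{(X_j,Y_j)\}_{j<t}$, since $w^{(t)}$ is history-dependent and hence random; all probabilities and the expectation defining $\beta_t$ must be interpreted consistently (either all conditional on the history, then integrated, or all marginal with the tower rule made explicit), and one should check that $\pstar{w^{(t)}}{\xi_t}$ remains a valid $(\eta,\tau)$-stable mechanism with reference $w^{(t)}$ even though $w^{(t)}$ is itself random — which holds because \Cref{lem:stability:MSE} is a pointwise (almost-sure in $\xi_t$) statement for any fixed prior $b$, applied here with $b=w^{(t)}(\text{history})$. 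A secondary point is confirming that $\beta_t$ as defined is exactly the miscoverage of the $w^{(t)}$-reference selection, so that no gap opens between the ``surrogate'' and the quantity the stability bound actually controls; this is immediate from \eqref{eq:COMA:beta_t} but worth stating.
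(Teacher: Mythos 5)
Your proposal is correct and follows essentially the same route as the paper: part (i) is the Markov/majority-vote argument applied to $\sum_i \pstari{w^{(t)}}{\xi_t}\one\{Y_t\notin C_i^{(t)}(X_t)\}$, and the key bound $\expec[\sum_i \pstari{w^{(t)}}{\xi_t}\one\{Y_t\notin C_i^{(t)}(X_t)\}]\leq e^\eta\beta_t+\tau$ comes from the \ref{eq:MinSE} constraints $\pstari{w^{(t)}}{\xi_t}\leq e^\eta w_i^{(t)}+s_i$, $\sum_i s_i\leq\tau$ together with the definition of $\beta_t$. The only cosmetic difference is that you route part (ii) through the abstract indistinguishability machinery of \Cref{thm:stable:selection}, whereas the paper applies the LP constraints pointwise and takes expectations directly, which sidesteps the bookkeeping about the history-dependent reference $w^{(t)}$ that you correctly flag as the delicate point.
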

Proposition~\ref{prop:adaptive:coma} demonstrates that AdaCOMA inherits the flexibility of COMA while improving its adaptability to current observations through the stable selection mechanism.

% ---------------------------------------------------------------
% ------------------------- SECTION 5 ---------------------------
% ---------------------------------------------------------------

\section{Post-selection Calibration in Split Conformal Prediction}
\label{sec:rank_calibration}

The coverage bounds derived from our stability-based approach (e.g., \Cref{cor:stable:CP:selection}) are distribution-free and hold under minimal assumptions, allowing them to extend to even the adversarial online case. While potentially tight in worst-case scenarios, these bounds can be conservative when additional structure is available, particularly in the batch setting. By leveraging the inherent rank structure of the split conformal method, this section develops a recalibration procedure specifically for split conformal prediction, aiming to achieve tight finite-sample coverage guarantees after selection.

We operate within the standard split conformal setup introduced in \Cref{sec:preliminaries:CP}, with a calibration dataset $\mathcal{D}_{\text{cal}} = \ens{(X_i, Y_i)}_{i=1}^m$ and a test point $(X, Y)$. We assume the sequence of $m+1$ data points $\ens{(X_1, Y_1), \dots, (X_{m}, Y_{m}), (X,Y)}$ to be exchangeable. We consider $\numborac$ base predictors $f_1, \dots, f_\numborac$ and corresponding non-conformity score functions $s_1, \dots, s_\numborac$. For each base predictor $k \in [\numborac]$ and datapoint $i \in [ m]$,  denote the non-conformity scores as $\order_{k,i} \coloneqq s_k(X_i, Y_i, f_k)$. We use $\order_{k,(r)}$ to denote the $r$-th order statistic of  $\order_{k,1},\ldots, \order_{k,m}$. Finally, for any $k$ and $i$, we denote the rank of a score $\order_{k,i}$ as  $\rank_{k,i} \coloneqq \sum_{j=1}^{m} \mathds{1}\{\order_{k,j} \le \order_{k,i}\}$. Using these definitions, we can parameterize the conformal prediction sets using ranks, i.e., for any rank index $R \in [m]$, we define \[
C_k(X, R) \coloneqq \ens{ y \in \mathcal{Y} : s_k(X, y, f_k) \le \order_{k,(R)}},
\]
which recovers
the classical set $C^\alpha$ recalled in \Cref{sec:preliminaries:CP} for 
$R_\alpha= \lceil(1-\alpha)(m+1)\rceil$, satisfying, for any $k\in [K]$,  $\prob\ens{Y\in C_k(X,R_\alpha)} \geq 1-\alpha$. 
% \begin{gather*} \label{eq:param_calib_cont_unnumbered_cont}
%  C_k(X, R) \coloneqq \ens{ y \in \mathcal{Y} : s_k(X, y, f_k) \le \order_{k,(R)} }, C^+_k(X, R) \coloneqq \ens{ y \in \mathcal{Y} : s_k(X, y, f_k) \le \order^{+}_{k,(R)} }.
% \end{gather*}

% Both set families are monotonic, i.e, $C_k(X, R_1) \subseteq C_k(X, R_2)$ if $R_1 \le R_2$, and similarly for $C^+_k$. The relationship $\order_{k,(R)} \ge \order^{+}_{k,(R)}$ holds for $R \in \{1, \dots, m\}$, implying the set inclusion $C_k^+(X, R) \subseteq C_k(X, R)$. To relate with the standard perspective on split conformal threshold, note that
% \begin{equation*}
% \prob\ens{\rank^{+}_{k,m+1}\leq i'} =\prob\ens{\order^{+}_{k,m+1}\leq\order^{+}_{k,(i')}}\leq \prob\ens{\order^{+}_{k,m+1}\leq \order_{k,(i')}}  = \prob\ens{Y_{m+1}\in C_k(X_{m+1},i')},
% \end{equation*}
% where $\prob\ens{\rank^{+}_{k,m+1}\leq i'}\geq i'/(m+1)$ by exchangeability. The standard split conformal construction can then be recovered by setting $i' = \ceil{(1-\alpha)(m+1)}$.

\paragraph{Calibration After Selection using Effective Ranks.} 
We now consider an arbitrary (stochastic) selection rule $\hatS$. 
% we assume that such selection rule can depend arbitrarily on the selection point $X$ but is independent of the calibration dataset $\dcal_{\mathrm{cal}}$.
Our goal is to determine an \textit{effective rank} $\hat R_\alpha$ such that a similar property holds, but \textit{for the selected interval}, i.e.,  $\prob\{Y\in C_{\hat S (X, \eps)}(X,\hat R_\alpha)\} \geq 1-\alpha$.
To that end, we use a recalibration process after selection, that uses the effective ranks \textit{as meta-scores}. 
% allowing recalibration after selection among split conformal methods using completely different scores.
 For each point $i \in [m]$, we apply the selection rule using its feature vector $X_i$ and \emph{independent} randomness $\eps_i \sim \pcal_\eps$. 
 % Let $k_i \coloneqq \hatS(X_i, \eps_i)$ be the index of the predictor selected for the $i$-th data point.
 % By our assumption, $k_i$ is independent of $\dcal_{\mathrm{cal}}$, although it depends on $X_i$ (and potentially $\eps_i$).
 We now define the effective rank (or the meta-score) for the $i$-th point as: 
 \begin{align*}    
 \hat \rank_i \coloneqq 
 % \rank_{k_i, i} =
 \rank_{\hatS(X_i, \eps_i), i}, 
 \end{align*}
 that is, the rank of the $i$-th point's score calculated using the selected  predictor $\hatS(X_i, \eps_i)$. Subsequently, we define the sequence of effective ranks $\mathcal R \coloneqq (\hat\rank_1, \dots, \hat\rank_{m})$. Using uniform random tiebreaks between equal ranks in $\mathcal R$, for $t\in [m]$, we use $\hat R_{(t)}$ to denote the $t$-th  order statistic of $\mathcal R$. 
 % Finally, we are ready to show our recalibration  result.

\medskip

\begin{theorem}\label{thm:recalibrated_result} Assume that $\shat$ is independent of $\dcal_{\mathrm{cal}}$, i.e. $\shat\open{X,\eps}$ and $\dcal_{\mathrm{cal}}$ are conditionally independent given $X$.
% and $\alpha\in (0,1)$ 
Let $\tau_\alpha=\ceil{(1-\alpha)(m+1)}\leq m$.
% For a test point $(X,Y)$ and randomness $\eps$, let $k= \hatS(X,\eps)$ be the selected conformal predictor. 
Then,  \[  \prob\ens{Y\in C_{\hatS(X,\eps)} \left(X,\hat R_{(\tau_\alpha)} \right)}\geq 1-\alpha .\]
 \end{theorem}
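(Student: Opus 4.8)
The plan is to reduce the claim to a standard exchangeability/rank argument, but carried out for the \emph{meta-scores} $\hat R_i$ rather than the raw non-conformity scores. First I would augment the construction to the test point: set $\hat R_{m+1} \coloneqq \rank_{\hatS(X, \eps), (X,Y)}$, where $\rank$ of the test score is computed against the calibration scores of the \emph{selected} predictor, i.e.\ $\rank_{k, (X,Y)} \coloneqq 1 + \sum_{j=1}^m \one\{\order_{k,j} \le s_k(X,y,f_k)\big|_{y=Y}\}$ — essentially adding $(X,Y)$ to the pool and recording where its selected-predictor score lands. The key observation is that the event $Y \in C_{\hatS(X,\eps)}(X, \hat R_{(\tau_\alpha)})$ is, by definition of $C_k(X,R)$, exactly the event that the test score $s_{\hatS(X,\eps)}(X,Y,f_{\hatS(X,\eps)})$ is at most the $\hat R_{(\tau_\alpha)}$-th order statistic of the selected predictor's calibration scores — and this needs to be rephrased as a statement purely about the rank variable $\hat R_{m+1}$ relative to the $\hat R_{(t)}$'s.

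Second, the core step is to establish that the augmented sequence of pairs $\big((X_1,Y_1,\eps_1),\dots,(X_m,Y_m,\eps_m),(X,Y,\eps)\big)$ is exchangeable, hence the meta-scores $(\hat R_1, \dots, \hat R_m, \hat R_{m+1})$ are exchangeable. Exchangeability of the data points is assumed; the $\eps_i$ are drawn i.i.d.\ and independently of everything (using the hypothesis that $\shat(X,\eps)$ is conditionally independent of $\dcal_{\mathrm{cal}}$ given $X$, applied symmetrically to each point), so appending them preserves exchangeability. Crucially, the map $(x, y, e) \mapsto \rank_{\hatS(x,e), (x,y)}$ that produces meta-score $m+1$, and the analogous leave-one-in maps producing $\hat R_i$, must be shown to be \emph{symmetric} functions of the full augmented sample: each $\hat R_i$ depends on $(X_i,Y_i,\eps_i)$ and, through the rank, on the multiset of the other scores under the predictor $\hatS(X_i,\eps_i)$ — and since the predictors $f_k$ and score functions $s_k$ are fixed (not data-dependent), this is a symmetric function. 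This is where I expect the main subtlety: one has to be careful that $\hat R_{m+1}$ is computed with exactly the same rule as the $\hat R_i$ (rank within the pool of all $m+1$ scores under the selected model, with the same tiebreaking convention), so that after symmetrization the $m+1$ meta-scores are genuinely exchangeable and no special role is played by the test index.

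Third, given exchangeability of $(\hat R_1, \dots, \hat R_{m+1})$ and the uniform random tiebreaks, a standard rank argument gives $\prob\{\hat R_{m+1} \le \hat R_{(\tau_\alpha)}\} \ge \tau_\alpha/(m+1) = \ceil{(1-\alpha)(m+1)}/(m+1) \ge 1-\alpha$, where $\hat R_{(\tau_\alpha)}$ is the $\tau_\alpha$-th order statistic among $\hat R_1,\dots,\hat R_m$ (and one checks that including vs.\ excluding the test meta-score in computing the threshold only helps, via the usual inequality $\hat R_{(\tau_\alpha)} \ge$ the corresponding order statistic when the test point is omitted, or the argument that the rank of $\hat R_{m+1}$ among all $m+1$ values exceeds $\tau_\alpha$ with probability at most $1 - \tau_\alpha/(m+1)$). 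Finally I would translate this rank inequality back: $\hat R_{m+1} \le \hat R_{(\tau_\alpha)}$ means the test point's selected-predictor score is no larger than the $\hat R_{(\tau_\alpha)}$-th smallest calibration score of that same predictor, which is precisely $Y \in C_{\hatS(X,\eps)}(X, \hat R_{(\tau_\alpha)})$, completing the proof. The main obstacle, as noted, is the bookkeeping in step two: making the definition of $\hat R_{m+1}$ and the tiebreaking consistent enough across all $m+1$ indices that the symmetrization goes through cleanly, especially since the selected predictor index $\hatS(X_i,\eps_i)$ differs across points and the ranks are therefore computed in different score systems.
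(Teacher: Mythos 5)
Your plan is essentially the paper's proof: the paper likewise augments the meta-ranks to an exchangeable sequence $\hat R^+_1,\dots,\hat R^+_{m+1}$ (each computed symmetrically over all $m+1$ points under that point's own selected predictor, using the conditional independence of $\hatS$ given $X$ and the i.i.d.\ $\eps_i$ to preserve exchangeability), applies the standard rank bound $\prob\{\hat R^+_{m+1}\le \hat R^+_{(\tau_\alpha)}\}\ge \tau_\alpha/(m+1)$, and then bridges back to the calibration-only threshold via monotonicity of $r\mapsto \order_{k,(r)}$ and a comparison between the augmented and unaugmented order statistics. The bookkeeping you flag as the main subtlety (computing the test meta-rank by the same symmetric rule, and relating $\hat R^+_{(\tau_\alpha)}$ to $\hat R_{(\tau_\alpha)}$) is exactly the step the paper's own proof carries out, so your outline is faithful to the argument.
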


\Cref{thm:recalibrated_result} provides a method to maintain coverage guarantees after selecting among multiple split conformal predictors. The use of effective ranks acts as a meta-score, allowing for the calibration of predictors even if they utilize different non-conformity score functions $s_k(\cdot)$, offering flexibility in model-specific score design. The theorem states that by selecting the appropriate order statistic~$\hat R_{(\tau_\alpha)}$ of these effective ranks, derived using an independent selection rule $\shat$, the conformal set $C_{\hatS(X,\eps)}(X,\hat  R_{(\tau_\alpha)})$ for the chosen predictor $\hatS(X,\eps)$ achieves the desired coverage.

\paragraph{Constructing an Independent $\shat$.}
Theorem~\ref{thm:recalibrated_result} mandates the selection rule $\shat$ to be conditionally independent of the calibration data $\dcal_{\text{cal}}$ given the selection point $X$. This independence plays a critical role in the proof as it ensures that the effective ranks $\hat{R}_1,\ldots, \hat{R}_m$ are exchangeable with the unobserved effective rank of the test point. Consequently, if $\shat$ aims to select the smallest set, it cannot use set sizes derived from $\dcal_{\mathrm{cal}}$-based quantiles due to the induced dependency on the whole of $\dcal_{\mathrm{cal}}$. To ensure independence, we employ an auxiliary dataset $\dcal_{\text{aux}}$, disjoint from and independent of $\dcal_{\text{cal}}$. For each predictor $k \in [\numborac]$ and test point $X$, proxy quantiles $\hat{q}^{\text{aux}}_{\tilde{\alpha}, k}$, computed from $\dcal_{\text{aux}}$ (at a preliminary miscoverage rate $\tilde{\alpha}$), define proxy conformal sets $C^{\text{aux}}_k(X)$ and their corresponding sizes $\lambda(C^{\text{aux}}_k(X))$. The vector of these proxy sizes, $\xi^{\text{aux}}(X) \coloneqq [ \lambda(C^{\text{aux}}_1(X)), \dots, \lambda(C^{\text{aux}}_\numborac(X)) ]$, is then conditionally independent of $\dcal_{\text{cal}}$ given $X$. A selection rule $\shat(X, \eps)$ based solely on $X$ and this $\xi^{\text{aux}}(X)$ (e.g., employing stable mechanisms from \Cref{subsec:stability:application:conformal}, such as \ref{eq:MinSE}) thus satisfies the independence condition. This allows Theorem~\ref{thm:recalibrated_result} to be applied directly for $1-\alpha$ coverage, avoiding the inflation factors inherent in stability-based bounds. The practical effectiveness of such selection hinges on how accurately the proxy information from $\dcal_{\text{aux}}$ reflects the true characteristics of the conformal set sizes derived from the empirical quantiles of non-conformity scores using $\dcal_{\text{cal}}$.

\section{Experiments}
\label{sec:experiments}

To illustrate our approach in practice, we present here two simple experimental setups, one on synthetic and one on real data. We defer online experiments, additional batch experiments, and further experimental details to \Cref{app:additional_experiments}. We compare our approach to \citet{yang2024selection} and \citet{liang2024conformal}, denoted as YK and ModSel, respectively. As \citet{yang2024selection} proposed multiple algorithms, adopting the following naming convention from \citet{liang2024conformal}, we compare against YK-Adjust and YK-Base. YK-Adjust adjusts the underlying conformal predictors to ensure valid coverage after selecting the best-on-average conformal predictor on the calibration dataset. YK-Base simply selects the best-on-average conformal predictor and does not have coverage guarantees. We report the performance of \ref{eq:MinSE} with parameters $(\eta=1,\tau=0)$ and \ref{eq:AdaMinSE} with $\alpha'=0.05$. In addition, we report the performance of Recal, which is based on \ref{eq:AdaMinSE} with $\alpha'=0.02$, followed by the recalibration procedure of \Cref{sec:rank_calibration}. For Recal, the calibration dataset is further split into two blocks to construct $\dcal_{\mathrm{cal}}$ and $\dcal_{\mathrm{aux}}$, so that the selection satisfies the independence requirement of \Cref{thm:recalibrated_result}. For the experiments presented here, we target a miscoverage level of $\alpha=0.1$. Except for YK-Base, all methods guarantee miscoverage $\alpha=0.1$ after selection.

Throughout the experiments, we aim to design a scenario where the performance of individual predictors varies across the input space. As such, using clustering, we split the feature space into $5$ disjoint sets and train each predictor exclusively on a randomly selected subset. We provide additional experiments without such data splitting in \Cref{app:additional_experiments}.

\textbf{Synthetic Regression}. We generate $n$ data points, $\left\{(X_i, Y_i)\right\}_{i=1}^n$, with $X_i \sim \mathcal{N}(0, I_d)$ (before feature space splitting), and the response variable defined as
 $Y_i = \sin(\langle \beta, X_i \rangle) + 0.1 \mathcal{N}(0,1),$
where $\beta$ is the vector $\left\{{1}/{d}\right\}_{i\in[d]}$. The feature dimension is set to $d=10$, and the training data is split into two blocks. In the first block, we train $\numborac$ distinct regression models, $f_1, \ldots, f_\numborac$, using the Kernel Ridge Regression model from scikit-learn \citep{pedregosa2011scikit}. For each model, we randomly sample the kernel function (either linear or radial basis function (RBF)) and the regularization parameter (uniformly chosen between $0.1$ and $1$). For each model $i$, we use the second block of training data to train a random forest model $g_i$ that predicts the absolute residuals $\left|f_i(X) - Y\right|$, enabling us to use the nonconformity score, defined as $s_i(X, Y) = {|f_i(X) - Y|}/{g_i(X)}$. We use $400$ datapoints for the calibration dataset.

\begin{figure}[t]
    \centering
    % \begin{minipage}{\columnwidth}
        % \centering
        \includegraphics[width=\linewidth]{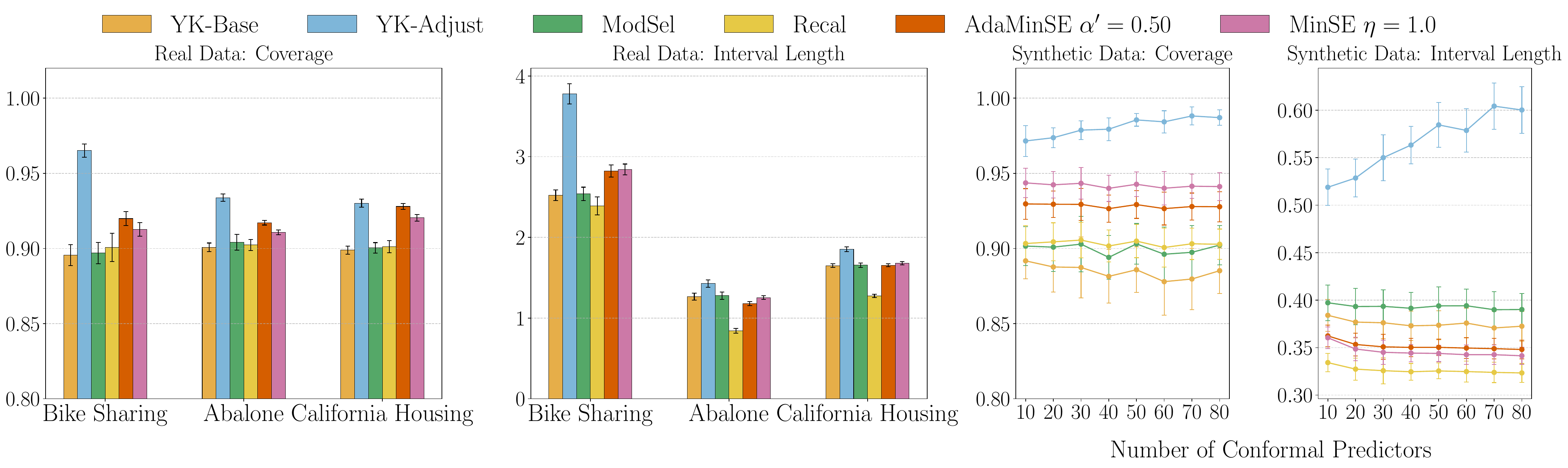}
 
%     \caption{Marginal coverage  and average lengths, for real datasets (Bike Sharing, Abalone, California Housing); as well as synthetic data. For synthetic, the x-axis displays the number of different $\numborac$ regressors used to construct $\numborac$ conformal predictors.
%     % Algorithms compared are\ref{eq:AdaMinSE} algorithm: TP \ref{eq:AdaMinSE} (with a data-dependent prior as outlined in \Cref{subsec:data:dependent:prior}) and UP \ref{eq:AdaMinSE} (with a uniform prior). 
% }
    \caption{Marginal coverage  and average lengths, for real datasets (Bike Sharing, Abalone, California Housing) and synthetic data; Error bars represent twice the standard error of mean estimation using multiple seeds for real datasets and 2 s.d. for the synthetic data.% For synthetic, the x-axis displays the number of different $\numborac$ regressors used to construct $\numborac$ conformal predictors.
}
    \label{fig:synthetic_regression_exp}
    \vspace{-1em}
\end{figure}

\noindent\textbf{Real Datasets}. In this experiment, we aim to model a more typical data analysis scenario. We conduct experiments on three standard regression datasets: Abalone, California Housing, and Bike Sharing \cite{pace1997sparse, UCI_ML_Repo}. For each dataset, leveraging \texttt{scikit-optimize} for hyperpameter tuning \citep{tim_head_2018_1207017},  we used the following \texttt{scikit-learn} models: AdaBoostRegressor,  DecisionTreeRegressor, GradientBoostingRegressor, ElasticNet, RandomForestRegressor, and LinearRegression. We used $80\%$ of the data for training, $10\%$ for calibration, and $10\%$ for testing. Similar to the synthetic experiments, we used the same adaptive score function, with RandomForestRegressor trained to predict the residuals. Then, up to adjusting coverage to ensure post-selection coverage of $\alpha=0.1$, we used the same conformal predictors for all selection methods. We normalized the labels across datasets to keep the size of conformal sets comparable.

Both synthetic and real data results are reported in Figure \ref{fig:synthetic_regression_exp}.  For both experiments, YK-Adjust, MinSE, and AdaMinSE overcover, while Recal and ModSel achieve similar coverage to the target marginal coverage of $0.9$. YK-Base undercovers in the synthetic setting.  Differences emerge in the resulting average interval lengths. Our proposed Recal consistently performs the best on both the synthetic and real data settings. On real datasets, MinSE and AdaMinSE are competivie with baselines on Abalone and California Housing  but produce larger sets on Bike Sharing. Meanwhile, for the synthetic setting, \ref{eq:AdaMinSE} and \ref{eq:MinSE} beat the benchmarks.

\section*{Acknowledgement} 
The work of Aymeric Dieuleveut and Mahmoud Hegazy is supported by  French State aid managed by the Agence Nationale de la Recherche (ANR) under France 2030 program with the reference ANR-23-PEIA-005 (REDEEM project), and ANR-23-IACL-0005, in particular Hi!Paris FLAG chair.
Liviu Aolaritei acknowledges support from the Swiss National Science Foundation through the Postdoc.Mobility Fellowship (grant agreement P500PT\_222215). Additionally, this project was funded by the European Union (ERC-2022-SYG-OCEAN-101071601). Views and opinions expressed are however those of the author(s) only and do not necessarily reflect those of the European Union or the European Research Council Executive Agency. Neither the European Union nor the granting authority can be held responsible for them. This publication is part of the Chair “Markets and Learning”, supported by Air Liquide, BNP PARIBAS ASSET MANAGEMENT Europe, EDF, Orange and SNCF, sponsors of the Inria Foundation.

\bibliographystyle{myabbrvnat} 
\bibliography{ref}

\appendix

\newpage

\tableofcontents

\bigskip\bigskip\bigskip

\section*{Outline of the Appendix}
This appendix provides supplementary material to the main paper. \Cref{app:proofs} provides the proofs for the theoretical results presented in \Cref{sec:smallest:CP:selection} (on stable selection), \Cref{sec:online:CP} (on online conformal prediction), and \Cref{sec:rank_calibration} (concerning post-selection calibration in split conformal prediction). Moreover, \Cref{app:additional_experiments} is dedicated to additional experimental results. These results include further batch experiments under varied settings, online experiments evaluating AdaCOMA, and detailed setup information for these experiments.

\section{Proofs}

\label{app:proofs}
\subsection{Proofs of Section~\ref{sec:smallest:CP:selection}}
\label{app:proofs:sec3}

The proof of Theorem~\ref{thm:stable:selection} requires the following lemma from \citet{zrnic2023post}, which is inspired by the approach in \citet{bassily2016algorithmic}.

\begin{lemma}\citep[Lemma 1]{zrnic2023post}
\label{lemma:tijana:alg:stab}
Let $\hatS: \Xi \times \E \to \mathcal{S}$ be an $(\eta,\tau,\nu)$-stable selection algorithm and ${S}_0$ be the corresponding random variable. Then, $(\xi,\hatSxiepsilon) \approx_{\eta,\tau+\nu} (\xi,{S}_0)$.
\end{lemma}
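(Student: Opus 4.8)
The plan is to verify the single indistinguishability inequality of \Cref{def:indistinguishability} directly for the joint law of $(\xi,\hatSxiepsilon)$ against that of $(\xi,S_0)$. Fix an arbitrary measurable set $\mathcal{O}\subseteq \Xi\times\scal$ and, for each $x\in\Xi$, write its section $\mathcal{O}_x\coloneqq\{s\in\scal:(x,s)\in\mathcal{O}\}$. By the tower property (disintegrating over $\xi$), both joint probabilities decompose as
\begin{align*}
\prob\{(\xi,\hatSxiepsilon)\in\mathcal{O}\} &= \esp_\xi\!\left[\prob\{\hatSxiepsilon\in\mathcal{O}_\xi\mid\xi\}\right],\\
\prob\{(\xi,S_0)\in\mathcal{O}\} &= \esp_\xi\!\left[\prob\{S_0\in\mathcal{O}_\xi\mid\xi\}\right],
\end{align*}
so it suffices to compare the inner conditional probabilities pointwise in $\xi$ and then integrate.

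Next I would introduce the good event $G\coloneqq\{x\in\Xi:\hatS(x,\cdot)\approx^{|\xi=x}_{\eta,\tau}S_0\}$, on which the conditional indistinguishability of \Cref{def:stability} holds; by the stability assumption $\prob\{\xi\in G\}\geq 1-\nu$. Splitting the outer expectation according to whether $\xi\in G$, on the good event I apply the conditional bound with the measurable set $\mathcal{O}_\xi$,
\[
\prob\{\hatSxiepsilon\in\mathcal{O}_\xi\mid\xi\}\leq e^\eta\,\prob\{S_0\in\mathcal{O}_\xi\mid\xi\}+\tau,
\]
while on the complement I bound the conditional probability trivially by $1$. Taking expectations, the bad-event contribution is at most $\prob\{\xi\notin G\}\leq\nu$, the additive constant $\tau$ survives the expectation unchanged, and dropping the indicator $\one\{\xi\in G\}$ only enlarges the $e^\eta$-term. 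Re-assembling via the decomposition above yields
\[
\prob\{(\xi,\hatSxiepsilon)\in\mathcal{O}\}\leq e^\eta\,\prob\{(\xi,S_0)\in\mathcal{O}\}+\tau+\nu,
\]
and since $\mathcal{O}$ was arbitrary this is exactly $(\xi,\hatSxiepsilon)\approx_{\eta,\tau+\nu}(\xi,S_0)$.

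The only delicate points are measure-theoretic rather than conceptual. First, I must ensure the good event $G$ is measurable and that $x\mapsto\prob\{\hatS(x,\cdot)\in\mathcal{O}_x\}$ is a measurable function, so that the conditional probabilities and their expectations are well defined; this follows from standard regular-conditional-probability/disintegration arguments under the implicit assumption that $\Xi$ and $\scal$ are well-behaved (e.g.\ standard Borel) spaces. Second, one must fix the meaning of the joint law of $(\xi,S_0)$ and of $\prob\{S_0\in\cdot\mid\xi\}$; the argument goes through for any coupling, since it only invokes the tower property, but the cleanest reading—consistent with how $S_0$ is subsequently used in \Cref{cor:stable:CP:selection}—is that $S_0$ is drawn from its fixed reference law independently of $\xi$, in which case $\prob\{S_0\in\mathcal{O}_\xi\mid\xi\}$ is just the reference probability of the section $\mathcal{O}_\xi$. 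I expect essentially no analytic obstacle: the substance of the lemma is the clean bookkeeping of $\tau$ (additive, unaffected by the split) and $\nu$ (absorbed through the bad event), making this a routine ``conditional-to-joint'' upgrade of indistinguishability.
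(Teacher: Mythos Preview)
Your argument is correct: disintegrating over $\xi$, splitting on the good event $G=\{\hatS(\xi,\cdot)\approx^{|\xi}_{\eta,\tau}S_0\}$, applying the conditional indistinguishability bound on $G$, and bounding the complement by $\nu$ is exactly the right bookkeeping, and the measure-theoretic caveats you flag are the only places where care is needed. Note, however, that the paper does not actually prove this lemma; it is stated as a citation of \citep[Lemma~1]{zrnic2023post} and used as a black box in the proof of Theorem~\ref{thm:stable:selection}, so there is no in-paper proof to compare against---your argument is essentially the standard one from that reference.
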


% ---------------------------------------------------------------

\begin{proof}[Proof of ~\Cref{thm:stable:selection}] In this proof, similar to analogous results in \citet{zrnic2023post}, a lot of the heavy lifting is done by \Cref{lemma:tijana:alg:stab}. Nonetheless, the selection dependence on the confidence sets themselves introduces some subtleties, making direct application of the steps of \citet{zrnic2023post} non-straightforward. Thus, we take a different starting step by defining a shadow algorithm. 

Let us define $\hatS':\Xi \times \mathcal{Z}\times \E\rightarrow\mathcal{S}$ such that $\hatS'(\xi,\zeta, \eps)=\hatSxiepsilon$, for all $(\xi,\zeta) \in \Xi \times \mathcal{Z}$. Then, if $\hatS$ is $(\eta, \tau, \nu)$-stable with respect to $\mathcal{P}$ on $\Xi$, we also have that $\hatS'$ is $(\eta, \tau, \nu)$-stable with respect to the product distribution $\mathcal{P} \otimes \mathcal{Q}$, where $\mathcal{Q}$ is the distribution of $\zeta$. Combining this with Lemma~\ref{lemma:tijana:alg:stab}, we obtain $(\zeta, \xi, \hatSxiepsilon) \approx_{\eta, \tau+\nu} (\zeta, \xi, S_0)$. Now, defining the event $O_\delta \coloneqq \{(\zeta ,\xi, \hatSxiepsilon) \in \mathcal{Z} \times \Xi \times \mathcal S : \zeta \not\in \mathrm{CI}_{\hatSxiepsilon}^{\delta e^{-\eta}}\}$, we have that
\begin{align*}
    \prob\left\{\zeta \not\in \mathrm{CI}_{\hatSxiepsilon}^{(\delta e^{-\eta})}\right\} &\leq e^\eta \prob\ens{(\zeta ,\xi, S_0)\in O_\delta} +\tau+\nu \\
    &= e^\eta \prob\left\{\zeta \not\in \mathrm{CI}_{S_0}^{\delta e^{-\eta}}\right\} +\tau+\nu
    \leq e^\eta \delta e^{-\eta} +\tau+\nu \leq \delta + \tau + \nu,
\end{align*}
where the first inequality follows from the definition of indistinguishability, and the second inequality follows from the assumption that $\prob\ens{\zeta \notin \mathrm{CI}_s^{\alpha}}\leq \alpha$ holds for all $s\in \mathcal{S}$.% The result now follows from the substitution $\alpha = \delta e^{-\eta}$.
\end{proof}

% ---------------------------------------------------------------

\begin{proof}[Proof of \Cref{cor:stable:CP:selection}]
The result follows immediately from Theorem~\ref{thm:stable:selection} with $Y = \zeta$ and $\nu = 0$.
\end{proof}

% ---------------------------------------------------------------

\begin{proof}[Proof of \Cref{lem:stability:Laplace}]
Let $S_0 = \argmin_{i \in [\numborac]} \eps_i$, with $\eps_i \stackrel{\mathrm{i.i.d.}}{\sim} \mathrm{Lap}\left({1}/{\eta}\right)$. Moreover, note that $\hatSxiepsilon = i$ if and only if $\eps_i \leq \min_{j \neq i} \left\{\eps_j + \lambda\open{\mathrm{C}^\alpha_j(X)} - \lambda\open{\mathrm{C}^\alpha_i(X)}\right\} \leq \min_{j \neq i} \eps_j + 1$, where we used the fact that $\lambda(C_i^{\alpha}(X)) \in [0,1]$. Finally, for all $i\in [\numborac]$
\begin{align*}
    \prob\left\{ \hatSxiepsilon = i\ \bigg | \ \sigma(\xi, \{\eps_j\}_{j \neq i}) \right\} &\leq \prob \left\{ \eps_i \leq \min_{j \neq i} \eps_j + 1 | \sigma(\xi, \{\eps_j\}_{j \neq i})\right\}
    \\& \leq e^\eta \prob \left\{ \eps_i \leq \min_{j \neq i} \eps_j | \sigma(\xi, \{\eps_j\}_{j \neq i})\right\}
    \\& =  e^\eta \prob\left\{ S_0 = i | \sigma(\xi, \{\eps_j\}_{j \neq i})\right\},
\end{align*}
where the second inequality uses the fact that the densities ratio ${p_{\eps_i - 1} }/{ p_{\eps_i}}$ is upper bounded by $e^\eta$. The result now follows by taking expectation on both sides.
\end{proof}

% ---------------------------------------------------------------

\begin{proof}[Proof of Lemma~\ref{lem:stability:exponential:mechanism}]
Let $S_0$ be a uniform r.v. with $\prob\{S_0 = i\} = {1}/{k}$. Then,
\begin{align*}
    \frac{\prob\ens{\hatSxiepsilon=i \bigg | \ \xi  }}{\prob\ens{S_0=i}} &= \frac{k \exp\open{-\eta\lambda\open{\mathrm{C}^\alpha_i(X)} }}{\sum_{i\in [\numborac]}\exp\open{-\eta\lambda\open{\mathrm{C}^\alpha_i(X)}}} 
    \\&\leq \frac{k \exp\open{\eta}}{k \exp\open{-\eta}} = \exp{2\eta},
\end{align*}
where we used the fact that $\lambda(C_i^{\alpha}(X)) \in [0,1]$.
\end{proof}

% ---------------------------------------------------------------

\begin{proof}[Proof of Lemma~\ref{lem:stability:MSE}]
%Let \( \tilde{k} := \lfloor k(1-\tau)/e^\eta \rfloor \), and define a permutation \( \sigma:[\numborac] \to [\numborac] \) ordering the lengths of the intervals, i.e.~such that $\lambda\big(C_{\sigma(1)}^\alpha(X)\big) \leq \lambda\big(C_{\sigma(2)}^\alpha(X)\big) \leq \cdots \leq \lambda\big(C_{\sigma(k)}^\alpha(X)\big)$. Then, 
The optimal solution $\pstar{b}{\xi}$ satisfies $\pstari[{i}]{b}{\xi} \leq {e^\eta} b_i + s_i$, with $\sum_{i=1}^{\numborac} s_i\leq \tau$, for all $i \in k$. Therefore, letting $S_0$ be a r.v. with $\prob\{S_0 = i\} = b_i$. Then, for all $\mathcal S \subseteq [\numborac]$, we have
\begin{align*}
    \prob\ens{\hatSxiepsilon\in \mathcal S} \leq e^\eta \prob\ens{S_0\in \mathcal S} + \tau,
\end{align*}
which concludes the proof.
%\begin{align*}
%    \pstari[{\sigma(i)}]{b}{\xi} = 
%    \begin{cases} 
%    {e^\eta} b_i + \tau & \text{if } i = 1, \\
%    {e^\eta} b_i & \text{if } 2 \leq i \leq \tilde{k}, \\
%    1 - \tau - \sum_{i=1}^{\tilde{k}} e^\eta b_i & \text{if } i = \tilde{k} + 1, \\
%    0 & \text{otherwise.}
%    \end{cases}
%\end{align*}
\end{proof}

% ---------------------------------------------------------------
\begin{proof}[Proof of \Cref{prop:minse_optimality}]
By the assumption that $\acal$ is $(\eta,\tau)$-stable w.r.t.~$\pcal$, Definition \ref{def:stability} guarantees the existence of a r.v.~$S_0$ such that $\acal(\xi, \eps) \approx^{|\xi}_{\eta, \tau} S_0$ holds $\pcal$-almost surely. This implies that for any $G \subseteq [\numborac]$, $\prob_\eps\ens{\acal(\xi,\eps)\in G |\xi } \leq e^{\eta} \prob\ens{S_0 \in G} + \tau$ holds $\pcal$-almost surely.

We set the prior vector $b\in\tri^{\numborac-1}$ as the distribution of $S_0$ by choosing $b_i\coloneqq\prob\ens{S_0=i}$. Let $p^{\acal}_i(\xi) = \prob_\eps\ens{\acal(\xi,\eps)=i|\xi}$. Define $s_i(\xi) = \max\left(0, p^{\acal}_i(\xi) - e^\eta b_i\right)$. Clearly, $s_i(\xi) \ge 0$ and $p^{\acal}_i(\xi) \le e^\eta b_i + s_i(\xi)$ for all $i$. Let $\mathcal{S}^+(\xi) = \{i \mid p^{\acal}_i(\xi) > e^\eta b_i\}$. Then, $\pcal$-almost surely,
\begin{align*}
    \sum_{i=1}^\numborac s_i(\xi) &= \sum_{i \in \mathcal{S}^+(\xi)} ( p^{\acal}_i(\xi) - e^\eta b_i ) 
    = \prob_\eps\ens{\acal(\xi,\eps)\in \mathcal{S}^+(\xi) |\xi } - e^\eta \prob\ens{S_0 \in \mathcal{S}^+(\xi)} \\
    &\leq ( e^{\eta} \prob\ens{S_0 \in \mathcal{S}^+(\xi)} + \tau ) - e^\eta \prob\ens{S_0 \in \mathcal{S}^+(\xi)} = \tau.
\end{align*}
Since $p^{\acal}(\xi)$ is a probability distribution, $p^{\acal}(\xi) \in \Delta^{\numborac-1}$. Thus, $p^{\acal}(\xi)$ satisfies the constraints of the \ref{eq:MinSE} linear program with prior $b$ for $\pcal$-almost surely all $\xi$.

The \ref{eq:MinSE} algorithm finds the solution $\pstar{b}{\xi}$ that minimizes the objective function $\sum_{i=1}^{\numborac} p_i \lambda\open{C_i^\alpha(X)}$ over the set of all feasible distributions satisfying these constraints. Since $p^{\acal}(\xi)$ is a feasible solution $\pcal$-almost surely, its objective value must be greater than or equal to the minimum objective value achieved by the optimal solution $\pstar{b}{\xi}$. Therefore, $\pcal$-almost surely,
\begin{align*}
    \sum_{i=1}^{\numborac} \pstari[i]{b}{\xi} \lambda\open{C_i^\alpha(X)} \leq \sum_{i=1}^{\numborac} p^{\acal}_i(\xi) \lambda\open{C_i^\alpha(X)}.
\end{align*}
This completes the proof.
\end{proof}

% ---------------------------------------------------------------

\begin{proof}[Proof of Proposition~\ref{lemma:stability:aminse}]
The result can be recovered as follows
\begin{align*}
    \prob\ens{Y \notin C_{\hatSxiepsilon}^\alpha(X)} &= \expec\closed{\sum_{i=1}^K \pstarstar{b}{\xi}\one\ens{Y\notin C_{i}^{\alpha'}(X)}}\\
    &\leq \sum_{i=1}^K e^\eta b_i \prob\ens{Y\notin C_{i}^{\alpha'}(X)} +\tau
    \leq e^\eta \alpha' + \tau \leq \alpha,
\end{align*}
where the first and last inequalities follow from the two constraint $d_i\leq {e^\eta}b_i + s_i$ and $e^\eta \alpha' +\tau   \leq  \alpha$ in \ref{eq:AdaMinSE}, respectively. 
\end{proof}

% ---------------------------------------------------------------

\begin{proof}[Proof of Proposition~\ref{prop:derandomization}]
The proof builds upon the reasoning in \cite{gasparin2024merging}. Since $\hatSxiepsilon$ is $(\eta,\tau)$-stability, we know that there exists a fixed point $b\in \Delta^{\numborac-1}$ such that $p_i(\xi) \leq \exp\open{\eta}b_i + s_i$ and $\sum_{i=1}^{\numborac} s_i\leq \tau$. It follows that 
\begin{align*}
    \expec\closed{\sum_{i=1}^{\numborac} p_i(\xi)\one \ens{Y\notin C_i^\alpha(X)}} &\leq  \exp\open{\eta}\alpha +\tau.
\end{align*}
Moreover, using the Markov inequality, we have that
\begin{align*}
  \prob\ens{Y\notin  C_{\text{dr}}(X,\xi)} &= \prob\ens{\sum_{i=1}^{\numborac} p_i(\xi)\one\ens{Y\notin C_i^\alpha(X)}\geq  \frac{1}{2}}
  \leq 2\expec\closed{\sum_{i=1}^{\numborac} p_i(\xi)\one \ens{Y\notin C_i^\alpha (X)}},
\end{align*}
from which the result follows immediately.
\end{proof}

% ---------------------------------------------------------------

\begin{proof}[Proof of Proposition~\ref{prop:stable:CP:selection:conditional}]
We want to bound the conditional miscoverage probability
$$ P_{\mathrm{miscover}|G(X)} \coloneqq \prob\ens{Y \notin C_{\hatS(\xi(X),\eps)}^{\alpha}(X) \mid G(X)}. $$
By the law of total expectation, conditioning on $X$ we have
$$ P_{\mathrm{miscover}|G(X)} = \mathbb{E}_{X|G(X)} \left[ \prob\ens{Y \notin C_{\hatS(\xi(X),\eps)}^{\alpha}(X) \mid X, G(X)} \right]. $$
Since $G(X)$ is determined by $X$, the inner probability is $\prob\ens{Y \notin C_{\hatS(\xi(X),\eps)}^{\alpha}(X) \mid X}$.
Since the randomness $\eps$ in $\hatS(\xi(X),\eps)$ is independent of $Y$ given $X$, we have
$$ \prob\ens{Y \notin C_{\hatS(\xi(X),\eps)}^{\alpha}(X) \mid X}  = \sum_{s=1}^{\numborac} \prob\ens{Y \notin C_s^{\alpha}(X) \mid X} \prob_\eps\ens{\hatS(\xi(X),\eps)=s \mid X}. $$
Using $(\eta,\tau)$-stability (conditional on $\xi(X)$) property of $\shat$, we have 
\begin{align*}
    \sum_{s=1}^{\numborac} \prob\ens{Y \notin C_s^{\alpha}(X) \mid X} \prob_\eps\ens{\hatS(\xi(X),\eps)=s \mid X} &\leq e^\eta \sum_{s=1}^{\numborac} \prob\ens{Y \notin C_s^{\alpha}(X) \mid X} \prob\ens{S_0=s \mid X}+\tau\\
    & \leq e^\eta \sum_{s=1}^{\numborac} \prob\ens{Y \notin C_s^{\alpha}(X) \mid X} \prob\ens{S_0=s}+\tau .
\end{align*} 
for some random $S_0$ variable with support $[\numborac]$. It follows that 
\begin{align*}
    P_{\mathrm{miscover}|G(X)} &= e^\eta\sum_{i=1}^\numborac\prob\ens{S_0=i}\mathbb{E}_{X|G(X)}\closed{\prob\ens{Y \notin C_{i}^{\alpha}(X)}|X}+\tau\\
    &\leq e^\eta \alpha +\tau. 
\end{align*}
\end{proof}

% ---------------------------------------------------------------

\subsection{Proofs of Section~\ref{sec:online:CP}}

% ---------------------------------------------------------------

\begin{proof}[Proof of Corollary~\ref{cor:stable:online:CP}]
Since $\hatS(\xi_t, \varepsilon_t)$ is $(\eta,\tau)$-stability. Using the same starting derivation as the proof of \Cref{prop:minse_optimality}, we know that there exists a fixed point $b\in \Delta^{\numborac-1}$ such that $p_i(\xi_t) \leq \exp\open{\eta}b_i + s_i$ and $\sum_{i=1}^{\numborac} s_i\leq \tau$. Thus,
\begin{align*}
    \prob\ens{Y_t\notin C_{\hatS(\xi_t, \varepsilon_t)}^{(t)}(X_t)} = \expec\closed{\one\ens{Y_t\notin C_{\hatS(\xi_t, \varepsilon_t)}^{(t)}(X_t)}} &= \sum_{i=1}^K p_i\open{\xi_t}\one\ens{Y_t\notin C_{i}^{(t)}(X_t)}\\
    &\leq \sum_{i=1}^K e^\eta b_i \one\ens{Y_t\notin C_{i}^{(t)}(X_t)} +\tau,
\end{align*}
from which we have that
\begin{align*}
    \limsup_{T \to \infty}\, \frac{1}{T} \sum_{t=1}^T \prob\ens{Y_t\notin C_{\hatS(\xi_t, \varepsilon_t)}^{(t)}(X_t)} &\leq \limsup_{T \to \infty}\, \frac{1}{T} \sum_{t=1}^T \left(\sum_{i=1}^K e^\eta b_i \one\ens{Y_t\notin C_{i}^{(t)}(X_t)} +\tau\right)\\
    &\leq e^\eta \sum_{i=1}^K b_i \left(\limsup_{T \to \infty}\, \frac{1}{T} \sum_{t=1}^T  \one\ens{Y_t\notin C_{i}^{(t)}(X_t)}\right) +\tau \\
    & \leq e^\eta \alpha + \tau,
\end{align*}
where the second inequality follows from Fubini's theorem, which allows us to interchange the two sums, and the subadditivity of $\limsup$. This concludes the proof.
\end{proof}

\begin{proof}[Proof of Proposition~\ref{prop:adaptive:coma}]
Using the definition of $C^{(t)}_{\mathrm{comb}}$ in Algorithm~\ref{alg:adaptive_COMA}, together with Markov's inequality, we have that
\begin{align*}
    \prob\ens{Y_t \notin  C^{(t)}_{\mathrm{comb}}} =& \prob\ens{ \sum_{i=1}^{\numborac} \pstari{w^{(t)}}{\xi_t}   \one\ens{Y_t\notin C_i^{(t)}(X_t)} \geq \frac{1}{2}} 
    \\ \leq& 2\expec\closed{\sum_{i=1}^{\numborac} \pstari{w^{(t)}}{\xi_t} \one\ens{Y_t\notin C_i^{(t)}(X_t)}}.
\end{align*}
Moreover, from \ref{eq:MinSE}, we know that $\pstari{w^{(t)}}{\xi_t} \leq e^\eta w_i^{(t)} + s_i$ and $\sum_{i=1}^{\numborac} s_i\leq \tau$. Therefore,
\begin{align*}
    \prob\ens{Y_t \notin  C^{(t)}_{\mathrm{comb}}} \leq 2 \left( e^\eta \expec\closed{\sum_{i=1}^{\numborac} w^{(t)} \one\ens{Y_t\notin C_i^{(t)}(X_t)}} + \tau \right) = 2 \left( e^\eta \beta_t + \tau \right),
\end{align*}
where the equality follows from \eqref{eq:COMA:beta_t}. This concludes the proof.

\end{proof}

\subsection{Proofs of Section~\ref{sec:rank_calibration}}
\def\cal{\mathrm{cal}}
% 
% We operate within the standard split conformal setup introduced in \Cref{sec:preliminaries:CP}, with a calibration dataset $\mathcal{D}_{\text{cal}} = \ens{(X_i, Y_i)}_{i=1}^m$ and a test point $(X, Y)$. We assume the sequence of $m+1$ data points $\ens{(X_1, Y_1), \dots, (X_{m}, Y_{m}), (X,Y)}$ to be exchangeable. We consider $\numborac$ base predictors $f_1, \dots, f_\numborac$ and corresponding non-conformity score functions $s_1, \dots, s_\numborac$. For each base predictor $k \in [\numborac]$ and datapoint $i \in [ m]$,  denote the non-conformity scores as $\order_{k,i} \coloneqq s_k(X_i, Y_i, f_k)$. We use $\order_{k,(r)}$ to denote the $r$-th order statistic of  $\order_{k,1},\ldots, \order_{k,m}$. Finally, for any $k$ and $i$, we denote the rank of a score $\order_{k,i}$ as  $\rank_{k,i} \coloneqq \sum_{j=1}^{m} \mathds{1}\{\order_{k,j} \le \order_{k,i}\}$. Using these definitions, we can parameterize the conformal prediction sets using ranks, i.e., for any rank index $R \in [m]$, we define \[
% C_k(X, R) \coloneqq \ens{ y \in \mathcal{Y} : s_k(X, y, f_k) \le \order_{k,(R)}},
% \]
% which recov

\paragraph{Notation reminder.} Before providing the proof of Theorem~\ref{thm:recalibrated_result}, we introduce some additional notation and recall some notation in the main text. We consider a calibration dataset $\dcal_\cal = \ens{(X_i, Y_i)_{i\in[m]}}$ and a test point $(X,Y)$. We interchangeably denote $(X,Y)$ as $\open{X_{m+1},Y_{m+1}}$ and define $\dcal_\cal^+ = \ens{\open{X_i, Y_i}_{i\in[m+1]}}$. In addition, for each base predictor $k \in [\numborac]$, we compute the non-conformity scores on the calibration data, for $i \in [m],$
\begin{equation*}
    \order_{k,i} \coloneqq s_k(X_i, Y_i, f_k) \quad
\end{equation*}
and denote the score of the test point as $\order_{k,m+1} = s_k(X_{m+1}, Y_{m+1}, f_k)$. Let $\scoreset_k \coloneqq \{\order_{k,1}, \dots, \order_{k,m}\}$ denote the set of these calibration scores for model $k$. We also define $\scoreset^+_k \coloneqq \scoreset_k \cup \{\order_{k,m+1}\}$. We use $\order_{k,(r)}$ and $\order^{+}_{k,(r)}$ to denote the $r$-th order statistic in $\scoreset_k$ and $\scoreset^+_k$. Furthermore, we denote the ranks of a score $\order_{k,i}$ within the two sets as follows $\rank_{k,i} \coloneqq \sum_{j=1}^{m} \mathds{1}\{\order_{k,j} \le \order_{k,i}\}$ and $\rank^{+}_{k,i} \coloneqq \sum_{j=1}^{m+1} \mathds{1}\{\order_{k,j} \le \order_{k,i}\}$. Similarly to \Cref{sec:rank_calibration}, we parameterize the conformal prediction sets using ranks. For a rank index $R \in [m]$, define, respectively:
\begin{gather*} \label{eq:param_calib_cont_unnumbered_cont}
 C_k(X, R) \coloneqq \ens{ y \in \mathcal{Y} : s_k(X, y, f_k) \le \order_{k,(R)} }, \\
 C^+_k(X, R) \coloneqq \ens{ y \in \mathcal{Y} : s_k(X, y, f_k) \le \order^{+}_{k,(R)} }.
\end{gather*}

Both set families are monotonic: $C_k(X, R_1) \subseteq C_k(X, R_2)$ if $R_1 \le R_2$; similarly for $C^+_k$. The relationship $\order_{k,(R)} \ge \order^{+}_{k,(R)}$ holds for $R \in [m]$, implying the set inclusion $C_k^+(X, R) \subseteq C_k(X, R)$.
To relate with the standard perspective on split conformal threshold, note that for $i'\in[m]$
\begin{equation*}
\prob\ens{\rank^{+}_{k,m+1}\leq i'} =\prob\ens{\order^{+}_{k,m+1}\leq\order^{+}_{k,(i')}}\leq \prob\ens{\order^{+}_{k,m+1}\leq \order_{k,(i')}}  = \prob\ens{Y_{m+1}\in C_k(X_{m+1},i')},
\end{equation*}
where $\prob\ens{\rank^{+}_{k,m+1}\leq i'}\geq i'/(m+1)$ by exchangeability. Note that this is equivalent up to a reparameterization to the split conformal method introduced in \Cref{sec:preliminaries:CP}, by setting $i' = \ceil{(1-\alpha)(m+1)}$.

For each point $i \in [m+1]$, we apply the selection rule using its feature vector $X_i$ and independent randomness $\eps_i \sim \pcal_\eps$. Let $k_i \coloneqq \hatS(X_i, \eps_i)$ be the index of the predictor selected for the $i$-th data point. By our assumption, $k_i$ is independent of $\dcal_{\mathrm{cal}}$, conditionally to $X_i$. We now define the \textbf{effective rank} for the $i$-th point. This is the rank of the $i$-th point's score calculated using the predictor $k_i$ that was selected specifically for $X_i$: $$\hat \rank^+_i \coloneqq \rank^+_{k_i, i} = \rank^+_{\hatS(X_i, \eps_i), i}.$$ 

Similarly, for $i\in[m]$, we define $$\hat \rank_i \coloneqq \rank_{k_i, i} = \rank_{\hatS(X_i, \eps_i), i}.$$ Finally, we define the following two sequences $\mathcal \rank^+ \coloneqq (\hat \rank^+_1, \dots, \hat \rank^+_{m+1})$ and $\mathcal\rank\coloneqq (\hat \rank_1, \dots, \hat \rank_{m})$, with $\hat R^+_{(i)}$ and $\hat R_{(i')}$ denoting their $i$-th and $i'$-th order statistics respectively, for $i\in[m+1]$ and $i'\in [m]$.

\begin{proof}[Proof of Theorem~\ref{thm:recalibrated_result}]
Notice that $\mathcal R^+$ forms an exchangeable sequence \citep[Lemma 2.2]{angelopoulos2024theoretical}. This follows from the exchangeability of the original data pairs $\ens{(X_i, Y_i)}_{i=1}^{m+1}$ and the fact that the procedure to obtain $\hat \rank^+_i$ is symmetric w.r.t.~$\dcal_\cal^+$. In addition, $\mathcal R \subset \mathcal R^+$ implies that $\hat\rank^+_{(m)}\leq \hat\rank_{(m)}$. As a direct consequence, for $i\in [m]$, we get
\begin{align*} \label{eq:effective_rank_prob}
   \frac{i}{m+1} \le \prob \ens{ \hat \rank^+_{m+1}\leq \hat \rank^+_{(i)}} &\le \prob\ens{\order^+_{k_{m+1},\left(R^+_{k_{m+1},m+1}\right)}\leq \order_{k_{m+1}, \open{\hat \rank^+_{(i)}}}}\\
   &= \prob\ens{\order^+_{k_{m+1},m+1}\leq \order_{k_{m+1}, \open{\hat \rank^+_{(i)}}}}\\
   &\leq \prob\ens{\order^+_{k_{m+1},m+1}\leq \order_{k_{m+1}, \open{\hat \rank_{(i)}}}}\\
   &= \prob\ens{Y_{m+1}\in C_{k_{m+1}} \left(X_{m+1},\hat R_{(i)} \right)} .
\end{align*}
Here, the first line follows from the monotonicity of the order statistics of the series $\open{\order^+_{k_{m+1},1},\ldots, \order^{+}_{k_{m+1},m+1}}$ and the definition of $\hat \rank^+_{m+1}$. The second line is by the fact that for any $k\in [\numborac], i'\in[m+1]$, we have $s^+_{k, (\rank^+_{k,i'})}= s^+_{k,i'}$. The third by $\hat \rank^+_{(i)}\leq \hat \rank_{(i)}$ for any $i\in [m]$  and the fourth by the monotonicity of the split conformal set w.r.t.~increasing score. Finally, choosing $i = \ceil{(1-\alpha)(m+1)}$, we get
\begin{equation*}
    \prob\ens{Y_{m+1}\in C_{\hatS(X_{m+1})}(X_{m+1},R_{(i)})} = \prob\ens{Y_{m+1}\in C_{k_{m+1}}(X_{m+1},R_{(i)})}\geq 1-\alpha,
\end{equation*}
and recover the theorem.
% In words, using the rank parametrization of the split conformal prediction andrank $R_{(i)}$, we are able to achive $1-\alpha$ coverage after arbitrary selection as long as it is independant from the calibration set $\dcal_{\mathrm{cal}}$. Now that we presented the proof, we can elaborate more on why we need $\hatS$ to not depend on $\dcal_{\cal}$, this is to maintain exchangeability of the sequence $R^+$. This excheageability requires $\hatS$ to treat $(X_1,Y_{1}),\ldots, (X_{m},Y_m),(X_{m+1},Y_{m+1})$, which would be violates if it depends on $\dcal_{\mathrm{cal}}$, which does not contain the test point.
\end{proof}

% ---------------------------------------------------------------

\section{Additional Experiments}\label{app:additional_experiments}
\subsection{Batch Experiments}
In \Cref{sec:experiments}, we provided some results in the batch setting. Here, using the same selection algorithms as in \Cref{sec:experiments}, we provide additional results:
\begin{enumerate}[leftmargin=*]
    \item First, in \Cref{sec:experiments}, we preprocessed the dataset, both synthetic and real, by splitting them to $5$ disjoint equal subsets using constrained K-means \cite[Code under BSD 3-Clause License]{bradley2000constrained,Levy-Kramer_k-means-constrained_2018}. Here, we provide additional results, without this splitting step. We call the setting with no splitting, homogeneous, and the setting with splitting heterogeneous.   The homogeneous setting may represent a more challenging environment for our approach and can be more favorable to the methods of \citet{liang2024conformal} and \citet{yang2024selection}. In particular, in this homogeneous setting, one conformal predictor can typically be superior to all other predictors, as such simply selecting the best-on-average predictor can yield very competitive results, specially that the stability-based coverage guarantees can be conservative.  
    \item Second, we repeat the experiments on synthetic data, while varying the number datapoints in the calibration dataset. This mainly affects the results of YK-Adjust as its performance improves with larger calibration datasets. 
\end{enumerate}

For the real dataset experiments (Abalone, Bike Sharing \citep[CC BY 4.0]{UCI_ML_Repo}, California Housing \citep[BSD License]{pace1997sparse}), the hyperparameters of several base regression models were optimized prior to their use in the main conformal prediction experiments. This tuning was performed using ``BayesSearchCV'' from the \texttt{scikit-optimize} library \citep[BSD-2 license]{tim_head_2018_1207017}, as mentioned in \Cref{sec:experiments}. The models subjected to this tuning process included RandomForestRegressor, GradientBoostingRegressor, ElasticNet, DecisionTreeRegressor, AdaBoostRegressor, and ExtraTreesRegressor. All experiments took approximately $200$ CPU hours using $16$ cores of Intel Xeon CPU Gold 6230 and $32$ GB of system memory. For the implementation of ModSel and the structure of our code, we based our implementation on the version open-sourced by \citet{liang2024conformal}. Nonetheless, we deviated from their implementation to allow for more efficient parallel processing.

Both $X$ and $Y$ were standardized. The \texttt{BayesSearchCV} process was configured to run for 25 iterations (\texttt{n\_iter=25}) with 3-fold cross-validation (\texttt{cv\_folds=3}) for each model and hyperparameter setting. The nonconformity score function also utilized a RandomForestRegressor to predict residuals, and its hyperparameters were the same as the tuned parameters for RandomForestRegressor for prediction.

Additional results for the synthetic are reported in \Cref{fig:synthetic_appendix_homog_all_n} and \Cref{fig:synthetic_appendix_p5_all_n}. For some settings, YK-Adjust produces infinitely large sets, as such it is not plotted. We observe that for both homogeneous and heterogeneous, Recal is competitive and often achieves the smallest (or near smallest) average interval length. Nonetheless, for the homogeneous setting, \ref{eq:MinSE} and \ref{eq:AdaMinSE} produce larger sets than baselines. This highlights the importance of recalibration approach (\Cref{sec:rank_calibration}) in settings where the stability-based bounds are overly conservative.

Additional results for the real data setting are reported in \Cref{fig:uci_results_combined}. We observe that across all heterogeneous experiments,  Recal produces the smallest average interval length. In the homogeneous setting, Recal is competitive with baselines in the real data experiments and is slightly worse in the synthetic setting. However, stability-based approaches (\ref{eq:MinSE} and \ref{eq:AdaMinSE}) are suboptimal w.r.t. interval length in the homogeneous setting.

\begin{figure*}[ht]
    \centering % Center the content within the figure environment
    % N=300 Homogeneous
    \begin{subfigure}
        \centering % Center the image within the subfigure
        \includegraphics[width=\textwidth]{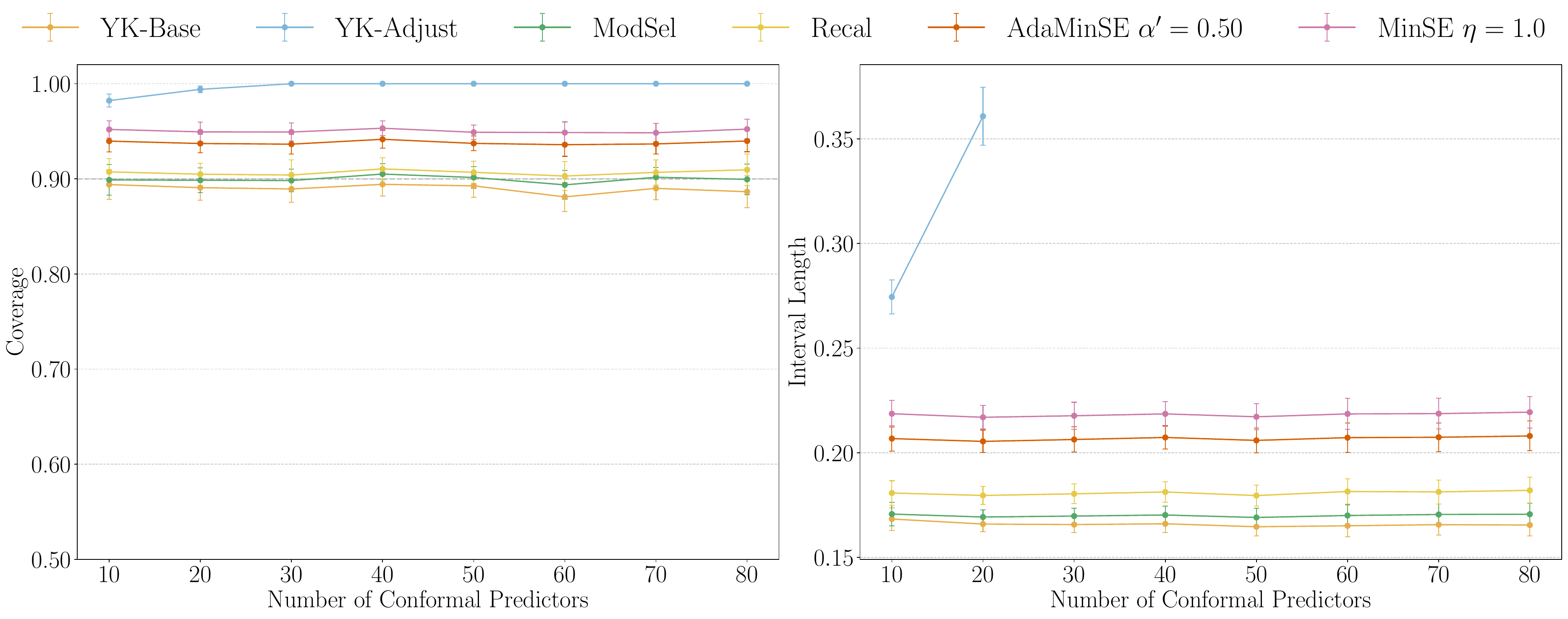}
        \caption*{(a) $N=300$} % Caption for this specific plot
        \label{fig:homog_n300_sep}
    \end{subfigure}
    \vspace{1em}

    % N=400 Homogeneous
    \begin{subfigure}
        \centering
        \includegraphics[width=\textwidth]{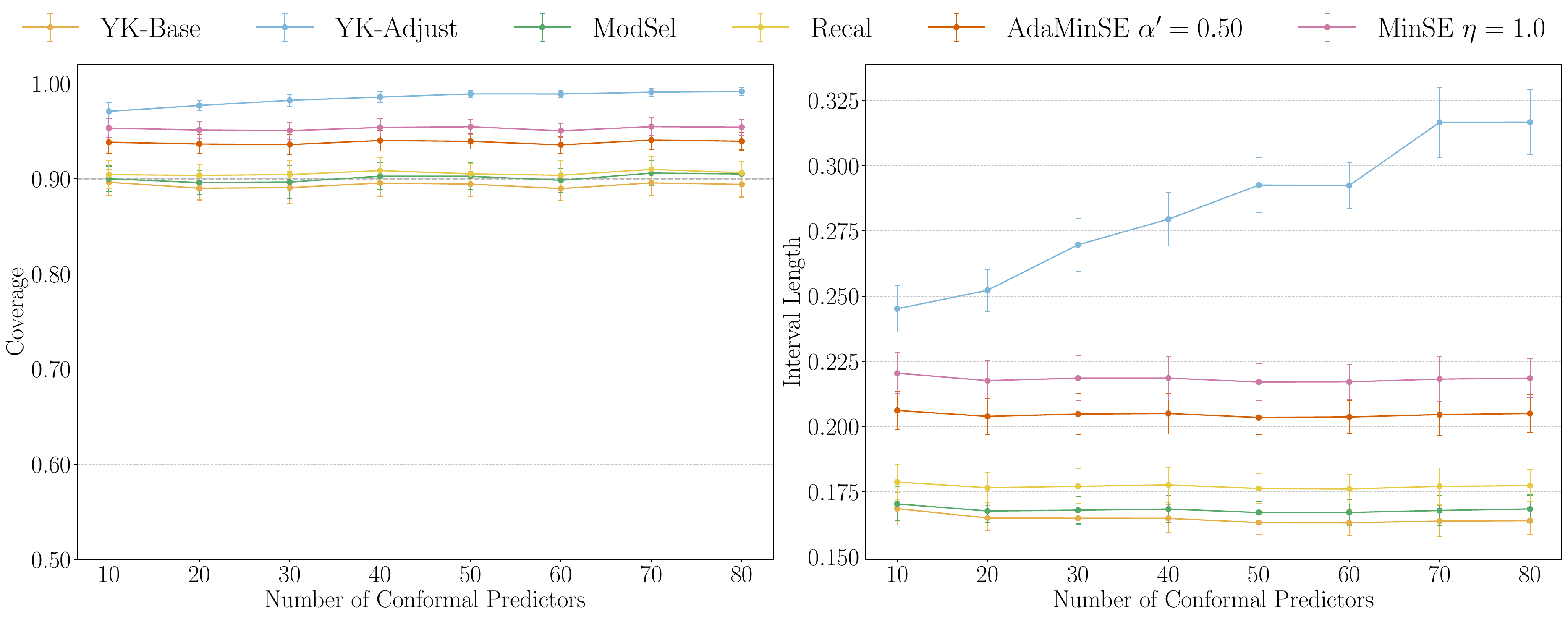}
        \caption*{(b) $N=400$}
        \label{fig:homog_n400_sep}
    \end{subfigure}
    \vspace{1em}

    % N=500 Homogeneous
    \begin{subfigure}
        \centering
        \includegraphics[width=\textwidth]{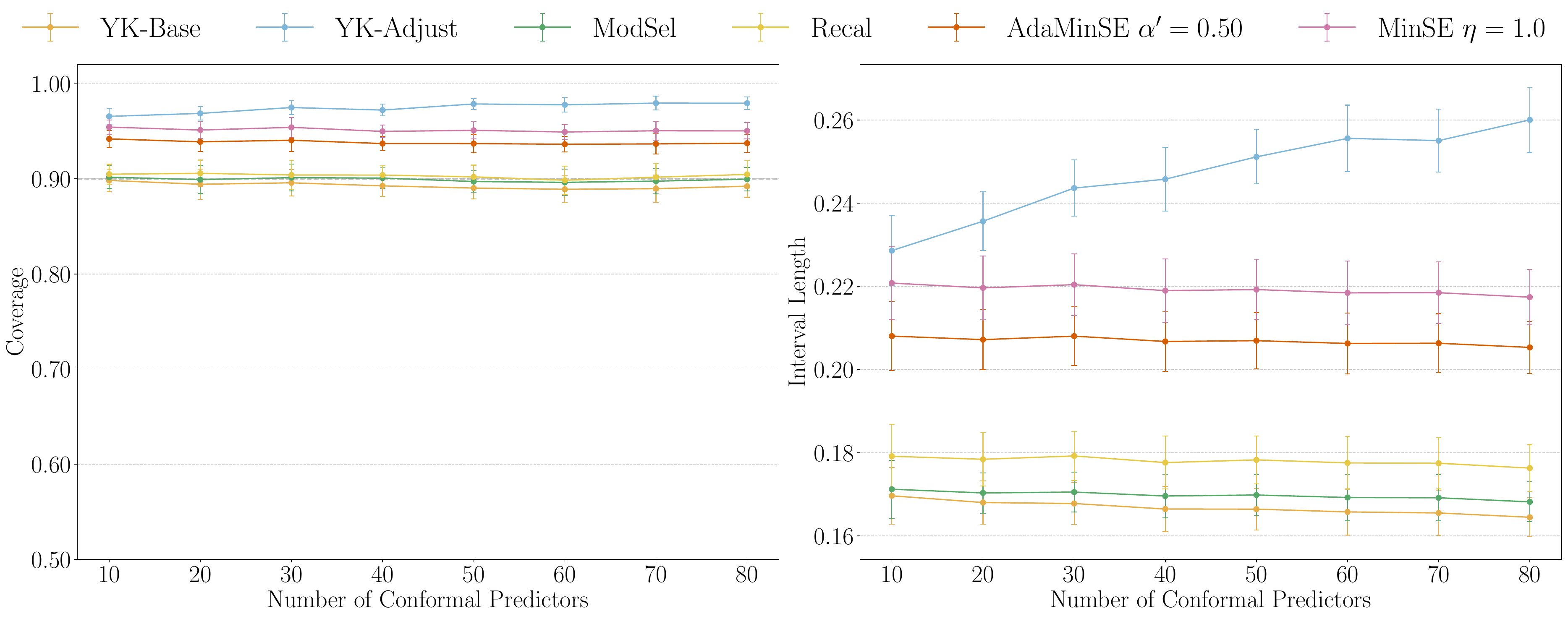}
        \caption*{(c) $N=500$}
        \label{fig:homog_n500_sep}
    \end{subfigure}

    \caption{Homogeneous synthetic results. Each plot shows coverage (left) and interval length (right) for a different number of calibration examples ($N$).}
    \label{fig:synthetic_appendix_homog_all_n}
\end{figure*}

\begin{figure*}[ht]
    \centering

    % N=300 Heterogeneous (P5)
    \begin{subfigure}
        \centering
        \includegraphics[width=\textwidth]{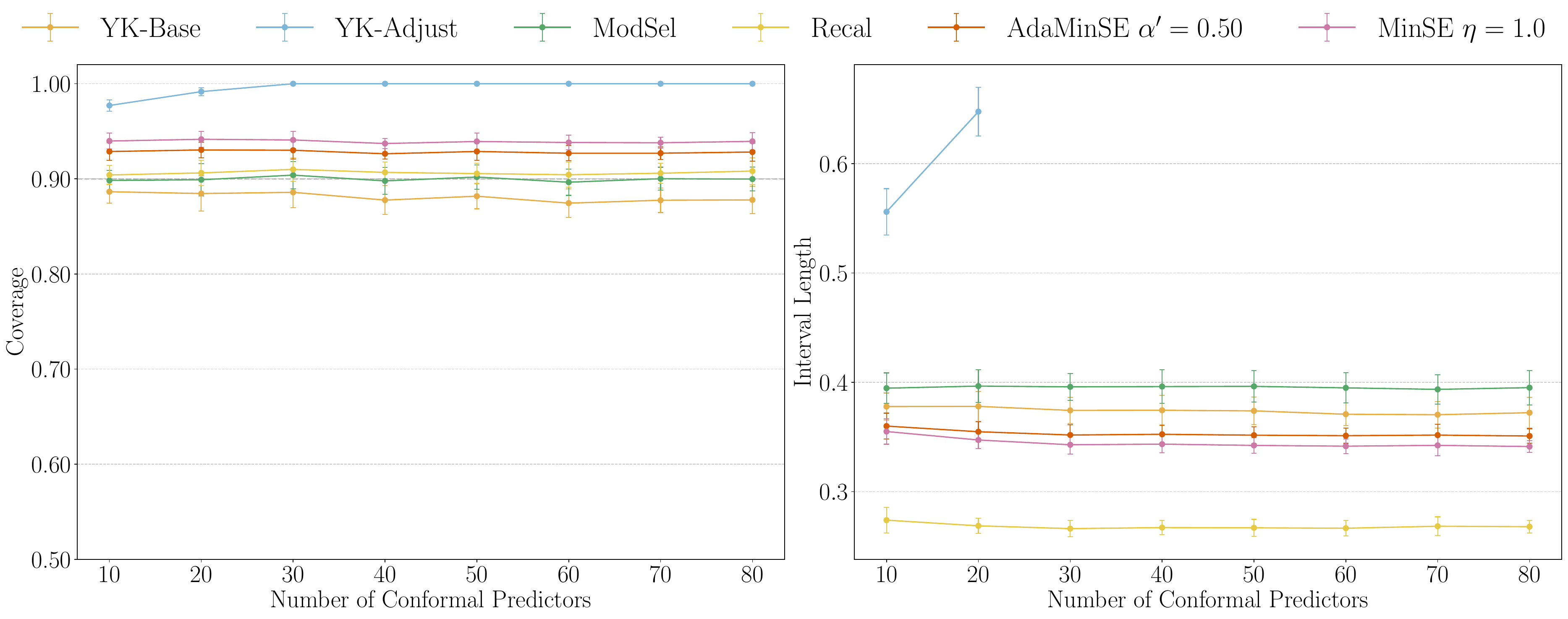}
        \caption*{(a) $N=300$}
        \label{fig:p5_n300_sep}
    \end{subfigure}
    \vspace{1em}

    % N=400 Heterogeneous (P5)
    \begin{subfigure}
        \centering
        \includegraphics[width=\textwidth]{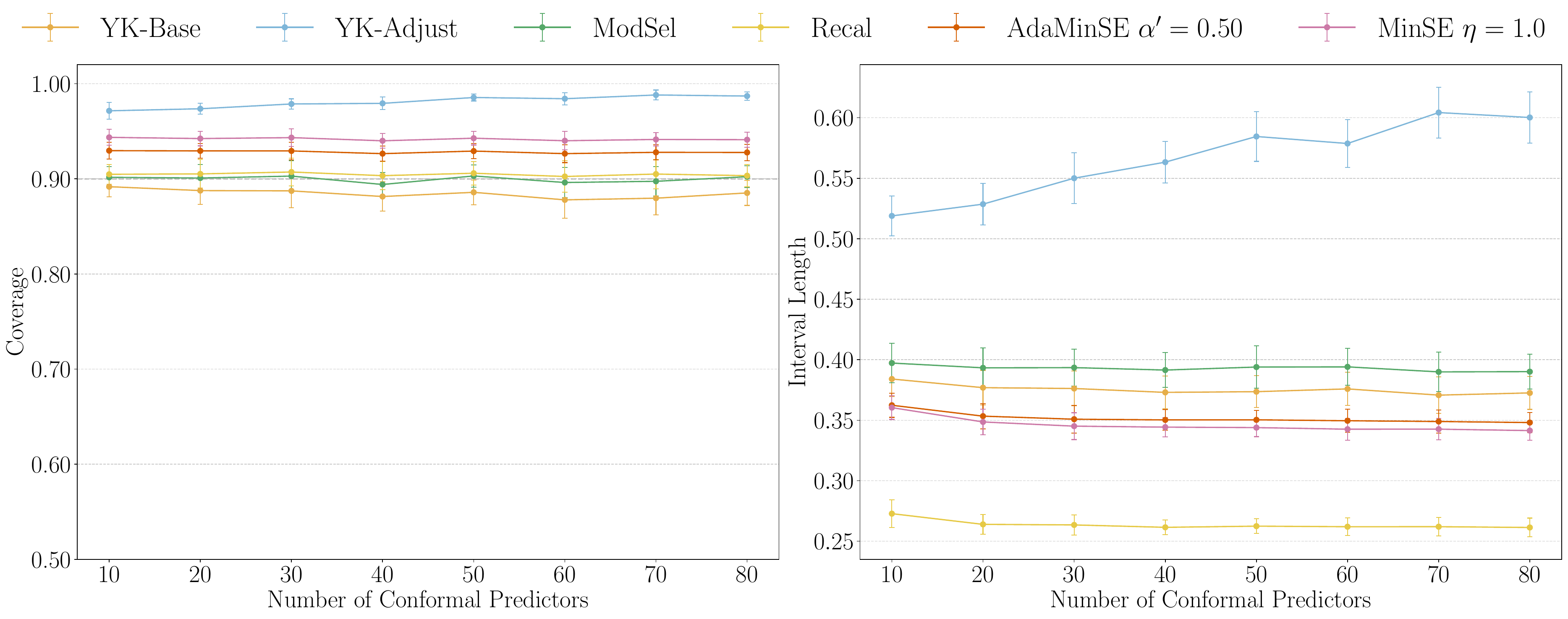}
        \caption*{(b) $N=400$}
        \label{fig:p5_n400_sep}
    \end{subfigure}
    \vspace{1em}

    % N=500 Heterogeneous (P5)
    \begin{subfigure}
        \centering
        \includegraphics[width=\textwidth]{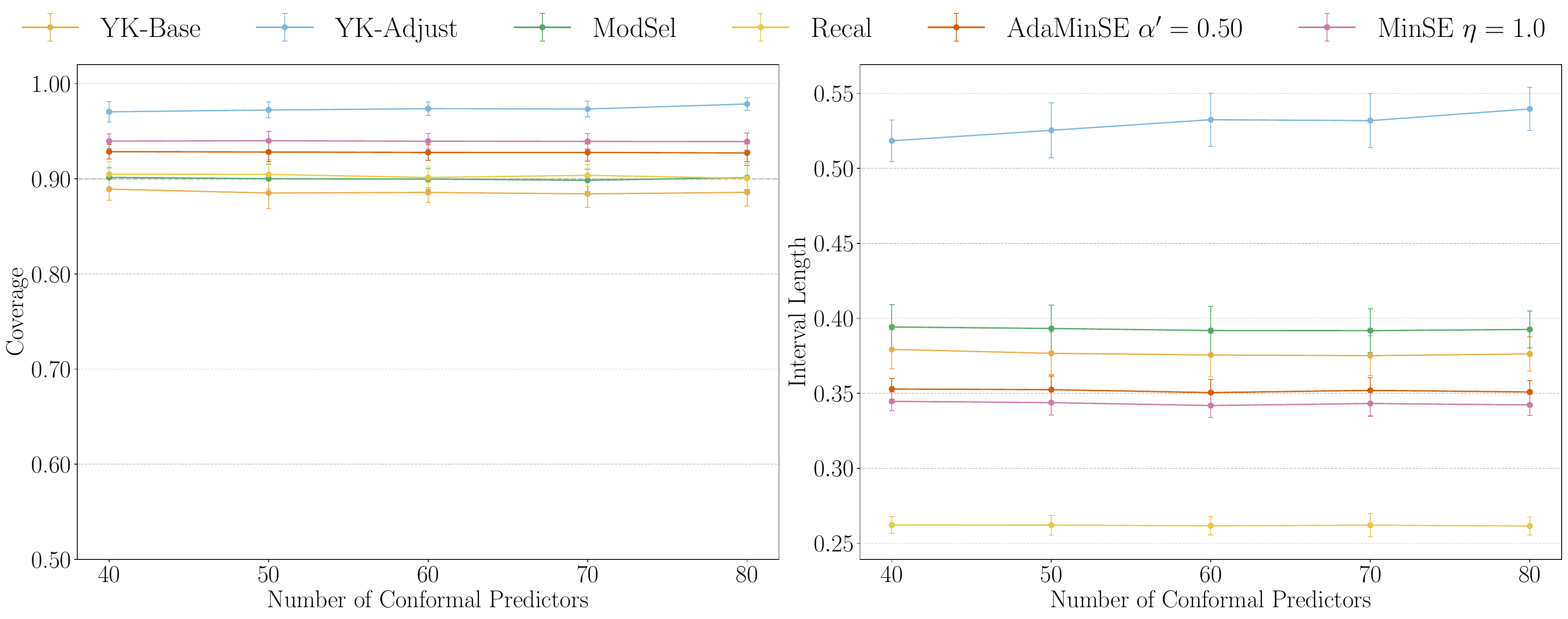}
        \caption*{(c) $N=500$}
        \label{fig:p5_n500_sep}
    \end{subfigure}

    \caption{Heterogeneous synthetic results. Each plot shows coverage (left) and interval length (right) for a different number of calibration examples ($N$).}
    \label{fig:synthetic_appendix_p5_all_n}
\end{figure*}

\clearpage
\begin{figure*}[ht]
    \centering % Center the content within the figure environment

    % Homogeneous UCI Results
    \begin{subfigure}
        \centering % Center the image within the subfigure
        \includegraphics[width=0.95\textwidth]{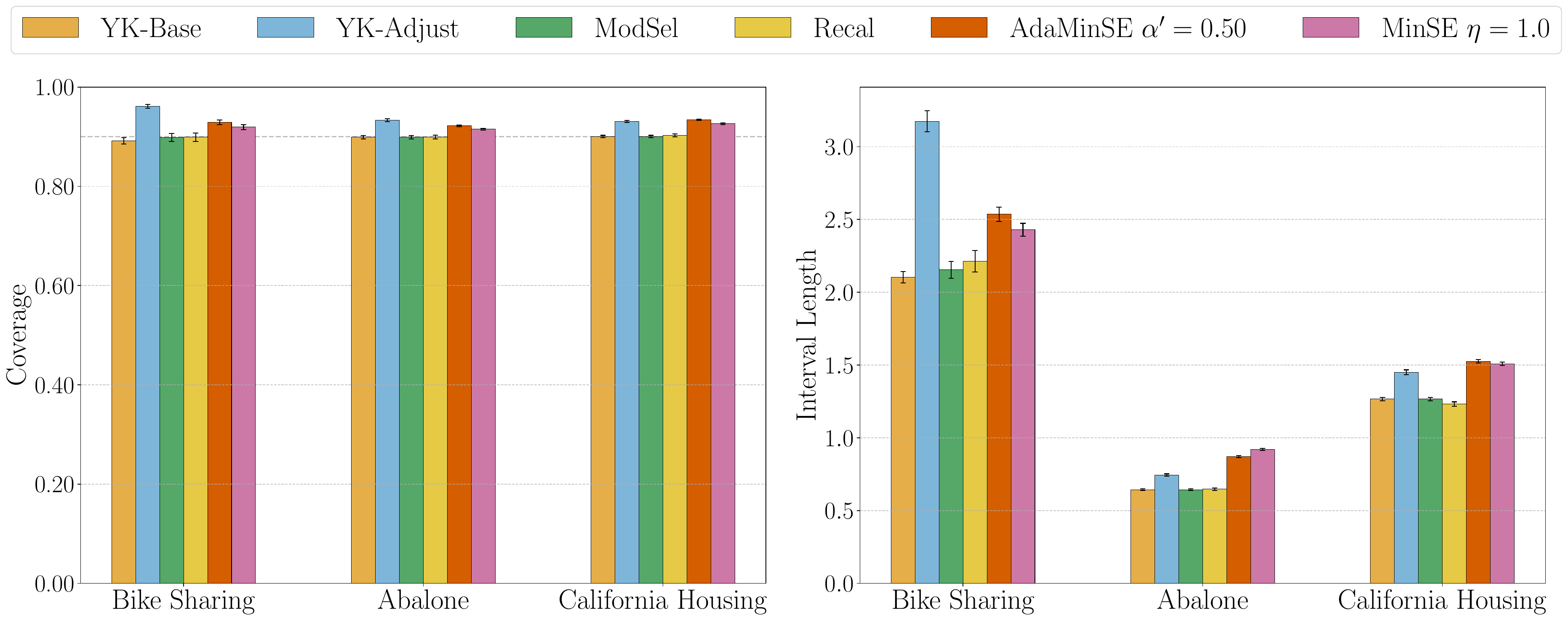}
        \caption{Real data results under the homogeneous setting}
        \label{fig:uci_homog}
    \end{subfigure}
    
    \vspace{2em} % Vertical space between the two main figures

    % Heterogeneous (P5) UCI Results
    \begin{subfigure}
        \centering
        \includegraphics[width=0.95\textwidth]{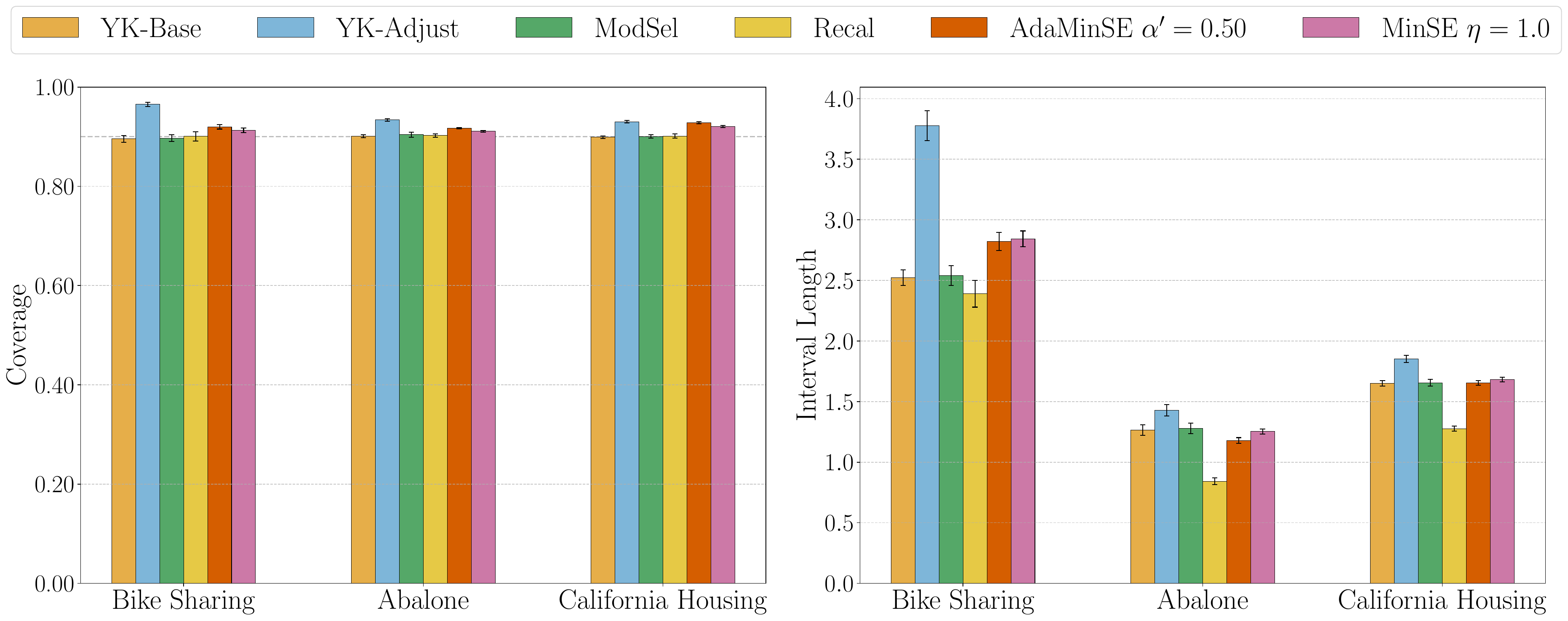}
        \caption{Real data results under the heterogeneous setting.}
        \label{fig:uci_p5}
    \end{subfigure}

    % Optional: A main caption for the whole set, if you prefer
    \caption{Comparison of UCI dataset results under homogeneous and heterogeneous data processing.}
    \label{fig:uci_results_combined}
\end{figure*}

% ---------------------------------------------------------------

% \clearpage
\subsection{Online Experiments}

\noindent      \textbf{Online Setting Experiments}: we tested our online algorithm, AdaCOMA, by constructing an online analogue of our heterogeneous batch setting, where the performance of different predictors vary across time. The key advantage of AdaCOMA lies in its ability to condition selection on the current features $X_t$, via the observed interval sizes $\xi_t$ used in the stable selection mechanism. In contrast, COMA relies solely on historical performance in determining its weights $w^{(t)}$ (\Cref{alg:adaptive_COMA}).

To evaluate this, we designed an environment in which both COMA and AdaCOMA track and assign weights to $K = 10$ distinct forecasters. These forecasters are not fixed models; instead, they dynamically generate prediction intervals by drawing from a smaller pool of $M = 6$ diverse base online conformal algorithms. Each base algorithm is an instance of Adaptive Conformal Inference (ACI) \citep{gibbs2021adaptive} applied to an online learning model. For each of the $K$ forecasters, we simulate a heterogeneous environment where the optimal conformal predictor varies over time by partitioning the input data stream conceptually as follows: at the start of the experiment, each of the $K$ forecaster pick one model from the smaller subset of $M$ models. Then each $\tau=50$ timesteps, the forecasters pick a different model, and so on. This setup, where forecasters change their models each $\tau$ timesteps, aims to simulate an environment, which requires the selection algorithm choosing among the forecasters to be strongly adaptive. 

\paragraph{Base Online Conformal Algorithms.}
The $M=6$ base algorithms were instances of Adaptive Conformal Inference (ACI) \citep{gibbs2021adaptive}. Default ACI parameters were: initialization period of $100$ timesteps, we used an adaptive ACI stepsize adapted from the original code of \citep{gasparin2024conformal}. The underlying online learning models were:
\begin{itemize}[noitemsep, topsep=0pt]
    \item Two SGD regressors: one with L1 penalty (Lasso, $\eta_0=0.001, \text{penalty }\alpha=0.1$) and one with L2 penalty (Ridge, $\eta_0=0.001, \text{penalty }\alpha=0.1$).
    \item Two SGD regressors (no penalty) with learning rates $\eta_0 \in \{0.001, 0.005\}$.
    \item Two Rolling Linear Regression models with window sizes of 50 (retrain frequency 12) and 100 (retrain frequency 25).
\end{itemize}

We conducted experiments on two datasets, used for evaluation in  \citet{gasparin2024conformal}, \texttt{ELEC} \citep[CC-BY 4.0]{harries1999splice} and a synthetic data generate according to the ARMA(1,1) model \citep[Chapter 2]{tsay2005analysis}. For the precise data generation procedure for ARMA(1,1) model, refer to \citep[Page 18.]{gasparin2024conformal}. For both, AdaCOMA and COMA, we used two algorithms for the weights $w^{(t)}$ (\Cref{alg:adaptive_COMA}) over the forecasters: AdaHedge and Hedge with learning rate $\eta=0.1$. We repeated the experiment for $50$ seeds. For ARMA(1,1) dataset, $4$ runs experienced numerical instability producing interval lengths multiple orders of magnitude above the rest for both COMA and AdaCOMA. We excluded those runs in calculating the reported results. The results are presented in \Cref{tab:online_results_expanded}. For COMA, we ran the underlying predictors using ACI with nominal miscoverage rate $0.1$. For AdaCOMA, we ran ACI with nominal miscoverage rate of $0.09$ and used \ref{eq:AdaMinSE} for the selection. For \ref{eq:AdaMinSE} selection, we tuned the selection such that COMA and AdaCOMA achieve similar coverage. The results are reported in \Cref{tab:online_results_expanded}. For ELEC dataset, AdaCOMA significantly outperformed COMA. For ARMA(1,1), both methods performed similarly with a small advantage to AdaCOMA. 
\begin{table}[h!]
\centering
\caption{Comparison of COMA and AdaCOMA (using Option 2 in the algorithm) with different underlying aggregation algorithms (AdaHedge, Hedge) on ELEC and ARMA(1,1) datasets. Values are mean $\pm$ standard deviation (divide by the root of the number of seeds ($1/\sqrt{50}$) for the standard error). The target miscoverage is $\alpha=0.1$.}
\label{tab:online_results_expanded}
\begin{tabular}{@{}llcc@{}}
\toprule
Dataset & Method & Avg. Miscoverage & Avg. Length \\ \midrule
\multirow{4}{*}{ELEC} 
        & COMA (AdaHedge) & $0.0942 \pm 0.002$ & $0.69\pm 0.06$ \\
        & AdaCOMA (AdaHedge) & $0.0959 \pm 0.001$ & $0.32 \pm 0.09$ \\
        & COMA (Hedge) & $0.0942\pm 0.002$ & $0.69\pm 0.06$ \\
        & AdaCOMA (Hedge) & $0.0963\pm 0.001$ & $0.31 \pm 0.01$ \\ \midrule

\multirow{4}{*}{ARMA(1,1)} 
    & COMA (AdaHedge) & $0.102\pm0.001$& $4.40 \pm 0.91 $ \\
    & AdaCOMA (AdaHedge) & $0.101\pm 0.001$ & $4.23 \pm  0.65$ \\
    & COMA (Hedge) & $0.102 \pm 0.001 $ & $4.40 \pm 0.92$ \\
    & AdaCOMA (Hedge) & $0.101 \pm 0.001$ & $4.31 \pm 1.06$ \\ \bottomrule
\end{tabular}
\end{table}

\end{document}